\let\Ginclude@graphics\@org@Ginclude@graphics 
\newcommand{\1}{\mathbb{1}} 
\newcommand{\Po}{\mathbb{P}}
\DeclareMathOperator*{\argmax}{arg\,max}
\newcommand{\E}{\mathbb{E}} 
\def\Fn{\hat{F}_t}
\newtheorem{assumption}{Assumption}
\newtheorem{algo}{Algorithm}
\newtheorem{repeatthm@}{Theorem}
\newenvironment{repeatthm}[1]{%
    \def\therepeatthm@{\ref{#1}}
    \repeatthm@
}
{\endrepeatthm@}
\newtheorem{repeatlem@}{Lemma}
\newenvironment{repeatlem}[1]{%
    \def\therepeatlem@{\ref{#1}}
    \repeatlem@
}
{\endrepeatlem@}
\title[Fast Rate Learning in Stochastic First Price Bidding]{Fast Rate Learning in Stochastic First Price Bidding}
  \author{\Name{Juliette Achddou} \Email{juliette.achdou@gmail.com}\\
  \addr DIENS, INRIA, Université PSL, 1000mercis Group
  \AND
  \Name{Olivier Cappé} \Email{olivier.cappe@cnrs.fr}\\
  \addr DIENS, CNRS, INRIA, Université PSL, 
  \AND
  \Name{Aurélien Garivier}
  \Email{aurelien.garivier@ens-lyon.fr}\\
  \addr{UMPA,CNRS, INRIA, ENS Lyon}
 }
\begin{document}

\maketitle

\begin{abstract}
  First-price auctions have largely replaced traditional bidding
  approaches based on Vickrey auctions in programmatic advertising.
  As far as learning is concerned, first-price auctions are more
  challenging because the optimal bidding strategy does not only
  depend on the value of the item but also requires some knowledge of
  the other bids.
  They have already given rise to several works in sequential
  learning, %
  many of which consider models for which the value of the buyer or
  the opponents' maximal bid is chosen in an adversarial manner. Even
  in the simplest settings, this gives rise to algorithms whose regret
  grows as $\sqrt{T}$ with respect to the time horizon $T$.
  Focusing on the case where the buyer plays against a stationary
  stochastic environment, we show how to achieve significantly lower
  regret: when the opponents' maximal bid distribution is known we
  provide an algorithm whose regret can be as low as $\log^2(T)$; in
  the case where the distribution must be learnt sequentially, a
  generalization of this algorithm can achieve $T^{1/3+ \epsilon}$
  regret, for any $\epsilon>0$.
  To obtain these results, we introduce two novel ideas that can be of
  interest in their own right. First, by transposing results obtained
  in the posted price setting, we provide conditions under which the
  first-price bidding utility is locally quadratic around its
  optimum. Second, we leverage the observation that, on small
  sub-intervals, the concentration of the variations of the empirical
  distribution function may be controlled more accurately than by
  using the classical Dvoretzky-Kiefer-Wolfowitz inequality.
  Numerical simulations confirm that our algorithms converge much
  faster than alternatives proposed in the literature for various bid
  distributions, including for bids collected on an actual
  programmatic advertising platform.
\end{abstract}

\begin{keywords}
multi-armed bandits; sequential bidding; auctions 
\end{keywords}




\author{author names withheld}

\editors{Under Review for ACML 2021}

\maketitle

\section{Introduction}\label{sec:intro}
We consider the problem of setting a bid in repeated first-price
auctions.  First-price auctions are widely used in practice, partly
because they constitute the most natural and simple type of
auctions. In particular, they have been largely adopted in the field
of programmatic advertising, where they have progressively replaced
second-price auctions \citep{Slu17,slefo}.  This recent transition
took place for various reasons.  First, whereas second-price auctions
have the advantage of being dominant-strategy incentive-compatible and
hence allow for simple bidding strategies
\citep{vickrey1961counterspeculation}, they were made obsolete by the
widespread use of \textit{header bidding}
, a technology that puts different ad-exchange platforms in
competition. With this technology, every participating ad-exchange has
to provide the winning bid of the auction organized on its platform; a
second-level auction is then organized between all the winners to
determine which bidder earns the right of displaying its
banner. Second price auctions would hence jeopardize the fairness of
the attribution of the placement at sale with header bidding.
Second, sellers have benefited from the transition, since many bidders
continued to bid as in second-price auctions and despite the automated
implementation of so-called \textit{bid shading} by demand-side
platforms, meant to adjust their bids to this new situation
\citep{Slu19}.  The transition to first price auctions raises
questions for advertisers who need new bidding strategies.  In
general, bidders participating in auctions in the context of
programmatic advertising do not know the bidding strategies of the
other contestants in advance, or anything about the valuations that
other bidders attribute to the advertisement slot.  Not only do they
have to learn other bidders' behavior on the go, but they also need to
understand how valuable the placement is for their own use (how many
clicks or actions the display of their ad on this placement will lead
to), which is usually not the same for all bidders.

In this work, we model the problem faced by a single bidder in
repeated stochastic first-price auctions, that is, when the
contestants' bids are drawn from a stationary distribution. We
consider that the learner's bids will not influence the others'
bidding strategies.  This approximation is sensible in contexts where
the major part of the stakeholders do not have an elaborate bidding
strategy.  More precisely, many stakeholders never modify their bids
or do so at a very low frequency.  Moreover, the poll of bidders is
very large and each bidder only participates in a fraction of the
auctions, 
which argues in favor of the assumption that the influence of one
bidder on the rest of the participants can be neglected.

\paragraph{Model} We consider that similar items are sold in $T$
sequential first price auctions. For $t=1,\ldots,T$, the auction
mechanism unfolds in the following way. First, the bidder submits her
bid $B_t$ for the item that is of unknown value $V_t$. The other
players submit their bids, the maximum of which is called $M_t$.  If
$M_t\leq B_t$ (which includes the case of ties), the bidder observes
and receives $V_t$ and pays $B_t$.  If $B_t< M_t$, the bidder loses
the auction and does not observe $V_t$.

We make the following additional assumptions: $\{V_t\}_{t\geq 1}$ are
independent and identically distributed random variables in the unit
interval $[0,1]$; their expectation is denoted by
$v:=\mathbb{E}(V_t)$.  The $\{M_t\} _{t\geq 1}$ are independent and
identically distributed random variables in the unit interval $[0,1]$
with a cumulative distribution function (CDF) $F$, independent from the
$\{V_t\}_{t\geq 1}$. When applicable, we denote by $f=F'$ the associated
probability density function.

Due to the stochastic nature of the setting, we study the first-price utility of the bidder:
$U_{v,F}(b) := \E\big[(V_t- b)\1\{M_t\leq b\}\big] = (v-b)F(b)$. The
(pseudo-)regret is defined as
\begin{align*}
R_T^{v,F} &= T \max_{b \in [0,1]} U_{v,F}(b)- \sum_{t=1}^T \E[U_{v,F}(B_t)]\;.
\end{align*}
We denote by
$b^*_{v,F} = \max\big\{\argmax_{b\in[0,1]}U_{v,f}(b)\big\}$ the
(highest) optimal bid. In the rest of the paper, we will abuse notation and speak about regret although rigorously this quantity should be termed pseudo-regret. Note that the outer max is required as the
utility may have multiple maxima (see Section~\ref{sec:properties}
below): in that case, we define the optimal bid as the one that has
the largest winning rate. In the sequel, we exclude the particular
case where $F(b^*_{v,F})=0$, since in this hopeless situation the
contestants always bid above the value of the item and the best
strategy is not to bid at all ($B_t\equiv 0$): we thus assume that
$F(b^*_{v,F})>0.$

In Section~\ref{sec:known_F}, we will first assume that $F$ is known
to the learner.  This setting bears some similarities with the case of
second-price auctions considered by
\citep{weed2016online,achddou2021efficient}: the truthfulness of
second-price auctions makes it sufficient for the bidder to learn the
value of $v$ and the valuation of the item is the only parameter to
estimate in that case. However, an important feature of the
second-price auction mechanism is that the utility of the bidder is
quadratic in $v$ under very mild assumptions on the bidding
distribution $F$. In the case of first-price auctions, the utility is
no longer guaranteed to be unimodal, neither is the optimal bid
$b^*_{v,F}$ a regular function of $v$.

We treat the case, in Section~\ref{sec:unknown_F}, where the CDF $F$
of the opponents' maximal bid is initially unknown to the learner,
assuming that the maximal bid $M_t$ is observed for each auction. Note
that in this more realistic setting, the bidder could not infer the
optimal bid $b^*_{v,F}$ even if she had perfect knowledge of the item
value $v$. The bidder consequently needs to estimate $F$ and $v$
simultaneously, which makes it a clearly harder task. This second
setting bears some similarities with the task of fixing a price in the
posted price problem~\citep{huang2018making, kleinberg2003value,
  bubeck2017multi, cesa2019dynamic}, in which a seller needs to
estimate the distribution of the valuations of buyers, in order to set
the optimal price in terms of her revenue. However, in contrast to the
posted-price setting, there is an additional unknown parameter $v$
that also impacts the utility function.

In both of these settings, the learner is faced with a structured
continuously-armed bandit problem with censored feedback. Indeed, the
bidder only observes the reward associated with the chosen bid, but
she observes the value only when she wins.  This introduces a specific
exploitation/exploration dilemma, where exploitation is achieved by
bidding close to one of the optimal bids but exploration requires that
the bids are not set too low. This structure seems to call for
algorithms that bid above the optimal bid with high probability, as in
\citep{weed2016online,achddou2021efficient} for the second-price case,
but we will see in the following that it is not necessarily
true. 

\paragraph{Related Works}
A major line of research in the field of online learning in repeated
auctions is devoted to fixing a reserve price for second-price
auctions or a selling price in posted price auctions, see
\citep{nedelec2020learning} for a general survey.
In the posted price setting, arbitrarily bad distributions of bids
give rise to very hard optimization problems
\citep{roughgarden2016ironing}.  That is why regularity assumptions
are often used, like e.g. the \textit{monotonic hazard rate} (MHR)
condition.  Most notably, \cite{huang2018making, cole2014sample,
  dhangwatnotai2015revenue} use this assumption to bound the sample
complexity of finding the monopoly price.  Regarding online learning
in the posted price
setting, 
\cite{kleinberg2003value} and \cite{cesa2019dynamic} introduce
algorithms for the stochastic case, respectively in the cases where the
distribution of the prices are continuous and discrete.
\cite{bubeck2017multi} study the adversarial
counterpart. \cite{blum2004online, cesa2014regret} study online
strategies that aim at setting the optimal reserve price in
second-price auctions while learning the distribution of the buyer's
bids. \cite{cesa2014regret} assume that bidders are symmetric, but
that the bids distribution is not necessarily MHR. They introduce an
optimistic algorithm based on two ideas. Firstly they observe that
exploitation is achieved by submitting a price smaller than the
optimal reserve price, and secondly they use the fact that the utility
can be bounded in infinite norm, thanks to the
Dvoretzky-Kiefer-Wolfowitz (DKW) inequality \citep{massart1990tight}.

The problem of learning in repeated auctions from the point of view of
the buyer was originally addressed in the setting of second-price
auctions. For the stochastic setting, \cite{weed2016online} propose an
algorithm that overbids with high probability, and that is shown to
have a regret of the order of $\log^2{T}$ under mild assumptions on
the distribution of the bids. They also provide algorithms for the
adversarial case, that have a regret scaling in
$\sqrt{T}$. \cite{achddou2021efficient} extend their work by proposing
tighter optimistic strategies that show better worst case
performances. They also analyze non-overbidding strategies, proving
that such strategies can perform well on a large class of second-price
auctions instances. \cite{flajolet2017real} consider the contextual
set-up where the value associated to an item is linear with respect to
a context vector associated to the item, and revealed before each
action.

Learning in repeated stochastic first price auctions is a difficult
problem that has given rise to a number of very different though
equally interesting modelizations. 
\cite{feng2020convergence} consider auctions in which the values of
all the bidders are revealed as a context before each turn, proving
that the bids of bidders who use no regret contextual learning
strategies in first price auctions converge to Bayes Nash
equilibria. \cite{han2020optimal} also consider the case where the
values are assumed to be revealed as an element of context before each
auction takes place and the highest bid among others' bids
is only shown to the learner when she loses.  This setting
interestingly introduces a censoring structure that is opposed to the
one we consider: in this context, exploitation is achieved by not
bidding too high. \cite{han2020optimal} provide new algorithms for
this setting which have a regret of the order of $\sqrt{T}$.  A
setting somewhat closer to ours is studied by
\cite{feng2018learning}. This work deals with the setting of a bid in
an adversarial fashion, when the other bids are revealed at each
time step and the value is revealed only upon winning an
auction. However the proposed algorithm is based on a discretization
of the bidding space which relies on the prior knowledge of the
smallest gap between two distinct bids. 
With this knowledge, the proposed algorithm achieves
an adversarial regret of the order of $\sqrt{T}$.

\paragraph{Contributions}
The highlights of Sections~\ref{sec:properties}--\ref{sec:unknown_F}
are the following. In Section~\ref{sec:properties} we stress the
hardness of the first-price bid optimization task, showing that in
general it necessarily leads to high minimax regret rates. We however
transplant ideas introduced in the case of posted prices to exhibit
natural assumptions ensuring that the first-price utility is smooth,
paving the way for faster learning. In Section~\ref{sec:known_F}, we
consider the case where the learner can assume knowledge of $F$ and
propose a new UCB-type algorithm called UCBid1 for learning the
optimal bid with low regret. UCBid1 is adaptive to the difficulty of
the problem in the sense that its regret is $O(\sqrt{T})$ in difficult
cases, but comes down to $O(\log^2 T)$ when the first-price utility is
smooth. We also provide lower-bound results suggesting that these
rates are nearly optimal. In Section~\ref{sec:unknown_F}, we consider
the more general setting where $F$ is initially unknown to the
learner. By leveraging the structure of the first-price bidding
problem, we are able to propose an algorithm, termed UCBid1+, which is
a direct generalization of UCBid1. Interestingly, this algorithm is
not optimistic anymore: it does not submit bids which are with high
probability above the (unknown) optimal bid. However, it can still be
proved to achieve a regret rate of $O(\sqrt{T})$ in the most general
case and, more importantly, a regret rate upper bounded by
$O(T^{1/3+ \epsilon})$ for every $\epsilon>0$ when the first-price
utility satisfies the regularity assumptions mentioned in Section
\ref{sec:properties}.  The latter result relies on an original proof
notably based on the use of a local concentration inequality on the
empirical CDF. All the proofs corresponding to these three sections
are presented in appendix. Section~\ref{sec:experiments} closes the paper with numerical
simulations where we compare the proposed algorithms with
continuously-armed bandit strategies and tailored strategies from the
literature, both using simulated and real-world data.

\section{Properties of Stochastic First-Price auctions}\label{sec:properties}

\begin{figure}[htb] \centering
\includegraphics[width=.44\textwidth]{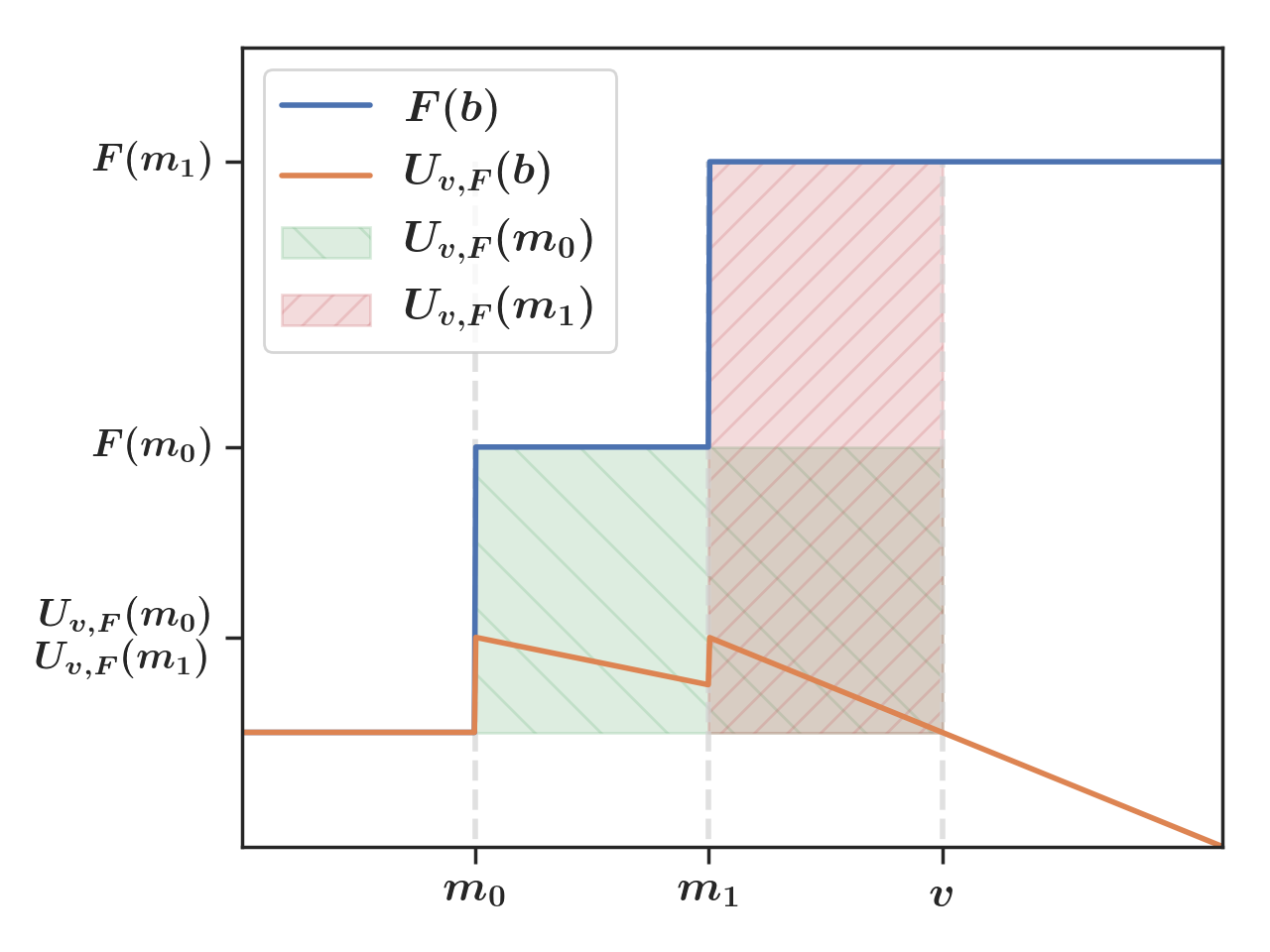}
\caption{An example with two maximizers}\label{fig:2_maxima_dis}
\end{figure}

There are two important difficulties with first price auctions. The
first one lies in the fact that the utility can have multiple
maximizers (or multiple modes with arbitrarily close values) and thus
lead to arbitrarily hard optimization problems.  To illustrate this,
we provide in Figure \ref{fig:2_maxima_dis} an example of value $v$
and discrete distribution, supported on two values $m_0,m_1$, that
leads to a utility having two global maximizers. Note that the utility
$U_{v,F}(b)$ is the area of the rectangle with vertices
$(b,F(b)), (b,0),(v,F(b)), (v,0)$. This observation makes it easy to
build examples with multiple maxima. Discrete examples like the one in
Figure \ref{fig:2_maxima_dis} are intuitive because the utility is
decreasing between two successive points of the support, but there
also exist similar cases with continuous distributions (see for
example Appendix \ref{app:par_2max}). This example also shows that
there exist combinations of bids distributions and values for which the
utility is not regular around its maximum.

The second difficulty comes from the fact that the mapping from $v$ to
the largest maximizer, $\psi_F: v \mapsto b^*_{v,F}$ may also lack
regularity. Indeed, keeping the distribution in Figure
\ref{fig:2_maxima_dis} but setting the value to $v' = v + \Delta$,
with a positive $\Delta$ (resp. to $v'=-\Delta$) yields that the set
of maximizers is $\{m_1\}$ (resp. $\{m_0\}$).
Even though $\psi_F$ can not be proved to be regular in all
generality, it always holds that $\psi_F$ is increasing. This is
intuitive: the optimal bid grows with the private valuation.

\begin{lemma}\label{lem:psi_F}
  For any cumulative distribution $F$, $\psi_F: v \mapsto b^*_{v,F}$
  is non decreasing.
\end{lemma}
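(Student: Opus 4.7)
My plan is to prove monotonicity by a direct argument via the optimality inequalities, essentially a monotone-comparative-statics / single-crossing argument tailored to the form $U_{v,F}(b)=(v-b)F(b)$.

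Concretely, fix $v_1<v_2$ and set $b_i := b^*_{v_i,F}$ for $i=1,2$. I argue by contradiction, assuming $b_1>b_2$. By optimality of $b_i$ for $U_{v_i,F}$, I have the two inequalities
\begin{align*}
(v_1-b_1)F(b_1) &\geq (v_1-b_2)F(b_2),\\
(v_2-b_2)F(b_2) &\geq (v_2-b_1)F(b_1).
\end{align*}
Adding them and cancelling the symmetric terms yields $(v_1-v_2)(F(b_1)-F(b_2))\geq 0$. Since $v_1<v_2$, this forces $F(b_1)\leq F(b_2)$; combined with $b_1>b_2$ and monotonicity of the CDF $F$, I obtain $F(b_1)=F(b_2)$.

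I then exploit the excerpt's standing assumption $F(b^*_{v,F})>0$, applied to $v_2$, which gives $F(b_2)>0$, hence $F(b_1)=F(b_2)>0$. Plugging the equality of CDF values back into $U_{v_1,F}$ yields
\begin{align*}
U_{v_1,F}(b_2)-U_{v_1,F}(b_1) = (b_1-b_2)F(b_1)>0,
\end{align*}
which contradicts the optimality of $b_1$ for $U_{v_1,F}$. Therefore $b_1\leq b_2$, proving that $\psi_F$ is non-decreasing.

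The argument has no real obstacle, so the only care needed is to handle the fact that $b^*_{v,F}$ is defined as the \emph{largest} element of the argmax, which is compatible with the strict inequality in the last display (had we defined it as the smallest, the symmetric argument would still apply). The standing assumption $F(b^*_{v,F})>0$ is essential: without it, $b_1$ could be a vacuous maximizer with $F(b_1)=0$, and the strict inequality in the last step would fail; this is precisely the degenerate case explicitly excluded in the model.
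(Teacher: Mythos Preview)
Your proof is correct and follows essentially the same route as the paper's: sum the two optimality inequalities to obtain $(v_2-v_1)\big(F(b^*_{v_2,F})-F(b^*_{v_1,F})\big)\geq 0$, deduce $F(b^*_{v_1,F})=F(b^*_{v_2,F})$ under the contradiction hypothesis $b^*_{v_1,F}>b^*_{v_2,F}$, and then observe that $b^*_{v_2,F}$ strictly improves on $b^*_{v_1,F}$ for the utility $U_{v_1,F}$, contradicting optimality. Your version is in fact slightly more explicit than the paper's in invoking the standing assumption $F(b^*_{v,F})>0$ to secure the \emph{strict} inequality in the final step.
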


The two aforementioned difficulties contribute to making the problem
at hand particularly hard. In the following theorem, we show that any
algorithm is bound to have a worst case regret growing at least like
$\sqrt{T}$.

\begin{theorem}\label{th:lower_bound_gen}
  Let $\mathcal{C}$ denote the class of cumulative distribution
  functions on $[0,1]$. Any strategy, whether it assumes knowledge of
  $F$ or not, must satisfy
  \begin{align*}
  \liminf_{T \rightarrow \infty} \frac{\max_{v \in [0,1], F \in \mathcal{C}}R_T^{v,F}}{\sqrt{T}}
  &\geq   \frac{1}{64}, 
  \end{align*}
\end{theorem}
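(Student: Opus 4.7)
The plan is to run a classical two-hypothesis Le~Cam type argument, using as the hard family the two-maxima pathology exhibited in Figure~\ref{fig:2_maxima_dis}. Concretely, I would fix the two-point distribution $F$ supported on $\{0,1/4\}$ with $F(0)=1/2$ and $F(1/4)=1$, together with the reference value $v=1/2$ for which $U_{v,F}$ has the two global maximizers $m_0=0$ and $m_1=1/4$ (indeed $U_{v,F}(m_0)=U_{v,F}(m_1)=1/4$). The two hypotheses are $P_+ = P_{v+\Delta,F}$ and $P_- = P_{v-\Delta,F}$, with $V_t$ taken Bernoulli of mean $v\pm\Delta$ so that both hypotheses share the same $F$ but have distinct unique optimal bids $m_1$ and $m_0$, respectively. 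The perturbation $\Delta$ will be tuned eventually as $\Delta = c/\sqrt{T}$.

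The next step is a short case analysis on $U_{v,F}(b)=(v-b)F(b)$ over the intervals $[0,m_1)$ and $[m_1,1]$. Under $P_+$ one finds that every bid $B_t<m_1$ contributes at least $\Delta/2$ to the instantaneous regret, while under $P_-$ every bid $B_t\geq m_1$ contributes at least $\Delta/2$. Writing $N_A$ for the number of rounds with $B_t<m_1$, this yields
\[
R_T^{v+\Delta,F}\;\geq\;\tfrac{\Delta}{2}\,\E_+[N_A],\qquad
R_T^{v-\Delta,F}\;\geq\;\tfrac{\Delta}{2}\,\E_-[T-N_A].
\]

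The final step is information-theoretic. Conditionally on the history, the winning indicator $\1\{M_t\leq B_t\}$ has the same law under $P_+$ and $P_-$, and $V_t$ is only revealed upon winning, so that the per-round KL divergence is at most $\mathrm{KL}(\mathrm{Bern}(v+\Delta)\Vert\mathrm{Bern}(v-\Delta))\leq 8\Delta^2$. The chain rule then gives $\mathrm{KL}(P_+\Vert P_-)\leq 8T\Delta^2$, and Pinsker's inequality applied to the bounded statistic $N_A$ yields $|\E_+[N_A]-\E_-[N_A]|\leq 2T^{3/2}\Delta$. Summing the two regret lower bounds produces
\[
R_T^{v+\Delta,F}+R_T^{v-\Delta,F}\;\geq\;\tfrac{\Delta T}{2}\bigl(1-2\sqrt{T}\,\Delta\bigr).
\]
Choosing $\Delta=1/(4\sqrt{T})$ makes the parenthesis equal to $1/2$, so the maximum of the two regrets is at least $\sqrt{T}/32$, which is more than enough to conclude.

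The main subtlety, rather than any individual estimate, is the information-theoretic step: one needs to make sure that the \emph{censored} nature of the feedback (the value is observed only upon winning) and the adaptivity of the bidder do not inflate the KL above the $T\Delta^2$ scale. This is guaranteed here because the law of $M_t$ (hence of the winning indicator given the chosen bid) does not depend on $v$, so only the Bernoulli observations of $V_t$ --- of which there are at most $T$ --- contribute to the divergence between $P_+$ and $P_-$.
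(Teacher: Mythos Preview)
Your proposal is correct and follows essentially the same route as the paper: a two-hypothesis Le~Cam argument in which $F$ is a fixed discrete distribution and the two alternatives differ only through the Bernoulli mean of $V_t$, combined with the chain-rule identity $\mathrm{KL}=\E[N_T]\,kl(\cdot,\cdot)$ and Pinsker's inequality. The only (cosmetic) difference is the hard instance: the paper uses a three-point $F$ supported on $\{v/3,2v/3,1\}$ whose weights depend on $T$, together with the asymmetric pair $(v,v-\delta_T)$, whereas you use the simpler two-point $F$ from Figure~\ref{fig:2_maxima_dis} (independent of $T$) with the symmetric pair $(v+\Delta,v-\Delta)$; this even yields a slightly better constant, but the mechanism is identical.
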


Theorem \ref{th:lower_bound_gen} corresponds to Theorem 6
in~\cite{han2020optimal}. For completeness, we prove it in Appendix
\ref{sec:app_lower_bound}. The proof relies on specifically hard
instances of CDF that are perturbations of the example of Figure
\ref{fig:2_maxima_dis}.
It illustrates the complexity of bidding in first-price auctions, when $F$ and $v$ are arbitrary. This complexity stems from specifically hard instances of $F$ and $v$.
We present a natural assumption that avoids these pathological cases.

\begin{assumption} \label{ass:pseudo-mhr} $F$ is continuously differentiable and is strictly log-concave.
\end{assumption}

This assumption is reminiscent of the monotonic hazard rate (MHR)
condition (see~e.g. \cite{cole2014sample}), that appears in the
analysis of the posted price problem.  While MHR requires
${f}/{(1-F)}$ to be increasing, Assumption \ref{ass:pseudo-mhr}
requires ${f}/{F}$ to be decreasing.  In particular, this condition is
satisfied by truncated exponentials and Beta distributions with $f$ of
the form $C x^{\alpha-1}$ where $\alpha>1$ or $C (1-x)^{\beta -1}$
where $\beta>1$, or Beta distributions in which
$\alpha + \beta< \alpha \beta$ (see Lemma \ref{lem:beta_distr} in
Appendix \ref{sec:app_properties}). Assumption \ref{ass:pseudo-mhr}
plays roughly the same role for first price auctions than MHR for the
posted price setting. It guarantees in particular that there is a
unique optimal bid. Note that if $F$ satisfies Assumption
\ref{ass:pseudo-mhr}, $F$ is increasing, and admits an inverse which
we denote by $F^{-1}$.

\begin{lemma} \label{lem:unique_max} Under Assumption \ref{ass:pseudo-mhr}, for any $v\in[0,1]$ the mapping $b\mapsto U_{v,F}(b)$ has a unique maximizer.
\end{lemma}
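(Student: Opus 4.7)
The plan is to show that on the subinterval where the utility is strictly positive, its derivative vanishes at a single point, which must therefore be the unique global maximizer.

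Step 1 (localization). Let $b_0 := \inf\{b \in [0,1] : F(b) > 0\}$. Since $F$ is continuous under Assumption~\ref{ass:pseudo-mhr}, the standing exclusion $F(b^*_{v,F}) > 0$ rules out the case $v \le b_0$: in that case $U_{v,F}(b) = (v-b)F(b) \le 0$ everywhere with equality on $[0,b_0]$, forcing $b^*_{v,F} = b_0$ and $F(b^*_{v,F}) = 0$. So $v > b_0$. Then $U_{v,F} \equiv 0$ on $[0, b_0]$, $U_{v,F} \le 0$ on $[v, 1]$, and $U_{v,F} > 0$ on $(b_0, v)$. As $U_{v,F}$ is continuous on the compact set $[0,1]$, its maximum is attained and strictly positive, so every maximizer lies in the open interval $(b_0, v)$.

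Step 2 (derivative in factored form). On $(b_0, 1]$ the function $\log F$ is well-defined, continuously differentiable, and strictly concave by Assumption~\ref{ass:pseudo-mhr}, so $h := f/F = (\log F)'$ is strictly decreasing on $(b_0, 1]$. A direct computation gives
\[
U_{v,F}'(b) = F(b)\bigl[(v-b)h(b) - 1\bigr] =: F(b)\, g(b),
\]
and since $F > 0$ on $(b_0, v)$, the critical points of $U_{v,F}$ inside $(b_0, v)$ are exactly the zeros of $g$.

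Step 3 (uniqueness of the critical point). Differentiating, $g'(b) = -h(b) + (v-b)h'(b)$. On $(b_0, v)$ we have $h(b) > 0$, $h'(b) < 0$ (strict log-concavity), and $v - b > 0$, so both summands are negative and $g'(b) < 0$. Hence $g$ is strictly decreasing on $(b_0, v)$ and admits at most one zero. Since any maximizer of $U_{v,F}$ on $[0,1]$ must lie in $(b_0, v)$ by Step~1 and must satisfy $g(b) = 0$ by Step~2, and at least one maximizer exists by compactness, there is exactly one, which is the unique global maximizer.

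The only delicate point is the boundary at $b_0$, where $\log F$ and hence $h$ diverges, so the computation in Step~2 would be illegitimate on a neighborhood of $b_0$. Using the excluded-instance hypothesis to force $v > b_0$ is precisely what lets us confine the whole argument to the open subinterval $(b_0, v)$ where strict log-concavity yields the strict monotonicity of $g$ cleanly; the remainder is a standard first-order analysis.
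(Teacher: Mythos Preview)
Your proof follows essentially the same approach as the paper: factor $U_{v,F}'$ and use strict log-concavity to show the non-trivial factor is strictly monotone, forcing a unique critical point. The paper writes $U'(b) = (v - \phi_F(b))f(b)$ with $\phi_F(b) = b + F(b)/f(b)$ increasing, while you write $U'(b) = F(b)\,g(b)$ with $g(b) = (v-b)h(b)-1$ decreasing; these are equivalent rearrangements of the same computation. Your localization argument in Step~1 is more careful about the boundary $b_0$ than the paper bothers to be.

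There is one minor technical slip. In Step~3 you compute $g'(b) = -h(b) + (v-b)h'(b)$, but Assumption~\ref{ass:pseudo-mhr} only guarantees $F \in C^1$, so $h = f/F$ need not be differentiable. The conclusion survives with a trivial patch: on $(b_0,v)$ both $b \mapsto v-b$ and $h$ are strictly positive and strictly decreasing (the latter directly from strict concavity of $\log F$), so their product $(v-b)h(b)$ is strictly decreasing, hence so is $g$. The paper sidesteps this by observing that $F/f$ is increasing as the reciprocal of the positive strictly decreasing $f/F$, so $\phi_F(b) = b + F(b)/f(b)$ is increasing as a sum of increasing functions, with no second derivative needed.
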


As does the MHR assumption for the posted-prices setting, Assumption
\ref{ass:pseudo-mhr} ensures that the utility is strictly concave
\emph{when expressed as a function of the quantile} $q=F(b)$
\emph{associated with the bid} $b$. Another important consequence of
Assumption \ref{ass:pseudo-mhr} is that the mapping from $v$ to the
optimal bid $b^*_{v,F}$ is guaranteed to be regular.

\begin{lemma}\label{lem:lipschitz} If Assumption \ref{ass:pseudo-mhr} is satisfied and $f$ is continuously differentiable, then $\psi_F: v \mapsto b^*_{v, F}$ is Lipschitz continuous with a Lipschitz constant 1.
\end{lemma}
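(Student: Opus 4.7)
The plan is to parametrize the optimal bid by the first-order condition and read off the Lipschitz constant directly from the derivative of the resulting inverse map. By Lemma~\ref{lem:unique_max}, $b^*_{v,F}$ is unique; assuming for the moment that it lies in the interior of $[0,1]$, the first-order condition $\partial_b U_{v,F}(b^*_{v,F})=0$ reads $(v-b^*_{v,F})f(b^*_{v,F})=F(b^*_{v,F})$, which I rewrite as $v=g(b^*_{v,F})$ with
\[
g(b)\;:=\;b+\frac{F(b)}{f(b)}.
\]

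Differentiating by the quotient rule gives
\[
g'(b)\;=\;2-\frac{F(b)\,f'(b)}{f(b)^2}.
\]
The key observation is that strict log-concavity of $F$ amounts to $(\log F)''=(f'F-f^2)/F^2<0$, i.e.\ $F(b)f'(b)<f(b)^2$ wherever $F(b)>0$. Substituting this bound into the expression for $g'(b)$ yields $g'(b)>1$ on the interior of the support of $F$. Hence $g$ is a $C^1$ diffeomorphism onto its image, and $\psi_F=g^{-1}$ satisfies $0<\psi_F'(v)=1/g'(\psi_F(v))<1$; the mean value theorem then delivers the 1-Lipschitz bound for every pair of values of $v$ lying in the range of $g$.

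The remaining ingredient is the boundary case $b^*_{v,F}=0$, which can occur when $v\le F(0)/f(0)$ (in particular only if $F(0)>0$). There, $\psi_F$ is constant equal to $0$, so the Lipschitz inequality is trivial when both arguments fall in that regime; and when only one of them does, it suffices to combine the interior Lipschitz bound just established with the monotonicity of $\psi_F$ supplied by Lemma~\ref{lem:psi_F}. The main technical subtlety thus lies in this boundary bookkeeping, while the heart of the proof is the one-line inequality $Ff'<f^2$ that Assumption~\ref{ass:pseudo-mhr} immediately provides.
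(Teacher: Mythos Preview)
Your proof is correct and follows essentially the same route as the paper: both identify $\psi_F$ as the inverse of $\phi_F(b)=b+F(b)/f(b)$ (your $g$) and use $\phi_F'>1$, which follows from strict log-concavity of $F$, to conclude the 1-Lipschitz bound via the inverse-function derivative. Your version is in fact more complete than the paper's, as you spell out the computation $\phi_F'(b)=2-Ff'/f^2>1$ from $(\log F)''<0$ and you handle the boundary case $b^*_{v,F}=0$, which the paper omits.
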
 

Indeed, if $f$ is continuously differentiable and if $f$ does not
vanish on $[0,1[$ (which is implied by Assumption
\ref{ass:pseudo-mhr}), $\psi_F$ is invertible and it inverse $\phi_F$
writes $\phi_F : b \mapsto b+ {F(b)}/{f(b)}$. Assumption
\ref{ass:pseudo-mhr} ensures that $\phi_F$ admits a derivative that is
lower-bounded by $\phi_F'(b) > 1$.

Assumption \ref{ass:pseudo-mhr} also implies the important property
that the probability of winning the auction at the optimal bid
$F(b^*_{v,F})$ cannot be arbitrarily small when compared to $F(v)$.

\begin{lemma}\label{lem:bound_F_b*} If  Assumption \ref{ass:pseudo-mhr} is satisfied, then
$$F(b^*_{v,F})\geq \frac{F(v)}{e}\;.$$
\end{lemma}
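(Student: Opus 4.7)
The plan is to combine the first-order optimality condition at $b^*_{v,F}$ with strict log-concavity of $F$, producing a one-line bound on $\log(F(v)/F(b^*_{v,F}))$.

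First I would dispose of degenerate cases. If $F(v)=0$ the inequality is trivial. Otherwise, since $U_{v,F}(v)=0$ while $U_{v,F}(b)>0$ for some $b$ (because $F(b^*_{v,F})>0$), the optimizer satisfies $b^*:=b^*_{v,F}<v$. Under Assumption~\ref{ass:pseudo-mhr}, $F$ is $C^1$ on the interval where it is positive and Lemma~\ref{lem:unique_max} gives uniqueness of the maximizer, so $b^*$ lies in the open interval where $U_{v,F}$ is differentiable and strictly positive, and satisfies the first-order condition
\begin{equation*}
-F(b^*)+(v-b^*)\,f(b^*)=0,\qquad\text{equivalently,}\qquad v-b^* \;=\; \frac{F(b^*)}{f(b^*)}.
\end{equation*}

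Next I would exploit strict log-concavity. Writing $g(b):=\log F(b)$, one has $g'(b)=f(b)/F(b)$, and Assumption~\ref{ass:pseudo-mhr} states precisely that $g'$ is strictly decreasing on the region where $F>0$. In particular, for every $t\in[b^*,v]$,
\begin{equation*}
g'(t)\;\le\; g'(b^*) \;=\; \frac{f(b^*)}{F(b^*)} \;=\; \frac{1}{v-b^*},
\end{equation*}
where the last equality uses the first-order condition from the previous step.

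Integrating this inequality over $[b^*,v]$ yields
\begin{equation*}
\log F(v)-\log F(b^*) \;=\; \int_{b^*}^{v} g'(t)\,dt \;\le\; (v-b^*)\cdot\frac{1}{v-b^*} \;=\; 1,
\end{equation*}
so that $F(v)/F(b^*)\le e$, which is the claim. The only real obstacle is checking that $b^*$ is actually interior to the domain where $F$ is smooth and strictly positive (so that the first-order condition and the integral expression for $g$ are valid); this is handled by the assumption $F(b^*)>0$ made globally in the paper, together with $b^*<v$ noted above. The rest is a short computation.
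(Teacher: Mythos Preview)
Your proof is correct and follows essentially the same approach as the paper's: both use the first-order condition $v-b^*=F(b^*)/f(b^*)$ together with the monotonicity of $f/F$ (equivalently, of $(\log F)'$) to bound $\int_{b^*}^{v} f(u)/F(u)\,du$ by $(v-b^*)\cdot f(b^*)/F(b^*)=1$. Your write-up is slightly more careful about the degenerate case $F(v)=0$ and the interiority of $b^*$, but the argument is otherwise identical.
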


We conclude this section by additional properties that are essential
for obtaining low regret rates: the utility is second-order regular,
when expressed as a function of the quantiles. Let $W_{v,F}$ denote
the utility expressed as a function of the quantile,
$W_{v,F}: q\mapsto U_{v,F}(F^{-1}(q))$, and let
$q^*_{v,F}:= F(b^*_{v,F})$ be its maximizer. Under Assumption
\ref{ass:pseudo-mhr}, the deviations of $W_{v,F}$ from its maximum are
lower-bounded by a quadratic function.

\begin{lemma}\label{lem:quadratic} Under Assumption \ref{ass:pseudo-mhr}, for any $q\in [0, 1]$, $$W_{v,F}(q^*_{v,F}) - W_{v,F}(q) \geq \frac{1}{4}(q^*_{v,F} - q)^2 W_{v,F}(q^*_{v,F}).$$
\end{lemma}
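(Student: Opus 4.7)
The plan is to start from the exact identity
\[
W_{v,F}(q^*_{v,F}) - W_{v,F}(q) = (v - b^*_{v,F})(q^*_{v,F} - q) + q\bigl(F^{-1}(q) - b^*_{v,F}\bigr),
\]
obtained by expanding $W_{v,F}(q) = q(v - F^{-1}(q))$, and to split the analysis into three regimes. If $q > F(v)$, then $W_{v,F}(q) \leq 0$ and, since $(q^*_{v,F} - q)^2 \leq 1$, one immediately has $W_{v,F}(q^*_{v,F}) - W_{v,F}(q) \geq W_{v,F}(q^*_{v,F}) \geq (q^*_{v,F}-q)^2 W_{v,F}(q^*_{v,F})/4$.

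For the main regime $q \leq q^*_{v,F}$, set $b = F^{-1}(q) \leq b^*_{v,F}$. Using the first-order condition $v - b^*_{v,F} = q^*_{v,F}/f(b^*_{v,F})$, the goal rewrites as
\[
(q^*_{v,F} - q) - \frac{q\,(b^*_{v,F} - b)\,f(b^*_{v,F})}{q^*_{v,F}} \;\geq\; \frac{q^*_{v,F}\,(q^*_{v,F} - q)^2}{4}.
\]
The critical step is a logarithmic upper bound on $b^*_{v,F} - b$. Assumption~\ref{ass:pseudo-mhr} provides two ingredients: $f/F$ decreasing gives $f(s) \geq f(b^*_{v,F})\,F(s)/q^*_{v,F}$ on $[b, b^*_{v,F}]$, and concavity of $\log F$ gives the chord bound $F(s) \geq q^{(b^*_{v,F}-s)/(b^*_{v,F}-b)}\,(q^*_{v,F})^{(s-b)/(b^*_{v,F}-b)}$. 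Plugging both into $q^*_{v,F} - q = \int_b^{b^*_{v,F}} f(s)\,ds$ and evaluating the resulting elementary integral yields
\[
b^*_{v,F} - b \;\leq\; \frac{q^*_{v,F}\,\ln(q^*_{v,F}/q)}{f(b^*_{v,F})}.
\]
Substitution reduces the claim to $q\ln(q^*_{v,F}/q) \leq (q^*_{v,F} - q) - q^*_{v,F}(q^*_{v,F} - q)^2/4$, which I would verify by setting $x = q^*_{v,F}/q \geq 1$ and combining the standard estimate $x - 1 - \ln x \geq (x-1)^2/(2x)$ with $(q^*_{v,F})^2 \leq 1$.

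The case $q^*_{v,F} \leq q \leq F(v)$ is handled symmetrically, but now with a log-concavity \emph{upper} bound on $F$: the tangent to $\log F$ at $b^*_{v,F}$ gives $F(s) \leq q^*_{v,F}\exp(f(b^*_{v,F})(s - b^*_{v,F})/q^*_{v,F})$, and the analogous integral computation yields $b - b^*_{v,F} \geq (v - b^*_{v,F})\ln(q/q^*_{v,F})$. The inequality then reduces to $(1+y)\ln(1+y) - y \geq y^2/4$ with $y = (q - q^*_{v,F})/q^*_{v,F}$, which two-derivative calculus establishes on $[0, e-1]$. Crucially, Lemma~\ref{lem:bound_F_b*} gives $F(v) \leq e\,q^*_{v,F}$, so $y$ stays in $[0, e-1]$.

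The main obstacle is pinpointing the right quantitative form of log-concavity. A direct strong-concavity estimate $-W''_{v,F}(u) \geq W_{v,F}(q^*_{v,F})/2$, which would give the lemma by Taylor's formula, fails globally because Assumption~\ref{ass:pseudo-mhr} does not bound $f$ from above (for example $f$ may spike near $0$ for truncated exponentials). The resolution is to integrate the log-linear envelope of $F$ itself: the resulting $\ln(q^*_{v,F}/q)$ factor pairs exactly with the elementary inequality $x - 1 - \ln x \geq (x-1)^2/(2x)$, giving the required quadratic lower bound.
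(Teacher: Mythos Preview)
Your argument is correct, and it takes a genuinely different route from the paper's proof. The paper does not use the algebraic identity $W(q^*)-W(q)=(v-b^*)(q^*-q)+q\bigl(F^{-1}(q)-b^*\bigr)$; instead it integrates $W'(q)=v-\phi_F(F^{-1}(q))$ and controls the integrand via the mean-value theorem together with $\phi_F'\geq 1$. For $q>q^*$ the paper bounds $b(q)$ from below using only the optimality of $q^*$ (no log-concavity of $F$), and for $q<q^*$ it uses the concavity of $W$ to bound $b(q)$ from above and finishes with a somewhat opaque convex combination of two intermediate inequalities. In particular, the paper's proof is self-contained and does not invoke Lemma~\ref{lem:bound_F_b*}, whereas your middle case $q^*\leq q\leq F(v)$ hinges on it to force $y\in[0,e-1]$. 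Conversely, your reduction to the two one-variable inequalities $x-1-\ln x\geq (x-1)^2/(2x)$ and $(1+y)\ln(1+y)-y\geq y^2/4$ is more transparent about where the constant $1/4$ comes from, and it makes the role of the first-order condition $v-b^*=q^*/f(b^*)$ completely explicit.

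One simplification: in your case $q\leq q^*$, the chord bound and the ensuing integral are unnecessary. Since $f/F$ is decreasing,
\[
\ln\frac{q^*}{q}=\int_b^{b^*}\frac{f(s)}{F(s)}\,ds\;\geq\;\frac{f(b^*)}{q^*}\,(b^*-b),
\]
which already gives $b^*-b\leq (v-b^*)\ln(q^*/q)$ in one line---exactly mirroring the tangent argument you use for $q\geq q^*$. With this tweak both non-trivial cases reduce to the single fact that the tangent to $\log F$ at $b^*$ dominates $\log F$, making the two halves of your proof pleasingly symmetric.
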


This property relies, among other arguments, on the observation
that \begin{equation*}W'_{v,F}(q) =v- \phi_F(F^{-1}(q))=
  \phi_F(F^{-1}(q^*_{v,F}))- \phi_F(F^{-1}(q))\end{equation*} and that
$\phi_F'$ is lower-bounded by $1$ under Assumption
\ref{ass:pseudo-mhr} (see discussion of Lemma~\ref{lem:lipschitz}
above). Similarly, in order to obtain a quadratic lower bound on
$W_{v,F}(q)$, one needs to show that $\phi_F'$ may be upper
bounded. This is the purpose of the following regularity assumption.

\begin{assumption}\label{ass:lambda_pseudo-mhr} $F$ admits a density $f$ such that $c_f<f(b)<C_f, \forall b \in[b^*_{v,F} - \Delta,b^*_{v,F} + \Delta]$ and
$\phi_F: b \mapsto b+ {F(b)}/{f(b)}$ admits a derivative that is upper-bounded by a constant $\lambda \in \mathbb{R}^+$  on $[b^*_{v,F},b^*_{v,F} + \Delta]$.
\end{assumption}

Assumption~\ref{ass:lambda_pseudo-mhr} holds, in particular, when $F$
is twice differentiable, $f$ is lower-bounded by a positive constant
and $f'$ is upper-bounded by a positive constant on a neighborhood of
$b^*_{v,F}$.  Note that in the field of auction theory, it is common to assume that the utility is approximately quadratic around the maximum, which is a far stronger assumption, as stated in  \citep{nedelec2020learning} (see \citep{kleinberg2003value} for example). Assumption~\ref{ass:lambda_pseudo-mhr} implies the following lower bound for the
utility expressed as a function of the quantiles.

\begin{lemma}\label{lem:sub_quadratic} Under Assumption
  \ref{ass:lambda_pseudo-mhr}, for any
  $q \in [q^*_{v,F}, q^*_{v,F}+ C_f \Delta]$,
$$W_{v,F}(q^*_{v,F}) - W_{v,F}(q) \leq \frac{1}{c_f}\lambda (q^*_{v,F} - q)^2 .$$
\end{lemma}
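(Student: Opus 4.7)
The plan is to bound the derivative $W_{v,F}'$ on the relevant range and integrate once. First, I would recompute $W_{v,F}'$: from $W_{v,F}(q) = (v - F^{-1}(q))\,q$ together with $(F^{-1})'(q) = 1/f(F^{-1}(q))$ one gets $W_{v,F}'(q) = v - \phi_F(F^{-1}(q))$, exactly as noted in the discussion preceding Lemma~\ref{lem:quadratic}. Since $b^*_{v,F}$ is the unique maximizer of $U_{v,F}$ (Lemma~\ref{lem:unique_max}), the first-order condition gives $v = \phi_F(b^*_{v,F})$, and hence for $q \ge q^*_{v,F}$,
\[
-W_{v,F}'(q) \;=\; \phi_F(F^{-1}(q)) - \phi_F(b^*_{v,F}).
\]

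Next, I would invoke Assumption~\ref{ass:lambda_pseudo-mhr} twice. For $q$ in the stated range, the point $F^{-1}(q)$ lies in $[b^*_{v,F}, b^*_{v,F}+\Delta]$, since $(F^{-1})'(u) = 1/f(F^{-1}(u)) \le 1/c_f$ as long as we stay in this neighborhood (a standard continuation argument based on $f \ge c_f$ on $[b^*_{v,F}, b^*_{v,F}+\Delta]$). On that neighborhood, $\phi_F' \le \lambda$, so the mean value theorem yields
\[
\phi_F(F^{-1}(q)) - \phi_F(b^*_{v,F}) \;\le\; \lambda\,\bigl(F^{-1}(q) - b^*_{v,F}\bigr) \;\le\; \frac{\lambda}{c_f}\,(q - q^*_{v,F}),
\]
the second inequality using once more that $(F^{-1})' \le 1/c_f$.

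Finally, I would integrate this linear bound from $q^*_{v,F}$ to $q$:
\[
W_{v,F}(q^*_{v,F}) - W_{v,F}(q) \;=\; \int_{q^*_{v,F}}^{q} \bigl(-W_{v,F}'(u)\bigr)\,du \;\le\; \frac{\lambda}{c_f}\int_{q^*_{v,F}}^{q}(u - q^*_{v,F})\,du \;=\; \frac{\lambda}{2 c_f}\,(q^*_{v,F} - q)^2,
\]
which is in fact slightly stronger than the stated constant $\lambda/c_f$. The main obstacle is essentially bookkeeping rather than analytic: one has to make sure that every $F^{-1}(u)$ encountered along the path of integration indeed stays inside $[b^*_{v,F}, b^*_{v,F}+\Delta]$, which is exactly what the range condition on $q$ encodes and what justifies applying the bounds $\phi_F' \le \lambda$ and $f \ge c_f$ uniformly under the integral. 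Structurally, this is the natural dual of Lemma~\ref{lem:quadratic}: there one uses the lower bound $\phi_F' \ge 1$ to get a quadratic lower bound on the gap, here one uses the upper bound $\phi_F' \le \lambda$ to get a quadratic upper bound.
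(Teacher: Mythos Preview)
Your proof is correct and follows essentially the same route as the paper: write $W(q^*)-W(q)=\int_{q^*}^{q}(-W')$, express $-W'(u)=\phi_F(F^{-1}(u))-\phi_F(b^*)$ via the first-order condition, bound this by the mean value theorem applied to $\phi_F$ (using $\phi_F'\le\lambda$) and to $F^{-1}$ (using $f\ge c_f$), and integrate. You are in fact a bit more careful than the paper about checking that $F^{-1}(u)$ remains in $[b^*,b^*+\Delta]$ throughout the integration, and your final integration keeps the extra factor $1/2$, yielding $\lambda/(2c_f)$ rather than the paper's $\lambda/c_f$.
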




\section{Known Bid Distribution} \label{sec:known_F}
In this section we address the online learning task in the setting
where the bid distribution $F$ is known to the learner from the
start. In order to set the bid $B_t$ at time $t$, the available
information consists in
$N_{t}:= \sum_{s=1}^{t-1} \1 \{ M_s \leq B_s\}$, the number of
observed values before time $t$, and
$\hat{V}_t:= \frac{1}{N_{t}}\sum_{s=1}^{t-1} V_s \1(M_s\leq B_s)$ the
average of those values. Let
$\epsilon_t := \sqrt{{\gamma \log(t-1)}/{2 N_t}}$ denote a confidence
bonus depending on a parameter $\gamma>0$ to be specified below.

\begin{algo}[UCBid1]
Initially set $B_1 = 1$ and, for $t \geq 2$, bid according to
$$ B_t = \max\Big\{ \argmax_{b \in [0,1]}(\hat{V}_t + \epsilon_t -b)F(b)\Big\}.$$ 
\end{algo}

This algorithm, strongly inspired by UCB-like methods designed for
second-price auctions by \cite{weed2016online, achddou2021efficient},
is a natural approach to first-price auctions. The idea behind this
kind of method is that one should rather overestimate the optimal bid,
so as to guarantee a sufficient rate of observation.  As an UCB-like
algorithm, UCBid1 submits an (high probability) upper bound
$\psi_F(\hat{V}_t + \epsilon_t)$ of $b^*_{v,F}$, thanks to Lemma
\ref{lem:psi_F} and since $\psi_F$ is non decreasing. In practice, the
algorithm requires a line search at each step as the utility
maximization task is usually non-trivial, as discussed in
Section~\ref{sec:intro}.

In the most general case,  the regret of UCBid1 admits an upper bound of the order of $\sqrt{T\log(T)}$.

\begin{theorem}\label{th:FPUCBID_general}
When $\gamma>1$, the regret of UCBid1 is upper-bounded as
$$R_T^{v,F} \leq\frac{ \sqrt {2 \gamma}}{F(b^*_{v,F})} \sqrt{T \log T}  + O(\log T)\;.$$
\end{theorem}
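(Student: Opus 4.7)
The plan is to follow a standard UCB-style analysis, exploiting the fact that UCBid1 maintains a high-probability upper confidence bound on $v$ and, through the monotonicity of $\psi_F$ (Lemma~\ref{lem:psi_F}), on the optimal bid $b^*_{v,F}$ itself. I would introduce the confidence event $G_t=\{|\hat V_t - v|\le \epsilon_t\}$. Using Hoeffding's inequality applied conditionally on the random sample size $N_t$, combined with a peeling over dyadic scales of $N_t$, one shows that $\Pr(\bar G_t)$ decays polynomially fast in $t$, so that for $\gamma>1$ the cumulative contribution of bad events to the regret is $O(\log T)$. On $G_t$, $\hat V_t+\epsilon_t\ge v$, and since $\psi_F$ is non-decreasing, $B_t=\psi_F(\hat V_t+\epsilon_t)\ge b^*_{v,F}$, so UCBid1 is optimistic.

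The key per-round bound follows from the optimality of $B_t$ for the surrogate objective $b\mapsto(\hat V_t+\epsilon_t-b)F(b)$. Writing $(\hat V_t+\epsilon_t-B_t)F(B_t)\ge(\hat V_t+\epsilon_t-b^*_{v,F})F(b^*_{v,F})$ and subtracting $v\bigl(F(B_t)-F(b^*_{v,F})\bigr)$ from both sides yields
\begin{equation*}
\Delta_t := U_{v,F}(b^*_{v,F})-U_{v,F}(B_t) \le (\hat V_t+\epsilon_t-v)\bigl(F(B_t)-F(b^*_{v,F})\bigr),
\end{equation*}
so on $G_t$ we obtain $\Delta_t\le 2\epsilon_t$. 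To sum $\sum_t\epsilon_t$ I need a lower bound on $N_t$. On the intersection of past confidence events, $B_s\ge b^*_{v,F}$ for $s<t$, so the win indicators stochastically dominate independent Bernoullis with parameter $F(b^*_{v,F})$; a Chernoff bound then gives $N_t\ge F(b^*_{v,F})(t-1)/2$ with probability exponentially close to $1$. On this event $\epsilon_t\le \sqrt{\gamma\log(t-1)/(F(b^*_{v,F})(t-1))}$, and a direct summation over $t$ produces the leading $\sqrt{T\log T}$ term; the explicit constant $\sqrt{2\gamma}/F(b^*_{v,F})$ then emerges after bookkeeping of the two small-probability events.

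I expect the main obstacle to be the self-normalized nature of $\epsilon_t$, which depends on the random $N_t$: avoiding a polylogarithmic loss requires a peeling argument over dyadic scales of $N_t$, and one must derive the simultaneous concentration of $\hat V_t$ around $v$ and of $N_t$ around $F(b^*_{v,F})(t-1)/2$ in a way that does not create a circular dependence between the optimism property $B_t\ge b^*_{v,F}$ and the Chernoff lower bound on $N_t$, since the latter relies on that very property holding on all past rounds.
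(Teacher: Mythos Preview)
Your overall strategy matches the paper's: introduce the confidence event $G_t$, use Lemma~\ref{lem:psi_F} to get $B_t\ge b^*_{v,F}$ on $G_t$, and bound the per-round gap on $G_t$ by a multiple of $\epsilon_t$. Your inequality $\Delta_t\le(\hat V_t+\epsilon_t-v)\bigl(F(B_t)-F(b^*_{v,F})\bigr)\le 2\epsilon_t$ is in fact slightly sharper than the paper's route, which goes through $\|U^{\mathrm{UCBid1}}_t-U\|_\infty\le 2\epsilon_t$ and Lemma~\ref{lem:dist_U_max} to obtain $\Delta_t\le 4\epsilon_t$.

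The real divergence is in how $\sum_t\epsilon_t$ is controlled, and this is where your proposal runs into the obstacle you yourself flag. A Chernoff lower bound on $N_t$ requires $B_s\ge b^*_{v,F}$ for \emph{all} $s<t$, i.e.\ the intersection $\bigcap_{s<t}G_s$. Paying for the complement costs $T\cdot\Pr(\exists s:\bar G_s)\le T\sum_s\Pr(\bar G_s)$, which is $O(1)$ only when $\gamma>2$; this is precisely why the analogous argument for UCBid1+ (Theorem~\ref{th:UCBid1+_gen}) needs $\gamma>2$. With $\gamma>1$ alone, the circularity you anticipate is a genuine gap, not just a bookkeeping nuisance, and your sketch does not close it.

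The paper sidesteps the issue entirely. It first observes (Lemma~\ref{lem:conditional_exp}) that when $B_t\ge b^*_{v,F}$ the instantaneous regret $S_t$ is supported on $\{M_t\le B_t\}$ with
\[
\E\bigl[S_t\,\1(G_t)\mid\mathcal F_{t-1}\vee\sigma(\1\{M_t\le B_t\})\bigr]\le \frac{U(b^*)-U(B_t)}{F(b^*)}\,\1\{M_t\le B_t\}\,\1(G_t),
\]
so one only needs to control $\sum_t\epsilon_t\,\1\{M_t\le B_t\}$. This sum is then bounded \emph{deterministically} (Lemma~\ref{lem:sum_1_n}): because $N_t$ increments exactly when $M_t\le B_t$, one has $\sum_t N_t^{-1/2}\1\{M_t\le B_t\}\le\sum_{n=1}^{N_T}n^{-1/2}=O(\sqrt T)$, with no lower bound on $N_t$ and no dependence on past good events. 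That is what makes $\gamma>1$ sufficient and what replaces the Chernoff step in your plan.
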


Note that $\sqrt{T}$ is the order of the regret of UCB strategies
designed for second-price auctions in the absence of regularity
assumptions on $F$ \citep{weed2016online}. However, under the
regularity assumptions introduced in Section \ref{sec:properties}, it
is possible to achieve faster learning rates.

\begin{theorem}\label{th:FPUCBID_pseudo_mhr}
If   $F$ satisfies Assumption \ref{ass:pseudo-mhr} and \ref{ass:lambda_pseudo-mhr}, then, for any $\gamma>1$,
$$
 R_T^{v,F}\leq  \frac{2\gamma \lambda C_f^2}{F(b^*_{v,F}) c_f}\log^2(T)  + O(\log T) .
$$
\end{theorem}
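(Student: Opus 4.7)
The plan is to refine the argument underlying Theorem \ref{th:FPUCBID_general} by exploiting the local quadratic structure of the utility provided by Assumptions \ref{ass:pseudo-mhr}--\ref{ass:lambda_pseudo-mhr}. Let $F^* := F(b^*_{v,F})$ and introduce the good event $\mathcal{E}_t := \{|\hat V_t - v| \leq \epsilon_t\}$. First I would verify that on $\mathcal{E}_t$ the UCBid1 bid satisfies $B_t = \psi_F(\hat V_t + \epsilon_t) \geq \psi_F(v) = b^*_{v,F}$ by Lemma \ref{lem:psi_F}, and that $B_t - b^*_{v,F} \leq (\hat V_t + \epsilon_t) - v \leq 2\epsilon_t$ by the 1-Lipschitz property from Lemma \ref{lem:lipschitz}. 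A Hoeffding bound on $\hat V_t$ applied conditionally on $N_t$, combined with a union bound over values of $N_t$, controls $\Po(\mathcal{E}_t^c)$, and the contribution of $\mathcal{E}_t^c$ to the regret is absorbed into the $O(\log T)$ lower-order term.

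On $\mathcal{E}_t$, since $F(B_t) \geq q^*_{v,F}$, I would invoke the quadratic upper bound of Lemma \ref{lem:sub_quadratic} and chain it with $f \leq C_f$ together with the Lipschitz bound just derived:
\[ U_{v,F}(b^*_{v,F}) - U_{v,F}(B_t) \leq \frac{\lambda}{c_f}\bigl(F(B_t) - q^*_{v,F}\bigr)^2 \leq \frac{\lambda C_f^2}{c_f}(B_t - b^*_{v,F})^2 \leq \frac{4\lambda C_f^2}{c_f}\epsilon_t^2 = \frac{2\gamma \lambda C_f^2}{c_f}\cdot\frac{\log(t-1)}{N_t}. \]
To sum this over $t$, I would exploit that $B_s \geq b^*_{v,F}$ on $\mathcal{E}_s$ implies $\1\{M_s \leq B_s\} \geq \1\{M_s \leq b^*_{v,F}\}$, the latter being Bernoulli$(F^*)$ variables independent of the history. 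A Chernoff bound then yields $N_t \geq F^*(t-1)/2$ with exponentially high probability, leading to
\[ R_T^{v,F} \leq \frac{2\gamma\lambda C_f^2}{c_f}\cdot\frac{2}{F^*}\sum_{t=2}^T \frac{\log(t-1)}{t-1} + O(\log T) \leq \frac{2\gamma\lambda C_f^2}{c_f F^*}\log^2(T) + O(\log T), \]
using the standard estimate $\sum_{t=2}^T \log(t-1)/(t-1) \leq \frac{1}{2}\log^2 T$.

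The main obstacle will be the careful joint handling of the two concentration events — the one on $\hat V_t$ guaranteeing overbidding and the one on $N_t$ guaranteeing enough winning rounds — whose failure probabilities are dependent but must be combined in expectation without inflating the leading constant $2\gamma\lambda C_f^2/(c_f F^*)$. A secondary subtlety is that the naive Hoeffding-plus-union-bound control of $\Po(\mathcal{E}_t^c)$ scales as $O(t^{1-\gamma})$, which is summable only when $\gamma > 2$; to retain the stated threshold $\gamma > 1$, one should either sharpen the confidence bound via a peeling or self-normalised concentration argument, or absorb the low-probability contribution into the $O(\log T)$ term by exploiting the uniform boundedness of the per-round regret.
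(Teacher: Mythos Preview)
Your high-level strategy matches the paper's: on the good event $\mathcal{E}_t$, Lemma~\ref{lem:lipschitz} gives $0 \leq B_t - b^*_{v,F} \leq 2\epsilon_t$, and Lemma~\ref{lem:sub_quadratic} converts this into a per-round regret of at most $\frac{2\gamma\lambda C_f^2}{c_f}\cdot\frac{\log(t-1)}{N_t}$. The difference is in how the sum $\sum_t \log(t-1)/N_t$ is controlled, and the paper's route sidesteps both obstacles you flag.

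You propose a Chernoff lower bound $N_t \geq F^*(t-1)/2$, which forces you to guarantee $B_s \geq b^*_{v,F}$ for \emph{all} past rounds $s<t$ and hence to control the intersection $\bigcap_{s<t}\mathcal{E}_s$, creating exactly the dependence you identify as the main obstacle. The paper avoids this entirely via Lemma~\ref{lem:conditional_exp}: when $B_t \geq b^*_{v,F}$, the instantaneous regret $S_t$ vanishes unless $M_t \leq B_t$, so
\[
\E[S_t\,\1(\mathcal{E}_t)] \;\leq\; \frac{1}{F(b^*_{v,F})}\,\E\!\left[(U(b^*_{v,F})-U(B_t))\,\1\{M_t\leq B_t\}\,\1(\mathcal{E}_t)\right].
\]
The indicator $\1\{M_t\leq B_t\}$ is precisely what increments $N_t$, so the sum $\sum_t \frac{1}{N_t}\1\{M_t\leq B_t\}$ telescopes \emph{pathwise} to $\sum_{n=1}^{N_T} 1/n \leq 1+\log T$ (Lemma~\ref{lem:sum_1_n}). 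No separate concentration on $N_t$ is needed, and only the per-round event $\mathcal{E}_t$ enters --- never the intersection.

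Your secondary subtlety is also resolved in the paper: Lemma~\ref{lem:concentration_V} packages a peeling/maximal inequality giving $\Po(\mathcal{E}_t^c) = O((\log t)\,t^{-\gamma})$ directly, summable for any $\gamma>1$. One further small point you omit: Lemma~\ref{lem:sub_quadratic} applies only when $B_t - b^*_{v,F} \leq \Delta$; the paper handles the finitely many rounds where $2\epsilon_t > \Delta$ (equivalently $N_t < 2\gamma\log T/\Delta^2$) separately via the same telescoping device, contributing $O(\log T)$.
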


The $\log^2(T)$ rate of the regret comes from the Lipschitz nature of
$\psi_F$, that makes it possible to bound the gap $B_t-b^*_{v,F}$, and
from the obervation that the utility is quadratic around its optimum.
This explains the similarity with the order of the regret of UCBID in
\citep{weed2016online}, when the distribution of the bids admits a
bounded density.  Indeed, in second-price-auctions, when the
distribution of the bids admits a bounded density, the utility is
locally quadratic around its maximum and the equivalent of $\psi_F$ is
the identity, meaning that the optimal bid is just the value $v$ of
the item.  The presence of the multiplicative constant
$1/F(b^*_{v,F})$ is also expected: it is the average time between two
successive observations under the optimal
policy. 
 This similarity between the structures of second and first price
auctions under Assumptions \ref{ass:pseudo-mhr} and
\ref{ass:lambda_pseudo-mhr} also suggest that the constants in the
regret may be further improved by using a tighter confidence interval
for $v$ based on Kullback-Leibler divergence, proceeding as in
\citep{achddou2021efficient}.


Under Assumption \ref{ass:pseudo-mhr}, the regret of any optimistic
strategy can be shown to satisfy the following lower bound.

\begin{theorem}\label{th:parametric_lower_bound}
Consider all environments where $V_t$ follows a Bernoulli distribution with expectation $v$ and $F$ satisfies Assumption \ref{ass:pseudo-mhr} and is such that $\phi' \leq \lambda$, and there exists $c_f$ and $C_f$ such that $0<c_f<f(b)<C_f, ~ \forall b \in [0,1]$.
If a strategy is such that, for all such environments,
$R_T^{v,F}\leq O(T^{a})$, for all $a>0$, 
and there exists $\gamma >0$ such that $\Po(B_t< b^*)<t^{-\gamma}$,
 then this strategy must satisfy:
\begin{align*}
& \liminf_{T \rightarrow \infty}  \frac{R_T^{v,F}}{\log T}\geq c_f^2 \lambda ^2\left(\frac{v(1-v)(v- b^*_{v,F})}{32} \right).
\end{align*}
\end{theorem}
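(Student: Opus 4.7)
The proof will follow the Lai--Robbins change-of-measure paradigm, adapted to the continuously-armed, censored-feedback setting of first-price auctions. The idea is to exhibit a nearby alternative environment in which the algorithm is forced to be significantly suboptimal, and then leverage the information-theoretic constraint between indistinguishability and regret. Since the value $V_t$ is only observed on winning rounds, the total informational budget available to the strategy is $\E[N_T]\,d(v,v')$, where $N_T=\sum_t\1\{M_t\leq B_t\}$ is the number of wins and $d(v,v')$ is the Kullback--Leibler divergence between $\mathrm{Ber}(v)$ and $\mathrm{Ber}(v')$.

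\textbf{Alternative environment and information inequality.} I fix $\eta>0$ small and set $v'=\phi_F(b^*_{v,F}+\eta)$, so that the pair $(v',F)$ has optimal bid $b^{*'}:=b^*_{v',F}=b^*_{v,F}+\eta$. Under Assumption \ref{ass:pseudo-mhr} combined with $\phi_F'\leq\lambda$, the mean-value theorem yields $\eta\leq v'-v\leq\lambda\eta$, and $(v',F)$ lies in the same admissible class, so the hypotheses $R_T^{v',F}=O(T^a)$ and $\Po(B_t<b^{*'})<t^{-\gamma}$ also apply there. By the standard change-of-measure identity of Kaufmann--Cappé--Garivier, for every observation-measurable event $A_T$,
\begin{equation*}
\mathrm{kl}\bigl(\Po_{v,F}(A_T),\,\Po_{v',F}(A_T)\bigr)\leq \E_{v,F}[N_T]\cdot d(v,v').
\end{equation*}
A natural choice is $A_T=\bigl\{\#\{t\leq T:B_t\leq b^*_{v,F}+\eta/2\}\geq T/2\bigr\}$. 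Under $(v,F)$, the optimism assumption combined with the $O(T^a)$ regret forces $B_t$ to concentrate near $b^*_{v,F}$, whence $\Po_{v,F}(A_T)\to 1$. Under $(v',F)$, every bid $B_t\leq b^*_{v,F}+\eta/2$ is at least $\eta/2$ below $b^{*'}$, and Lemma \ref{lem:quadratic} together with the density bound $f\geq c_f$ shows that each such bid costs at least a constant multiple of $c_f^2\eta^2$ per round; Markov's inequality applied to $R_T^{v',F}=O(T^a)$ then gives $\Po_{v',F}(A_T)=O(T^{a-1}/\eta^2)$.

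\textbf{Conversion to regret and main obstacle.} Plugging these two probabilities into the information inequality and using the asymptotic $\mathrm{kl}(1-o(1),q)=\log(1/q)(1+o(1))$ yields $\E_{v,F}[N_T]\cdot d(v,v')\geq (1-a)\log T+O(1)$. The main difficulty is then to translate this informational constraint into a regret lower bound: since $N_T$ counts wins rather than suboptimal plays, the classical identity $R_T=\sum_a\Delta_a\,\E[T_a]$ does not apply, and one must use the quadratic lower bound from Lemma \ref{lem:quadratic} together with the bid-shift estimate $b^{*'}-b^*_{v,F}\geq\eta$ to charge each ``confused'' bid in the true environment for its proper per-round regret. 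Combining this with the Taylor expansion $d(v,v')\leq(v'-v)^2/(v(1-v))\cdot(1+o(1))\leq \lambda^2\eta^2/(v(1-v))(1+o(1))$, optimising over $\eta$, and carefully tracking the constants from the quadratic and Lipschitz arguments should yield the $\log T$ lower bound with the stated coefficient $c_f^2\lambda^2 v(1-v)(v-b^*_{v,F})/32$. The delicate part of the argument is precisely this constant chasing --- in particular, making sure that the upper bound $v'-v\leq\lambda\eta$ is used exactly where it keeps $d(v,v')$ comparable to $\eta^2$, so that the $\lambda^2$ factor comes out in the numerator of the final bound.
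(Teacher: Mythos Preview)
Your proposal correctly flags its own weak point, but the proposed fix does not close the gap. The information inequality you derive lower-bounds $\E_{v,F}[N_T]$, the expected number of \emph{winning} rounds, and you then need to convert this into a regret lower bound. In the classical Lai--Robbins argument each sample from a suboptimal arm simultaneously yields information and costs $\Delta_a$ in regret, so $R_T\geq\Delta_a\,\E[N_a(T)]$. Here that coupling is absent: an observation of $V_t$ is obtained on every win, regardless of whether the bid was optimal; bidding exactly $b^*_{v,F}$ wins a fraction $F(b^*_{v,F})$ of rounds, harvests full information about $v$, and incurs zero regret. A bound of the form $\E_{v,F}[N_T]\gtrsim v(1-v)\log T/(\lambda^2\eta^2)$ is therefore compatible with $R_T^{v,F}=0$, and since $\E_{v,F}[N_T]=\Theta(T)$ for any consistent strategy the inequality is vacuous for every fixed $\eta$. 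No optimisation over $\eta$ repairs this, and ``charging each confused bid in the true environment'' is not a concrete mechanism. Notice also that your argument, as written, never really uses the optimism hypothesis $\Po(B_t<b^*)<t^{-\gamma}$: both $\Po_{v,F}(A_T)\to1$ and $\Po_{v',F}(A_T)\to0$ follow from consistency alone.

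The paper's proof avoids this by working \emph{per round} and reversing the logic. For each $t$ it introduces a time-dependent alternative $v'_t=v+\sqrt{v(1-v)/(F(b^*)t)}$, scaled so that $\E_v[N_t]\,kl(v,v'_t)$ is of order one. The optimism hypothesis is applied in the \emph{alternative} environment: $\Po_{v'_t}(B_t<b^*_{v'_t,F})<t^{-\gamma}$ forces $B_t$ to exceed the midpoint $m_t=(b^*_{v,F}+b^*_{v'_t,F})/2$ with high probability there. Pinsker's inequality (via data processing) then transfers this to the true environment, yielding $\Po_{v,F}(B_t>m_t)\geq 1-\sqrt{\E_v[N_t]\,kl(v,v'_t)/2}-t^{-\gamma}$, which stays bounded away from zero because of the scaling of $v'_t$ together with $\E_v[N_t]/t\to F(b^*)$ (itself a consequence of consistency). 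This immediately gives a per-round bound $\E_v[(B_t-b^*_{v,F})^2]\geq(m_t-b^*_{v,F})^2\,\Po_{v,F}(B_t>m_t)\gtrsim 1/t$, and summing over $t$ followed by Lemma~\ref{lem:quadratic} delivers the $\log T$ regret. The step your outline is missing is precisely this per-round transfer of the overshoot probability from the alternative to the true environment; it is what makes the optimism assumption do real work.
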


The first assumption, $R_T^{v,F}\leq O(T^{a})$, is a common
consistency constraint that is used when proving the lower bound of
\cite{lai1985asymptotically} in the well-established theory of
multi-armed bandits. The second assumption, $\Po(B_t< v)<t^{-\gamma}$,
restricts the validity of the lower bound to the class of strategies
that overbid with high probability. By construction, this assumption
is satisfied for UCBid1.

Note that there is a gap between the rates $\log T$ in the lower bound
(Theorem~\ref{th:parametric_lower_bound}) and $\log^2T$ in the
performance bound of UCBid1 (Theorem~\ref{th:FPUCBID_pseudo_mhr}),
which we believe is mostly due to the mathematical difficulty of the
analysis. The $v(1-v)$ factor may be interpreted as an upper bound on
the variance of the value distribution with expectation $v$. Theorem
\ref{th:parametric_lower_bound} displays a dependence on $v$ of the
order of $v^2$ when $v$ tends to 0. However this has to be put in
perspective with the fact that the value of the optimal utility
$U_{v,F}(b^*_{v,F})$ is also quadratic in $v$, when $v$ tends to zero
under the assumptions of Theorem~\ref{th:parametric_lower_bound} (from
Lemma~\ref{lem:quadratic}).

\section{Unknown Bid Distribution}\label{sec:unknown_F}
We now turn to the more realistic, but harder, setting where both the
parameter $v$ and and the function $F$ need to be estimated
simultaneously. For this setting, we propose the following algorithm,
which is a natural adaptation of UCBid1, simply plugging in the empirical
CDF in place of the unknown $F$.

It may come as a surprise that we do not add any optimistic bonus to
the estimate $\hat{F}_t$: it is not necessary to be optimistic about
$F$ since the observation $M_t$ drawn according to $F$ is observed at
each time step whatever the bid submitted.

\begin{algo}[UCBid1+]\label{th:V-opt}
Submit a bid equal to $1$ in the first round, then bid: $$ B_t = \max\Big\{  \argmax_{b \in [0,1]}(\hat{V}_t + \epsilon_t -b)\hat{F}_t(b)\Big\},$$ 
where $\hat{F}_t(b) := \frac{1}{t-1}\sum_{s=1}^{t-1} \1\{M_s<b\}$ and $\epsilon_t := \sqrt{{\gamma \log(t-1)}/{2 N_t}}.$
\end{algo}

Although $B_t$ produced by Algorithm~\ref{th:V-opt} could, in
principle, be arbitrarily small, it is possible to show that there is
no extinction of the observation process. Indeed, after a time that
only depends on $v$ and $F$, $F(B_t)$ is guaranteed to be higher than
a strictly positive fraction of $F(b^*_{v,F})$ with high probability
(see Lemma~\ref{lem:bound_F} in Appendix
\ref{sec:app_unknown_F}). This result implies that the number of
successful auctions $N_t$ asymptotically grows at a linear rate (with
high probability), making it possible to bound the expected difference
between $\hat{V}_t + \epsilon_t$ and $v$. Combined with the DKW
inequality \citep{massart1990tight}, this allows to bound the difference between the utility
and $(\hat{V}_t + \epsilon_t -b)\hat{F}_t(b)$ in infinite norm and
hence the difference between $B_t$ and $b^*_{v,F}$. Putting all the
pieces together (see the complete proof in
Appendix~\ref{sec:app_unknown_F}) yields the following upper bound on
the regret of UCBid1+.
 
\begin{theorem}\label{th:UCBid1+_gen}
 UCBid1+ incurs a regret bounded by
$$R_T^{v,F}
\leq 12 \sqrt{\frac{\gamma v}{U_{v,F}(b^*_{v,F})}}  \sqrt{T \log T}   + O(\log T),
$$
provided that $\gamma>2$.
\end{theorem}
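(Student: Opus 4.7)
The plan is a UCB-style regret decomposition, adapted to handle the simultaneous estimation of $v$ and of $F$. First I set up a good event $\mathcal{E}$ on which, at every round $t$: (i) a time-uniform Hoeffding bound (peeling over the random value of $N_t$) yields $|\hat V_t - v| \leq \epsilon_t$, hence $v \leq \hat V_t+\epsilon_t \leq v + 2\epsilon_t$; (ii) the Dvoretzky--Kiefer--Wolfowitz inequality yields $\|\hat F_t - F\|_\infty \leq \Delta_t$ with $\Delta_t$ of order $\sqrt{\log t/t}$. Taking $\gamma>2$ makes $\sum_t \Po(\mathcal{E}_t^c)$ finite by a union bound, so the regret incurred on bad rounds is $O(1)$ and can be absorbed in the final $O(\log T)$ remainder.

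On $\mathcal{E}$, I bound the per-round regret $U^*-U_{v,F}(B_t)$, where $U^*=U_{v,F}(b^*_{v,F})$. Let $G_t(b) := (\hat V_t + \epsilon_t - b)\hat F_t(b)$ be the index maximised by $B_t$. Optimism and the trivial fact $b^* \leq v$ (since $U^*\geq 0$ and $F(b^*)>0$) yield
\[ G_t(b^*) \;\geq\; (v-b^*)\hat F_t(b^*) \;\geq\; (v-b^*)F(b^*) - \Delta_t \;=\; U^* - \Delta_t, \]
so $G_t(B_t)\geq U^*-\Delta_t$ by optimality of $B_t$. In the converse direction, expanding $G_t(B_t)$ around $U_{v,F}(B_t) = (v-B_t)F(B_t)$ and bounding the residuals by (i)--(ii) gives
\[ G_t(B_t) \;\leq\; U_{v,F}(B_t) + 2\epsilon_t + \Delta_t. \]
Subtracting the two inequalities produces the key per-round estimate $U^*-U_{v,F}(B_t) \leq 2\epsilon_t + 2\Delta_t$.

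The main obstacle is that $\epsilon_t = \sqrt{\gamma\log(t-1)/(2N_t)}$ carries the random denominator $N_t$, so an explicit rate requires a lower bound on $N_t$. This is precisely the role of Lemma~\ref{lem:bound_F}: the chain above already implies $\hat F_t(B_t) \geq (U^*-\Delta_t)/(\hat V_t+\epsilon_t)$, hence, by (i)--(ii), $F(B_t) \geq U^*/(2v)$ as soon as $t$ exceeds a threshold depending only on $(v,F)$. A Chernoff bound on $\sum_{s\leq t}\1\{M_s\leq B_s\}$ then turns this per-round lower bound into $N_t \geq c_F\, t$ with $c_F$ of order $U^*/v$, outside of an event of summable probability. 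On this refined good event we therefore have $\epsilon_t \lesssim \sqrt{\gamma v \log t/(U^*\, t)}$.

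Summing the per-round bound and using $\sum_{t\leq T}\sqrt{\log t/t} = O(\sqrt{T\log T})$, the $\epsilon_t$ contribution is of order $\sqrt{\gamma v/U^*}\,\sqrt{T\log T}$ and dominates the $\Delta_t$ contribution (because $U^*\leq v$). Tracking the numerical constants originating from the peeled Hoeffding bound, from DKW, and from the burn-in threshold of Lemma~\ref{lem:bound_F} then yields the advertised prefactor $12\sqrt{\gamma v/U_{v,F}(b^*_{v,F})}$, with the initial phase and the complement of the good event only contributing $O(\log T)$.
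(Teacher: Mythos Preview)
Your proposal is correct and follows essentially the same route as the paper's proof: a good event combining a time-uniform Hoeffding bound on $\hat V_t$ with a DKW bound on $\hat F_t$, a per-round regret estimate of order $\epsilon_t$ obtained from the optimality of $B_t$, the lower bound $F(B_t)\gtrsim U^*/v$ (the paper's Lemma~\ref{lem:bound_F}, noting $F(b^*)/(2\alpha)=U^*/(2v)$), a Chernoff bound to make $N_t$ linear in $t$, and a final summation. Your derivation of the per-round bound via $G_t(B_t)\geq G_t(b^*)\geq U^*-\Delta_t$ is slightly more direct than the paper's use of the $\sup$-norm argument (Lemma~\ref{lem:dist_U_max}), but the two are equivalent and yield the same leading constant.
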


Note that computing the bid $B_t$ for UCBid1+ is easy, as
$(\hat{V}_t + \epsilon_t -b)\hat{F}_t(b)$ necessarily lies among the
observed bids because this function is linearly decreasing
between observed bids. More precisely,
$(\hat{V}_t + \epsilon_t -b)\hat{F}_t(b)) =
\hat{F}_t(M^{(i)})(\hat{V}_t + \epsilon_t-b),$ for
$b \in [M^{(i)}, M^{(i+1)}[$, where $M^{(i)}$ is the i-th order
statistic of the observed bids (obtained by sorting the bids in
ascending order). However, as there is no obvious way to update $B_t$
sequentially, this results in a complexity of UCBid1+ that grows
quadratically with the time horizon $T$.
 
The proof of Theorem \ref{th:UCBid1+_gen} relies on the DKW
inequality to bound the difference between $B_t$ and $b^*$. This happens
to be very conservative and a little misleading in practice. Indeed,
what really matters is the local behavior of the empirical utility,
and hence, of $\hat{F}_t$ around $b^*$. As illustrated by Figure
\ref{fig:zoom_cdf}, locally, $\hat{F}_t$ is roughly a translation of
$F$ plus a negligible perturbation which can be bounded in infinite
norm. This intuition is formalized in~Lemma
\ref{lem:local_concentration_inequality}, a localized version of
the DKW inequality. The fact that $\hat{F}_t$ is locally almost parallel to $F$
imposes a constraint on $B_t$ that may be used to bound its distance from
$b^*$, yielding an improved regret rate under Assumptions
\ref{ass:pseudo-mhr} and \ref{ass:lambda_pseudo-mhr}, as shown by
Theorem \ref{th:fast_rate}.

\begin{lemma}
\label{lem:local_concentration_inequality}
For any $a,b\in [0,1]$, if $F$ is increasing,
\begin{multline*}\sup_{a\leq x \leq b}|\hat{F}_t(x) - F(x) - (\hat{F}_t(a)- F(a))| \\\leq \sqrt{\frac{2(F(b)- F(a))\log
\left( \frac{e \sqrt{t}}{ \eta \sqrt{2(F(b)- F(a))}}\right)
}
{t}} + \frac{\log(\frac{t}{2(F(b)- F(a) \eta^2 })}{6 t},
\end{multline*}
with probability $1-\eta$.
\end{lemma}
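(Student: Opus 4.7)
The plan is to prove this localized refinement of DKW by combining a Bernstein bound (whose variance term benefits from the locality) with a discretization argument that exploits the monotonicity of both $\hat F_t$ and $F$.

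First I would rewrite the quantity of interest in a form that makes the local variance visible. Define $W_s(x) := \1\{M_s \in (a,x]\} - (F(x)-F(a))$, so that
\begin{equation*}
\hat F_t(x)-F(x) - (\hat F_t(a) - F(a)) = \frac{1}{t}\sum_{s=1}^{t} W_s(x).
\end{equation*}
The $W_s(x)$ are i.i.d., centered, bounded by $1$ in absolute value, and with variance $(F(x)-F(a))(1-(F(x)-F(a))) \leq p$, where $p := F(b)-F(a)$. This is the key improvement over the global DKW setting, where only $\mathrm{Var} \leq 1/4$ is available. Bernstein's inequality then gives, for each fixed $x$ and any $\delta \in (0,1)$,
\begin{equation*}
\Pr\!\left(\left|\frac{1}{t}\sum_{s=1}^t W_s(x)\right| > \sqrt{\frac{2p\log(2/\delta)}{t}} + \frac{\log(2/\delta)}{3t}\right) \leq \delta.
\end{equation*}

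Second, I would handle the supremum by introducing a grid $a = x_0 < x_1 < \dots < x_N = b$ with $F(x_{i+1}) - F(x_i) \leq 1/t$ for each $i$, which requires $N \leq \lceil tp \rceil + 1$ points. Since the maps $x \mapsto \hat F_t(x) - \hat F_t(a)$ and $x \mapsto F(x) - F(a)$ are both non-decreasing, for any $x \in [x_i, x_{i+1}]$ the deviation at $x$ differs from the deviation at one of $x_i$ or $x_{i+1}$ by at most $1/t$, i.e.\ an amount negligible compared to the Bernstein rate. A union bound over the $N \sim tp$ grid points, applied with $\delta = \eta \cdot c/N$ for an appropriate $c$, then produces a logarithmic factor of order $\log(tp/\eta)$; a short algebraic simplification should recast this as the announced $\log(e\sqrt{t}/(\eta\sqrt{2p}))$ inside the square root and $\log(t/(2p\eta^2))/(6t)$ for the lower-order term.

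The main obstacle is matching the precise constants: in particular, obtaining $\eta^{2}$ rather than $\eta$ inside the second logarithm, and the constant $1/6$ (half of the usual $1/3$ in Bernstein). The most likely source of these refinements is a two-pronged high-probability event—one controlling the effective number of points falling in $[a,b]$ (so that the grid size $N$ is itself replaced by $\lesssim \sqrt{tp/\eta}$, producing the $\sqrt{t/(2p)}/\eta$ factor inside the logarithm) and one controlling the Bernstein deviation at each grid point—each holding with probability at least $1-\eta$, whose intersection yields the $\eta^{2}$. A careful bookkeeping of the one-sided versus two-sided constants in Bernstein, together with absorbing the $1/t$ discretization error, should then deliver the stated bound without further difficulty.
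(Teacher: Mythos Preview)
Your overall scaffold---Bernstein at grid points, a union bound, then monotone interpolation to pass to the supremum---is exactly the paper's approach. The gap is in the last paragraph: your guess about the origin of the $\eta^2$ and the $1/6$ is wrong, and following that route will not recover the stated constants.

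The paper does \emph{not} use a two-pronged event or any control on the random number of points in $[a,b]$. It simply chooses a coarser grid than you do: instead of your spacing $1/t$ (giving $N\approx tp$ grid points), it takes
\[
m=\Big\lceil \sqrt{\tfrac{t}{2p}}\Big\rceil,\qquad p=F(b)-F(a),
\]
so that the interpolation error is $1/m\approx\sqrt{2p/t}$, of the same order as the Bernstein main term. With $z=\log(m/\eta)$ in the union bound, the Bernstein deviation $\sqrt{2pz/t}+z/(3t)$ plus the interpolation error $1/m$ becomes
\[
\sqrt{\tfrac{2p\log(m/\eta)}{t}}+\sqrt{\tfrac{2p}{t}}+\tfrac{\log(m/\eta)}{3t}
\;\le\;\sqrt{\tfrac{2p\log(e\,m/\eta)}{t}}+\tfrac{\log(m/\eta)}{3t}.
\]
Now plug in $m\approx\sqrt{t/(2p)}$: the argument of the first logarithm becomes $e\sqrt{t}/(\eta\sqrt{2p})$, and for the second term the identity $\tfrac{1}{3}\log(\sqrt{A}/\eta)=\tfrac{1}{6}\log(A/\eta^2)$ with $A=t/(2p)$ produces both the $\eta^2$ and the factor $1/6$ in one stroke. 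No second high-probability event is needed. Your finer grid $N\approx tp$ would give a valid bound, but with $\log(tp/\eta)$ inside both terms, which does not match the statement.
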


\begin{figure}[htb] \centering
\includegraphics[width=.95\textwidth]{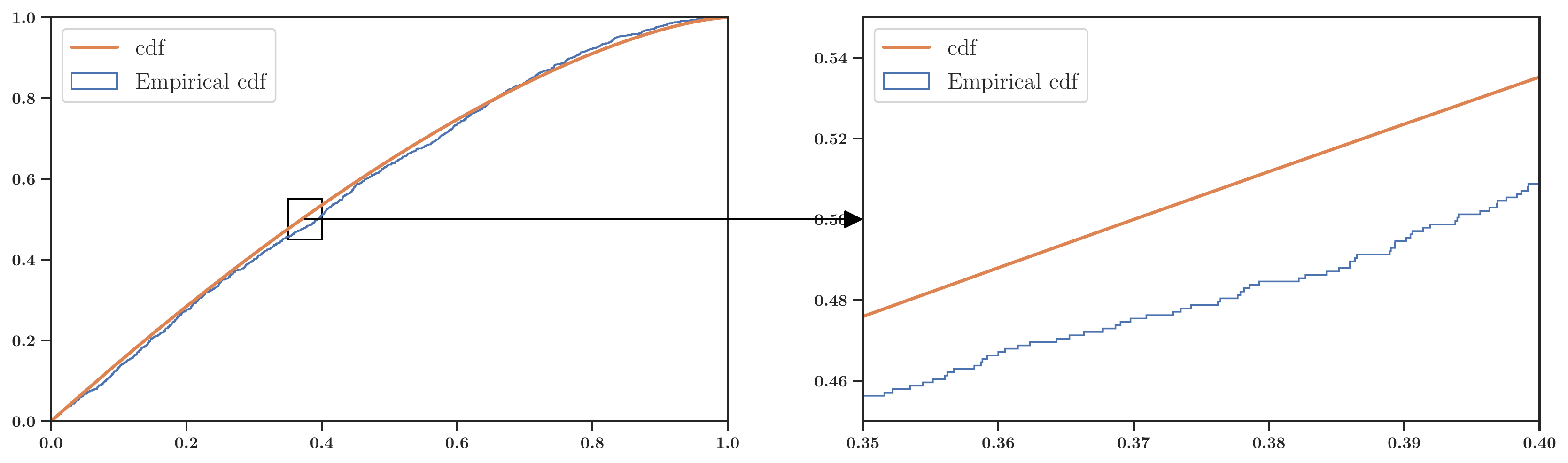}
\caption{Local behavior of the empirical CDF}
\label{fig:zoom_cdf}
\end{figure}

\begin{theorem}\label{th:fast_rate}
If $F$ satisfies Assumptions \ref{ass:pseudo-mhr} and \ref{ass:lambda_pseudo-mhr}, UCBid1+ incurs a regret bounded by
$$R_T^{v,F}
\leq O\big(T^{1/3+\epsilon}\big),
$$
for any $\epsilon>0$,
provided that $\gamma>2$.
\end{theorem}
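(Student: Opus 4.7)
The plan is to sharpen the proof of Theorem \ref{th:UCBid1+_gen} by replacing the global DKW bound, which is responsible for the $\sqrt{T}$ rate, with the localized concentration of Lemma \ref{lem:local_concentration_inequality} combined with the quadratic behaviour of the utility guaranteed by Assumptions \ref{ass:pseudo-mhr}--\ref{ass:lambda_pseudo-mhr}. The key observation is that on a shrinking window around $b^*_{v,F}$, the empirical CDF $\hat{F}_t$ is essentially a vertical translate of $F$, with a residual much smaller than $\|\hat{F}_t - F\|_\infty$, so that the empirical maximizer $B_t$ is pinned to $b^*_{v,F}$ far more tightly than the DKW inequality alone would suggest.

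I would first set up the usual high-probability scaffolding. After a burn-in depending only on $v$ and $F$, Lemma \ref{lem:bound_F} (from Appendix \ref{sec:app_unknown_F}) gives $N_t \geq \tfrac{1}{2} F(b^*_{v,F})\,t$, hence $\epsilon_t = \tilde{O}(t^{-1/2})$ and $|\hat{V}_t + \epsilon_t - v| \leq \tilde{O}(t^{-1/2})$, all with probability at least $1 - t^{-2}$. Combining this with the global DKW inequality and Lemma \ref{lem:quadratic} as in the proof of Theorem \ref{th:UCBid1+_gen} already yields a crude bound $|B_t - b^*_{v,F}| \leq \rho_t^{(0)} = \tilde{O}(t^{-1/4})$, and for $t$ large this window lies inside the neighbourhood on which Assumption \ref{ass:lambda_pseudo-mhr} holds.

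The core of the argument is a bootstrap. Assuming inductively that $|B_t - b^*_{v,F}| \leq \Delta_t$ for some $\Delta_t\to 0$, I apply Lemma \ref{lem:local_concentration_inequality} with $\eta = t^{-2}$ on the window $[b^*_{v,F} - \Delta_t,\, b^*_{v,F} + \Delta_t]$ to write $\hat{F}_t(b) = F(b) + G_t + \xi_t(b)$ with $|G_t| = \tilde{O}(t^{-1/2})$ and $\|\xi_t\|_\infty = \tilde{O}(\sqrt{\Delta_t/t})$. The empirical objective then splits as $h_t(b) + (\hat{V}_t + \epsilon_t - b)\,\xi_t(b)$, where $h_t(b) := (\hat{V}_t + \epsilon_t - b)(F(b) + G_t)$ is the exact utility for the perturbed pair $(v + e_t,\, F + G_t)$. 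A perturbation analysis using that $\phi_F^{-1}$ is $1$-Lipschitz (from $\phi_F' \geq 1$, see the discussion following Lemma \ref{lem:lipschitz}) places the maximizer $b_t^{**}$ of $h_t$ at distance $\tilde{O}(t^{-1/2})$ of $b^*_{v,F}$, and the bid-space analogue of Lemma \ref{lem:quadratic} gives a lower bound $h_t(b_t^{**}) - h_t(b) \geq \kappa\,(b - b_t^{**})^2$ on the window. Comparing $h_t(B_t)$ to $h_t(b_t^{**})$ via the suboptimality of $B_t$ as an argmax and absorbing the cross term by $2\|\xi_t\|_\infty$ yields $(B_t - b_t^{**})^2 \leq \tilde{O}(\sqrt{\Delta_t/t})$, hence $|B_t - b^*_{v,F}| \leq \tilde{O}(\Delta_t^{1/4} t^{-1/4} + t^{-1/2})$. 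Writing $\Delta_t = t^{-\beta}$, this is the improvement map $\beta \mapsto (\beta + 1)/4$, whose fixed point is exactly $\beta = 1/3$; iterating it finitely many times starting from $\beta_0 = 1/4$ pushes the exponent above $1/3 - \epsilon/2$ for any prescribed $\epsilon > 0$.

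Plugging the sharpened bound $|B_t - b^*_{v,F}| \leq \tilde{O}(t^{-1/3 + \epsilon/2})$ into Lemma \ref{lem:sub_quadratic}, translated to bid space via $f \leq C_f$, gives an instantaneous regret of $\tilde{O}(t^{-2/3 + \epsilon})$; summing over $t \leq T$ yields the announced $\tilde{O}(T^{1/3 + \epsilon})$, the residual polylogarithmic factors being absorbed into the arbitrary $\epsilon$, and the failure probabilities of all invoked high-probability events sum to $O(1)$. The main obstacle is the bootstrap itself: one has to verify that the quadratic constant $\kappa$, the displacement of $b_t^{**}$ and the local concentration rate all hold on the same event uniformly in $t$ as the window shrinks, and that the log factors introduced at each iteration of the map $\beta \mapsto (\beta+1)/4$ remain under control; this uniformity is exactly what forces the loss of an arbitrarily small polynomial factor and explains why the statement is $T^{1/3 + \epsilon}$ rather than $T^{1/3}$.
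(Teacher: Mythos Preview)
Your proposal is correct and follows essentially the same strategy as the paper: the same high-probability scaffolding, the same crude $\tilde O(t^{-1/4})$ initialization from DKW plus Lemma~\ref{lem:quadratic}, the same bootstrap driven by the localized concentration of Lemma~\ref{lem:local_concentration_inequality}, the same iteration map $\beta\mapsto(1+\beta)/4$ with fixed point $1/3$, and the same conversion to regret via the bid-space analogue of Lemma~\ref{lem:sub_quadratic}. The only cosmetic difference is in how one step of the bootstrap is executed: the paper shifts the true utility by the constant $U^{UCBid1+}_t(b^*)-U(b^*)$ so that its argmax stays at $b^*$ and then compares to $U^{UCBid1+}_t$ on the window (Proposition~\ref{prop:one_iteration}), whereas you split $\hat F_t=F+G_t+\xi_t$, introduce the perturbed maximizer $b_t^{**}$ of $h_t$, and control $|B_t-b_t^{**}|$ and $|b_t^{**}-b^*|$ separately; the paper's variant spares you the verification that $h_t$ inherits the quadratic lower bound, but both routes yield the same recursion.
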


UCBid1+ thus retains the adaptivity of UCBid1. In general, its regret is of the
order of $\sqrt{T}$ (omitting logarithmic terms), matching the lower bound of Theorem
\ref{th:lower_bound_gen}. But it is reduced to $T^{1/3+\epsilon}$,
for any $\epsilon>0$, in the smooth case defined by Assumptions
\ref{ass:pseudo-mhr} and \ref{ass:lambda_pseudo-mhr}.  In practice,
the improvement over other $\sqrt{T}$-regret algorithms is huge, as shown
in~
the next section. 

\section{Numerical simulations}\label{sec:experiments}

\subsection{Benchmark Algorithms}
\paragraph{Methods pertaining to black box optimization.}
Sequential black box optimization algorithms, also known as
continuously-armed bandits
\citep{kleinberg2008multi,bubeck2011x,munos2011optimistic,valko2013stochastic},
are algorithms designed to find the optimum of an unknown function by
receiving noisy evaluations of that function at points that are chosen
sequentially by the learner. They rely on prior assumptions on the
smoothness of the unknown function. For first-price bidding, we may
consider that the reward $(v-B_t) \1(M_t\leq B_t)$ is a noisy
observation of the utility $U_{v,F}(B_t)$, with a noise bounded by
$1$. Moreover, when $F$ admits a density $f$ and $f(b)<C_f$, then
$-1<U_{v,F}'(b)= (v-x) f(b) -F(b)< C_f$, which implies that $U_{v, F}$
is Lipschitz with constant $\max(1, C_f)$. As a consequence, all
black-box optimization algorithms that consider an objective function
with Lispchitz regularity may be used for learning in stochastic first
price auctions.  HOO \citep{bubeck2011x} has a parameter $\rho$
related to the level of smoothness of the objective function which we
can set to $1/2$, corresponding to the observation that the
first-price utility is Lipschitz under the assumptions discussed
above. This immediately leads to a first baseline approach with
$O(\sqrt{T \log T})$ regret rate.  Setting the parameter related to
the Lipschitz constant of HOO so that it is larger than $C_f$ is not
possible in practice without prior knowledge on $F$. More generally,
knowing the smoothness is considered a challenge most of the time in
black-box optimization, so that several methods have been introduced
that are adaptive to the smoothness, e.g. stoSOO
\citep{valko2013stochastic}.

\paragraph{UCB on a smartly chosen discretization.}
\cite{combes2014unimodal} prove that when the reward function is
unimodal, a discretization based on the smoothness level of this
function suffices to achieve a regret of the order of $\sqrt{T}$. If
$F$ satisfies Assumption \ref{ass:pseudo-mhr}, $U_{v,F}$ is unimodal,
as shown by the proof of Lemma \ref{lem:unique_max}. Hence, using the right discretization while applying UCB,
one can achieve a $O(\sqrt T)$ regret. In particular if the utility is
quadratic, the advised discretization is a grid of $O(T^{1/4})$
values.

\paragraph{O-UCBID1.}
We also implement the following algorithm, that is reminiscent of the
method used by \citep{cesa2014regret} to learn reserve prices.

\begin{algo}[O-UCBid1] Submit a bid equal to $1$ in the first round, then bid:
$$B_t= \max\{b \in [0,\hat{V}_t+ \epsilon_t], \hat{U}_t(b) \geq \max_{b \in [0,1]}\hat{U}_t(b) - 2 \epsilon_t\},$$ where
$\hat{U}_t(b)=(\hat{V}_t-b)\hat{F}_t(b)$.
\end{algo}

This algorithm overbids with high probability, by construction. Thanks
to the DKW inequality, one can control the
difference between the true bid distribution $F$ and its empirical
version $\hat{F}_t$ in infinite norm. Because we observe $M_t$ at
each round, $\|F-\hat{F}_t\|_{\infty}$ is at most $\epsilon_t$ with
high probability. It is easy to show that
$\|U_{v,F}-\hat{U}_t\|_{\infty}$ is bounded by a multiple of
$\epsilon_t$ showing that $B_t$ is (again with high probability)
larger than the unknown optimal bid $b^*_{v,F}$. O-UCBid1 is very
close to the method used by \citep{cesa2014regret} to set a reserve
price in second-price auctions. While in first-price auctions, a
bidder needs to overbid in order to favor exploration, sellers in
second-price auctions are encouraged to offer a lower price than the
optimal one, as they can only observe the second highest bid if their
reserve price is set lower than the latter. The approach of
\cite{cesa2014regret} requires successive stages as sellers in
second-price auctions can only observe the second-price and need to
estimate the distribution of all bids based on this information. In
our setting, we have direct access to the opponents' highest bid and
successive stages are not required any longer.  We prove that the
regret incurred by O-UCBid1 is of the order of $\log T \sqrt{T}$ when
$\gamma>1$, which makes it an interesting baseline algorithm, that has
guarantees similar to those of black box optimization algorithm,
without the need of knowing the smoothness or the horizon. We refer to
Theorem \ref{th:optiBID} in Appendix \ref{sec:app_unknown_F} for
further details.

\paragraph{Methods for discrete distributions}  We run UCBid1+ on discrete examples. In this case, we compare it to UCB on a discretization of $[0,1]$ and to WinExp, a generalization of Exp3 for the problem of learning to bid \citep{feng2018learning}.

\subsection{Experiments On Simulated Data}
\begin{figure}[htb] \centering
\includegraphics[width=.9\textwidth]{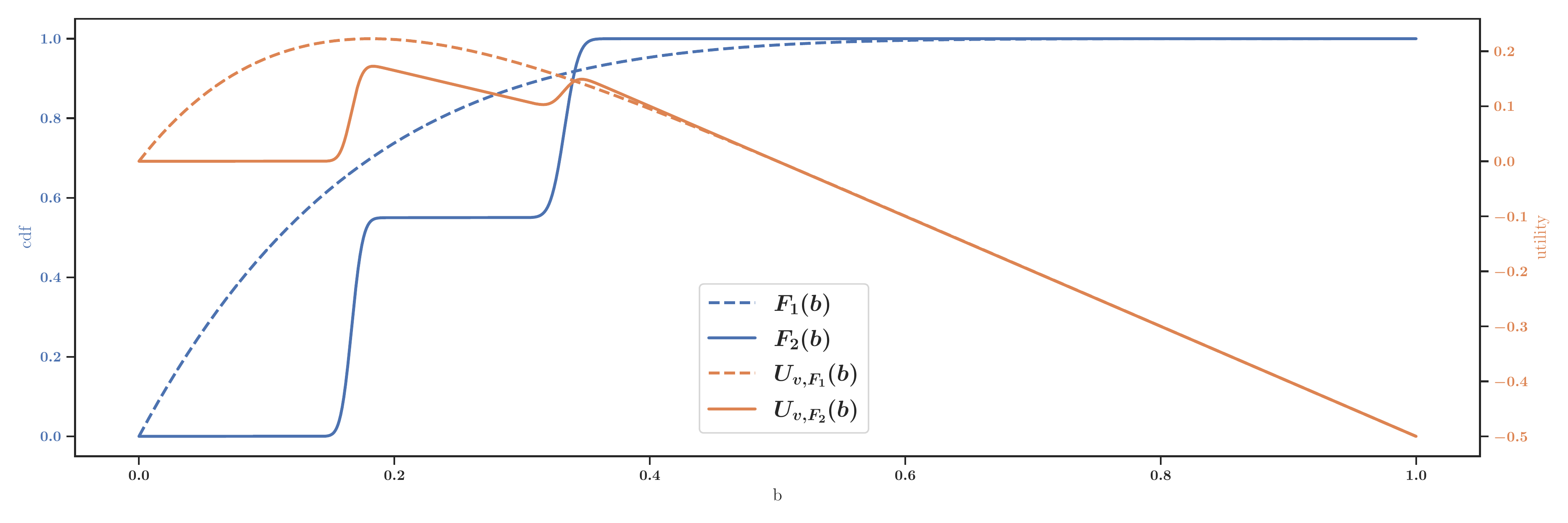}
\caption{Two choices of $F$; associated utilities for $v=1/2$.}
\label{fig:set_utility}
\end{figure}

\begin{figure}[htb] \centering
\subfigure[Regret plots under the first instance of the problem]{
\includegraphics[width=0.45\textwidth]{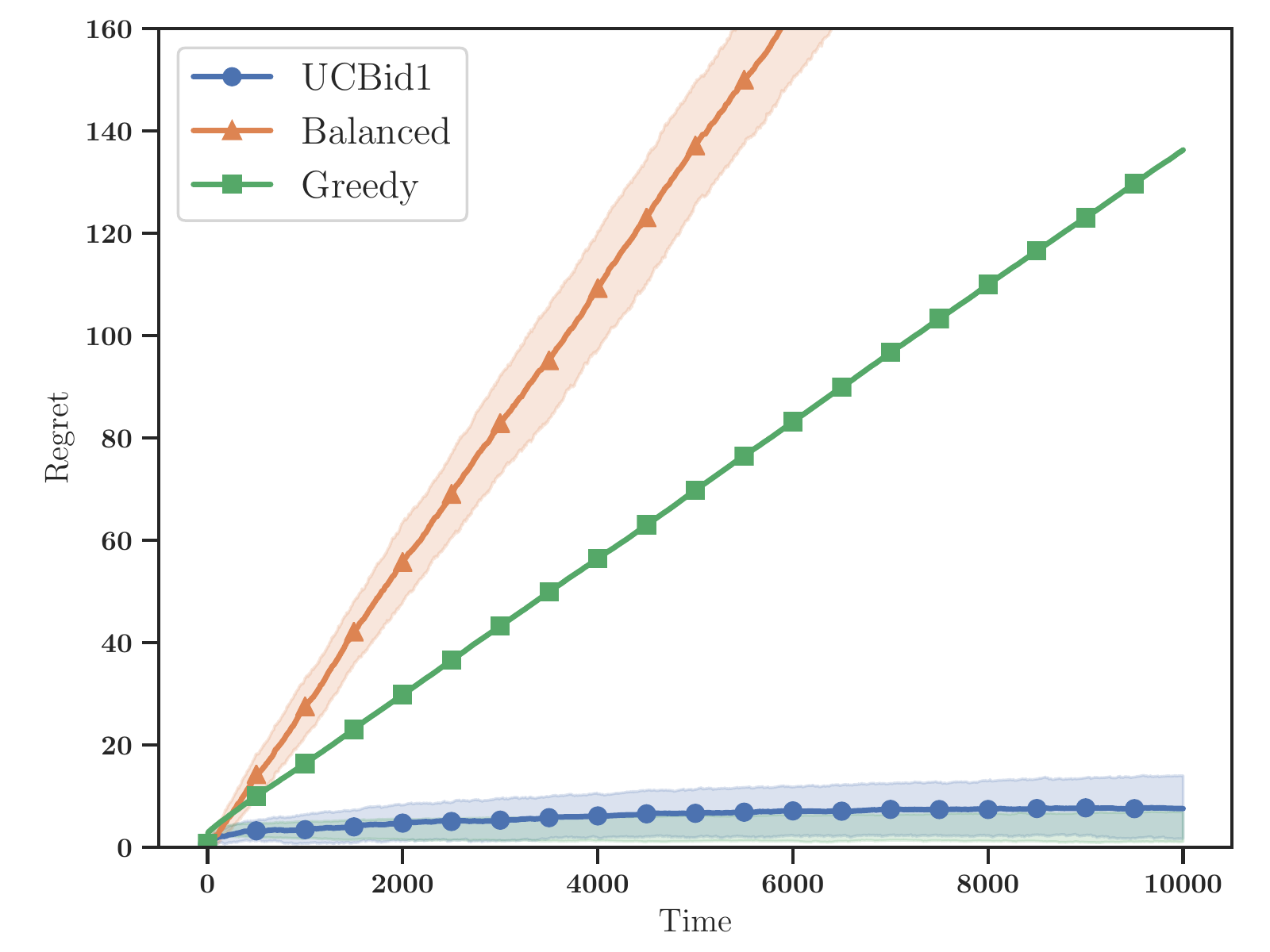}
\label{fig:known_F_1}}
\hfill
\subfigure[Regret plots under the second instance of the problem]{
\includegraphics[width=.45\textwidth]{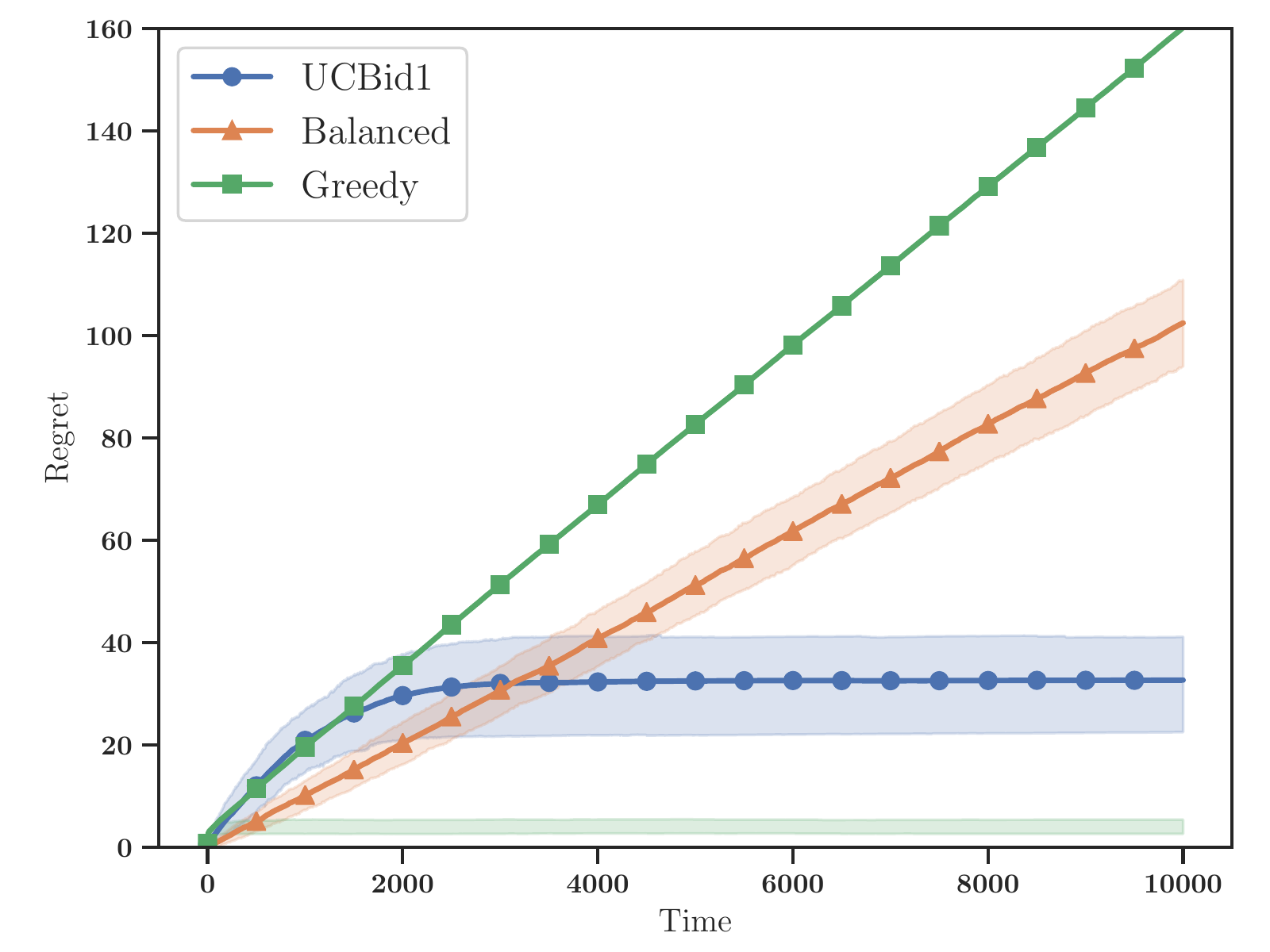}
\label{fig:known_F_2}}
\caption{Regret plots for known $F$}
\end{figure}

\begin{figure}[htb] \centering
\subfigure[Regret plots under the first instance of the problem]{\includegraphics[width=.44\textwidth]{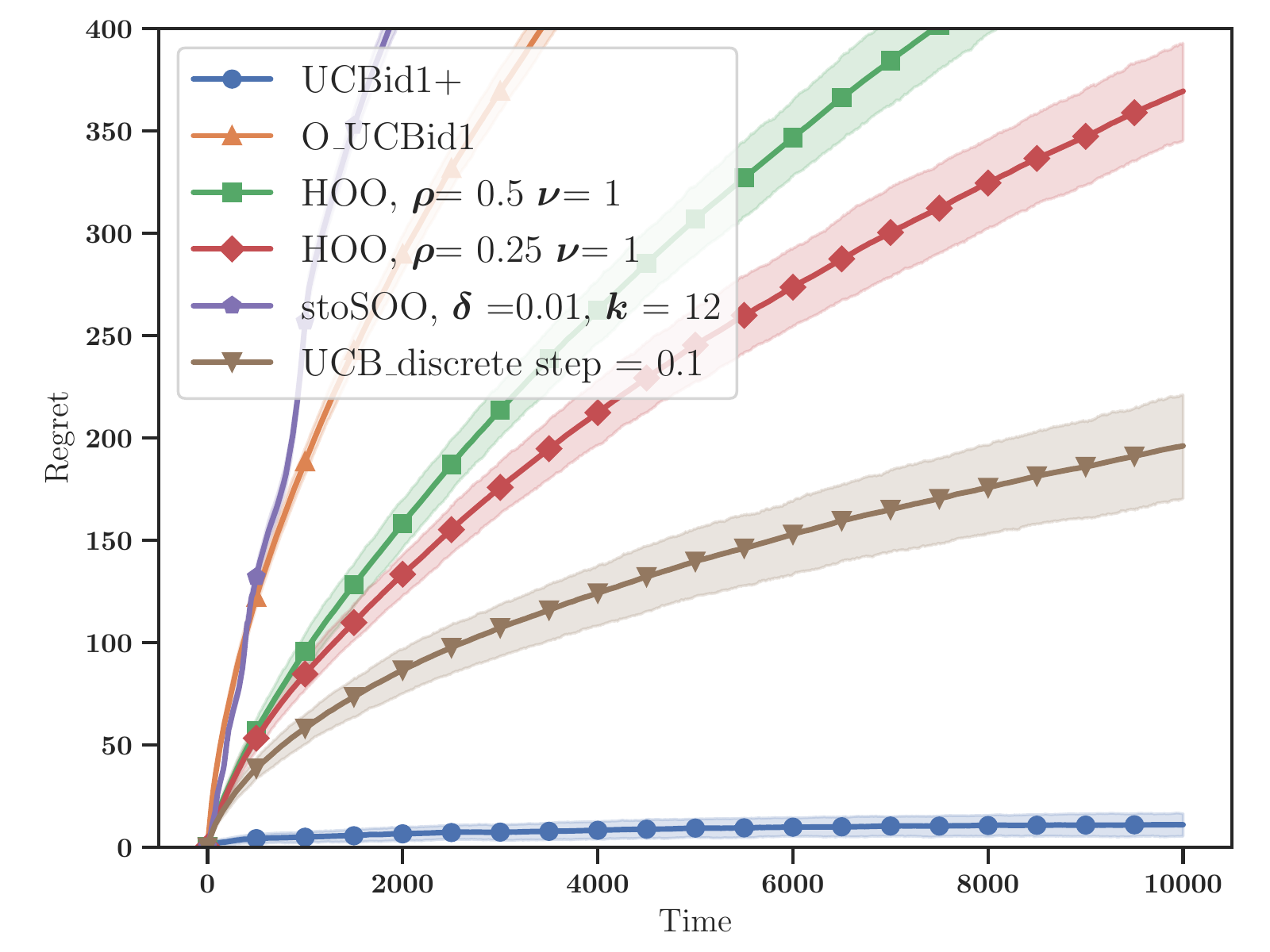}\label{fig:beta_unknown_F}}
\subfigure[Regret plots under the second instance of the problem]{
\includegraphics[width=.44\textwidth]{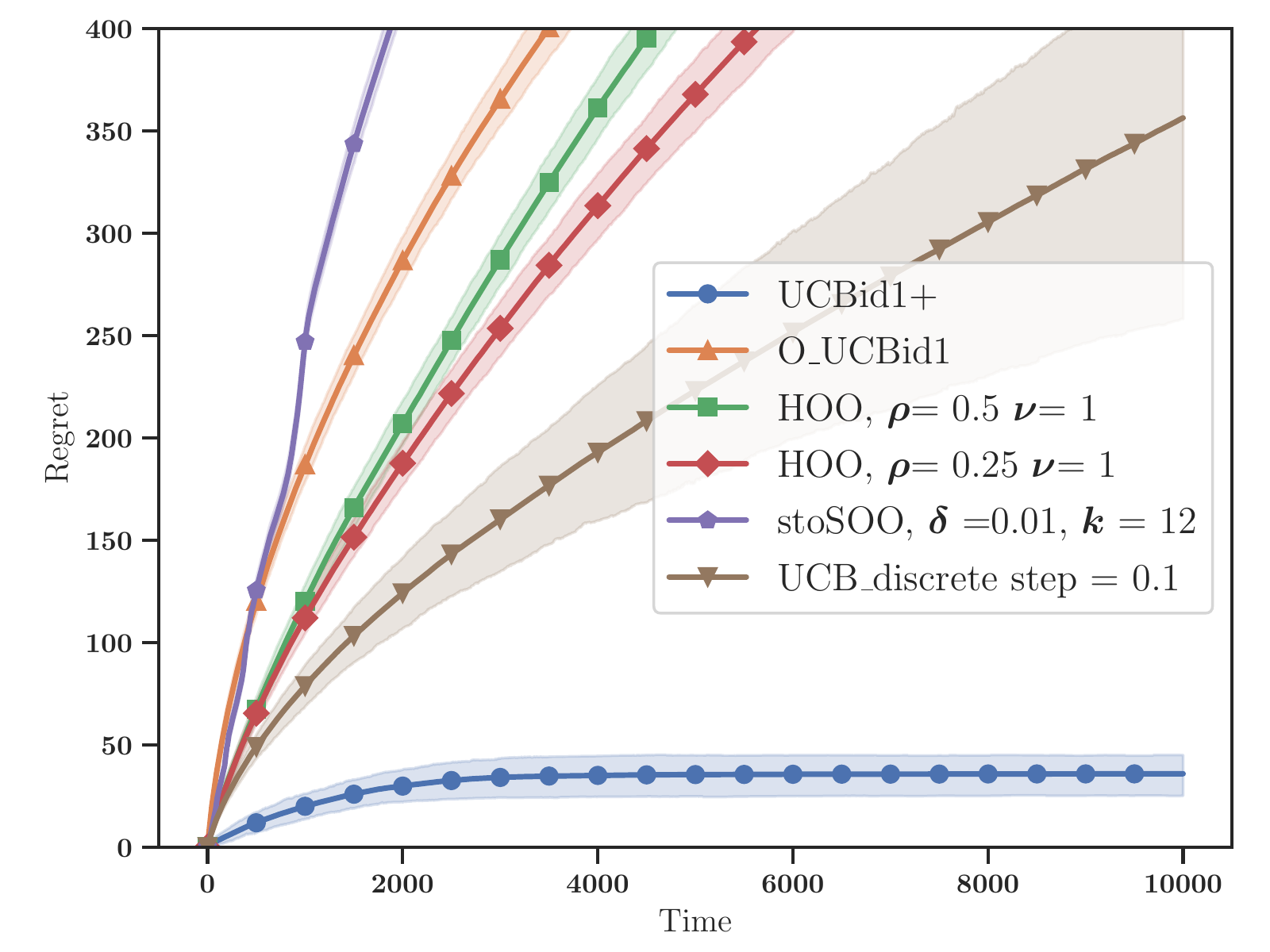}\label{fig:mix_unknown_F}}
\caption{Regret plots for unknown $F$}
\label{fig:unknown_F}
\end{figure}

In this section we focus on two particular instances of the first
price auction learning problem. The first instance is characterized by
a value distribution set to a Bernoulli distribution of average $0.5$,
and a distribution of the highest contestants' bids set to a
Beta(1,6). The second instance only differs by the distribution of the
highest contestants' bids, which is set to a mixture of two Beta
distributions:
$0.55 \times Beta(500, 2500)+ 0.45 \times Beta(1000, 2000)$. This
distribution is very close to that used in the proof of Theorem
\ref{th:lower_bound_gen}, but is continuous. The cumulative
distribution and the matching utility of each instance are plotted on
Figure \ref{fig:set_utility}. Both distributions are smooth but the
first one satisfies Assumption \ref{ass:pseudo-mhr}, while it is not
clear that the second one does.

Figures \ref{fig:known_F_1} and \ref{fig:known_F_2} show the regret of
various strategies when $F$ is known. The first (respectively second)
figure represents the regrets of these strategies under the first
(respectively second) instance of the problem described above. The
horizon is set to $10000$ and the results of 720 Monte Carlo trials
are aggregated. The plots represent the average regret over time
(shaded areas correspond to the interquartile range).  The strategy
termed Greedy is a naive strategy that bids
$\max \argmax \hat{U}_t(b)$, whenever it has made more than three
observations. It shows a linear regret, which comes from the fact that
when it only observes value samples equal to zero during the first
three observations, it bids $0$ indefinitely, and thus incurs the
regret $U_{v,F}(b^*_{v,F})-U_{v,F}(0)$ at each time step. Observing
only $0$ three times in a row is not very likely: the third quartile
is very small, but the consequences are so terrible that the average
is many orders of magnitude higher. The strategy termed Balanced
consists in bidding the median of the highest contestants' bids. It
guarantees that the learner is able to win half of the rounds. As
expected, this strategy, which does not adapt to the instance at hand,
shows poor performances in both cases. However, it is a better
solution than bidding $0$ or $1$. Finally, we also plot the regret of
UCBid1. Note that in order to implement UCBid1 we would have to
compute $\argmax_{b \in [0,1]}(\hat{V}_t + \epsilon_t -b)F(b)$ at each
round; instead we only use an approximation of this quantity by
computing the argmax of the function over a grid of $10000$
values. UCBid1 outperforms the naive baseline strategies in both
cases. Under the more complex second instance of the problem, it shows
a larger regret than under the first one. However, even in this more
complex case, the rate of growth of the regret stays very low.

In Figure \ref{fig:unknown_F}, we analyze the regrets of different
algorithms when $F$ is unknown. In this setting, we compare UCB on a
discretization of $[0,1]$ with 10 arms, HOO \citep{bubeck2011x} with
various parameters, O-UCBid1 and UCBid1+ with $\gamma =1$ and stoSOO
\citep{valko2013stochastic} with the parameters recommended in the
latter paper. For efficiency reasons, we also do not allow the tree
built by HOO and stoSOO to have a depth larger than $\log_2{T}$. The
various versions of HOO, UCB, as well as stoSOO show regret plots that
could correspond to a $\sqrt{T}$ behavior. UCBid1+ shows a
dramatically improved regret plot compared to the black box
optimization strategies.

\begin{figure}[htb] \centering
\subfigure[Utility with $v=1/2$]{\includegraphics[width=.44\textwidth]{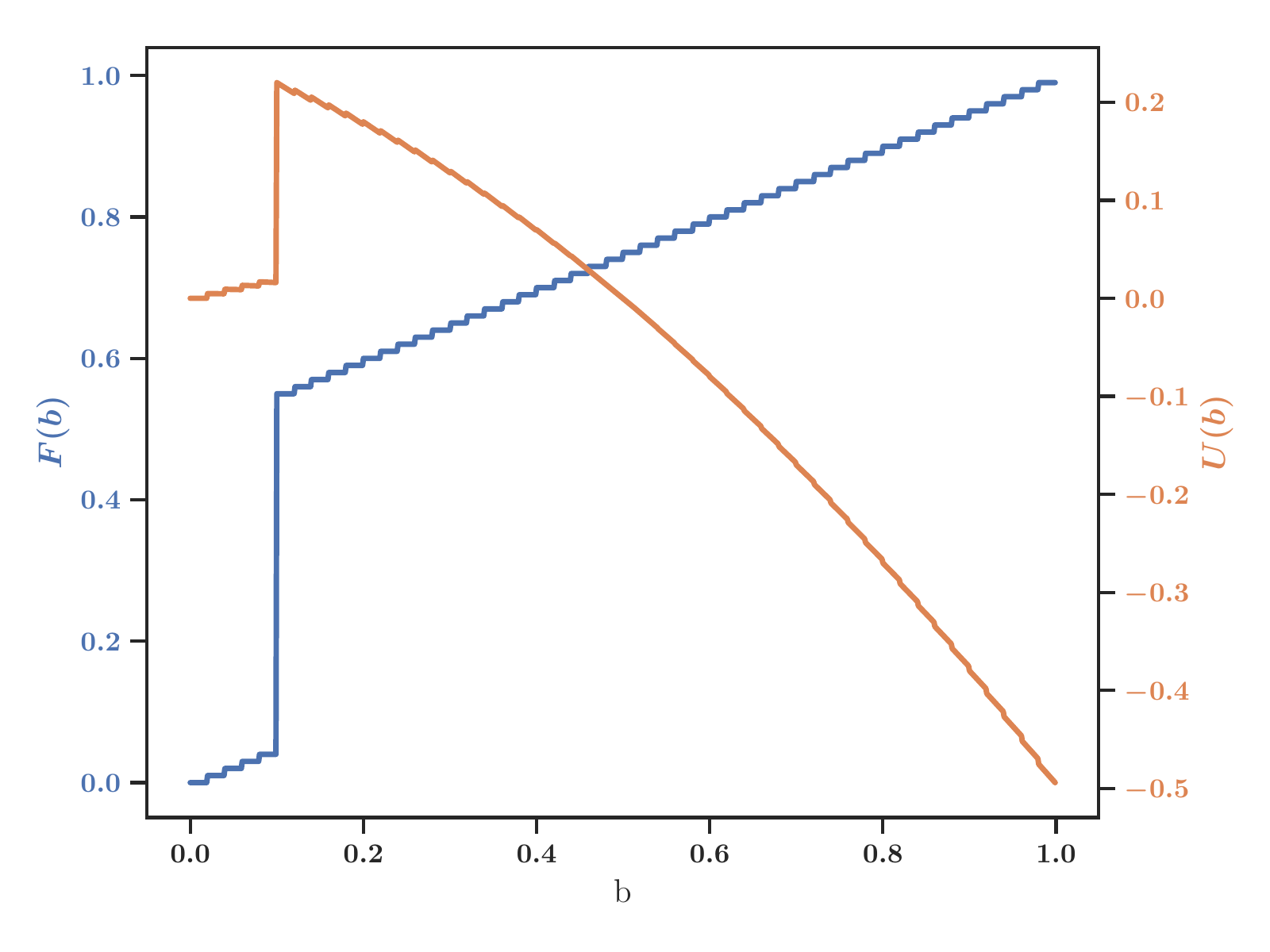}\label{discrete_utility}}
\subfigure[Regret plots]{
\includegraphics[width=.44\textwidth]{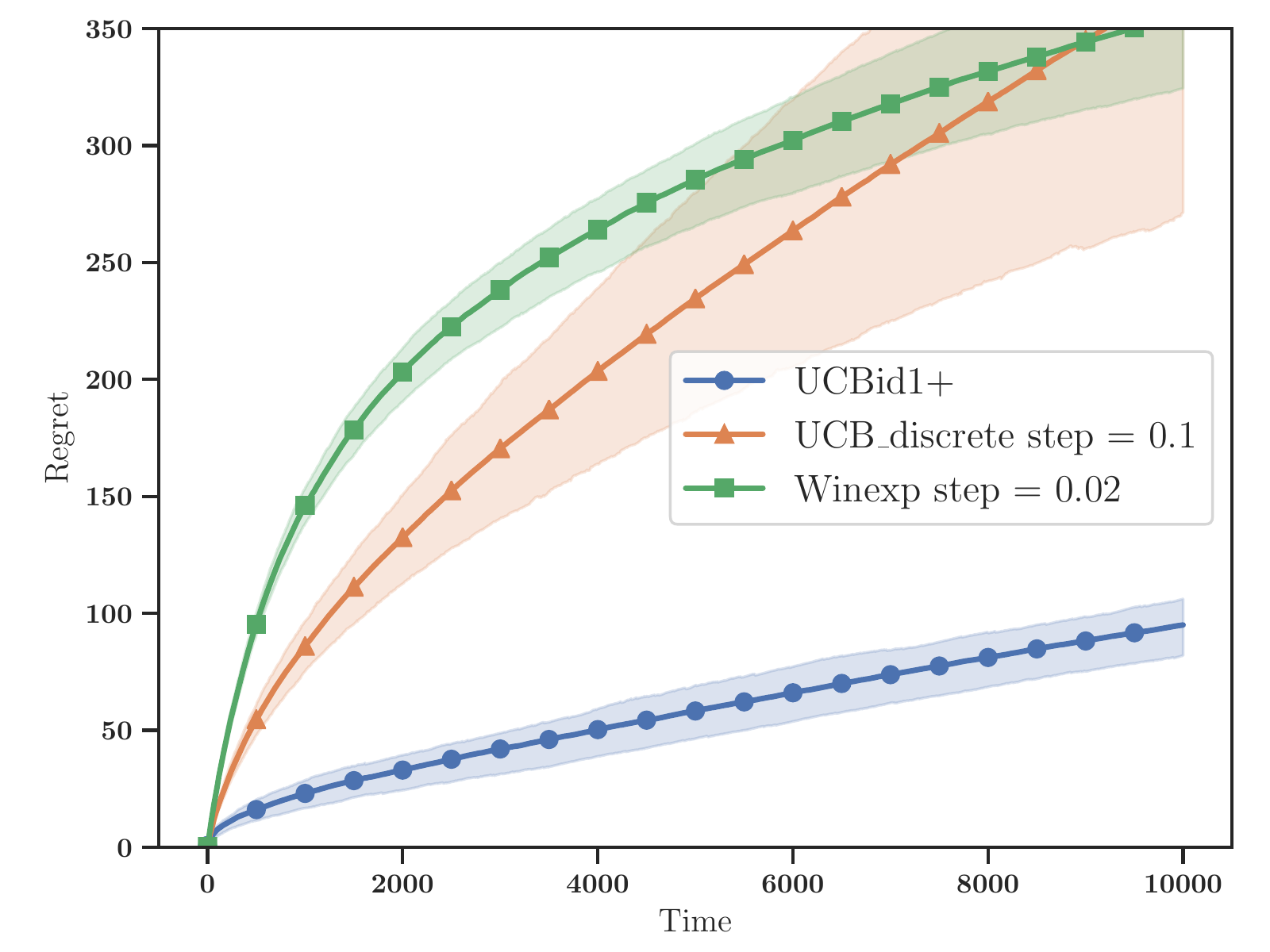}\label{fig:discrete_regret}}
\caption{An example with discrete bids}
\label{fig:discrete_example}
\end{figure}

Figure \ref{fig:discrete_example} shows a different example where the
distribution of bids is discrete with a probability mass of $0.51$ on
$0.1$ and equal probability masses on
$i/50, \forall i \in [1\ldots 4, 6, \ldots, 50]$. We compare UCBid1+
with UCB, having operated a discretization into 10 arms and with
Winexp with a discretization into 50 arms. UCBid1+ again yields a regret at
least 5 times smaller than the other algorithms.
In addition, it is important to stress that UCBid1+ and O-UCBid1 are anytime
algorithms, while all the alternatives shown on Figures~\ref{fig:unknown_F}
and~\ref{fig:discrete_example} require, at least, the knowledge of the
time horizon.

\begin{figure}[htb] \centering
\subfigure[Utility with $v=1/2$]{\includegraphics[width=.44\textwidth]{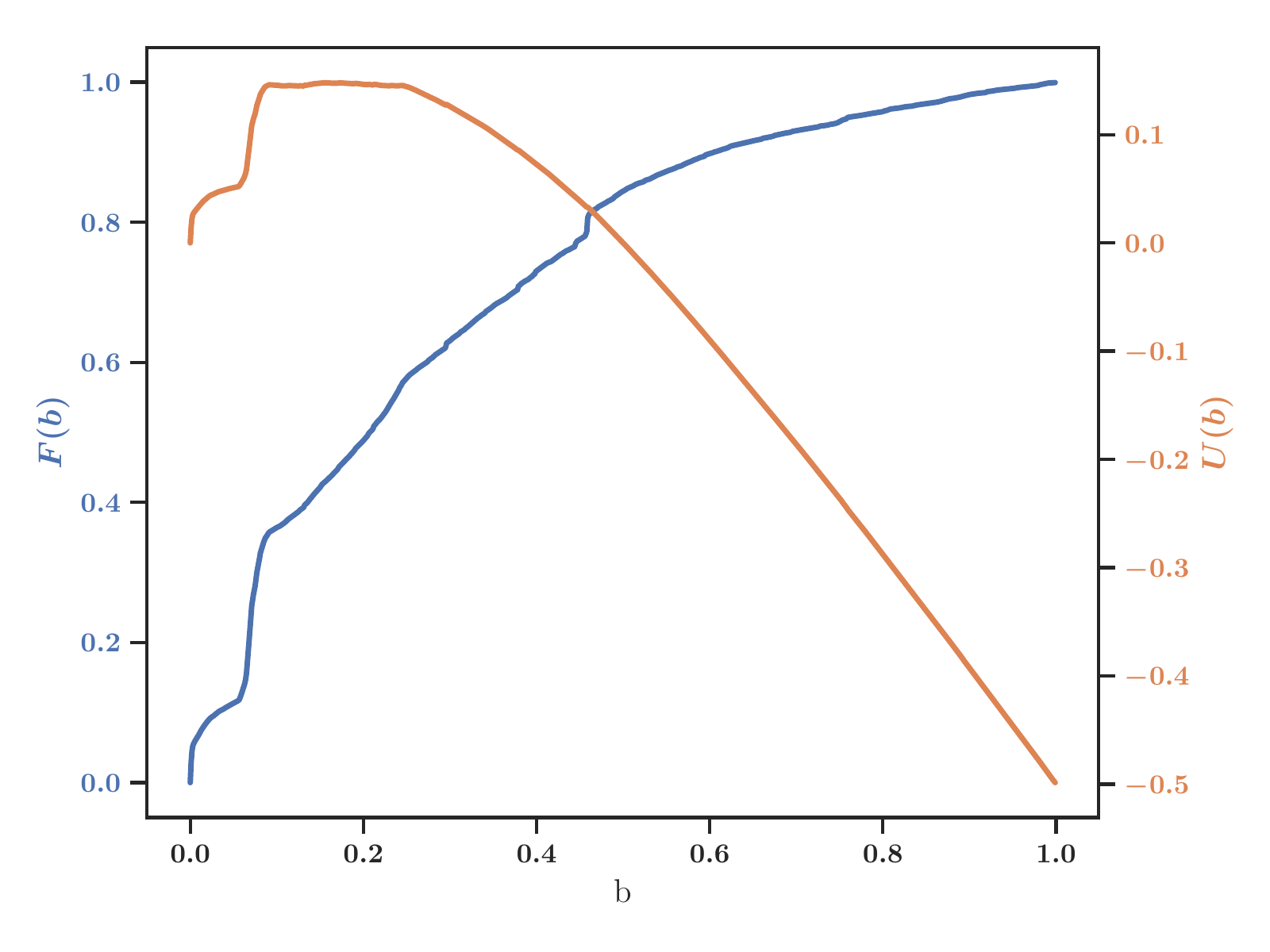}\label{real_data_utility}}
\subfigure[Regret plots with bidding data]{
\includegraphics[width=.44\textwidth]{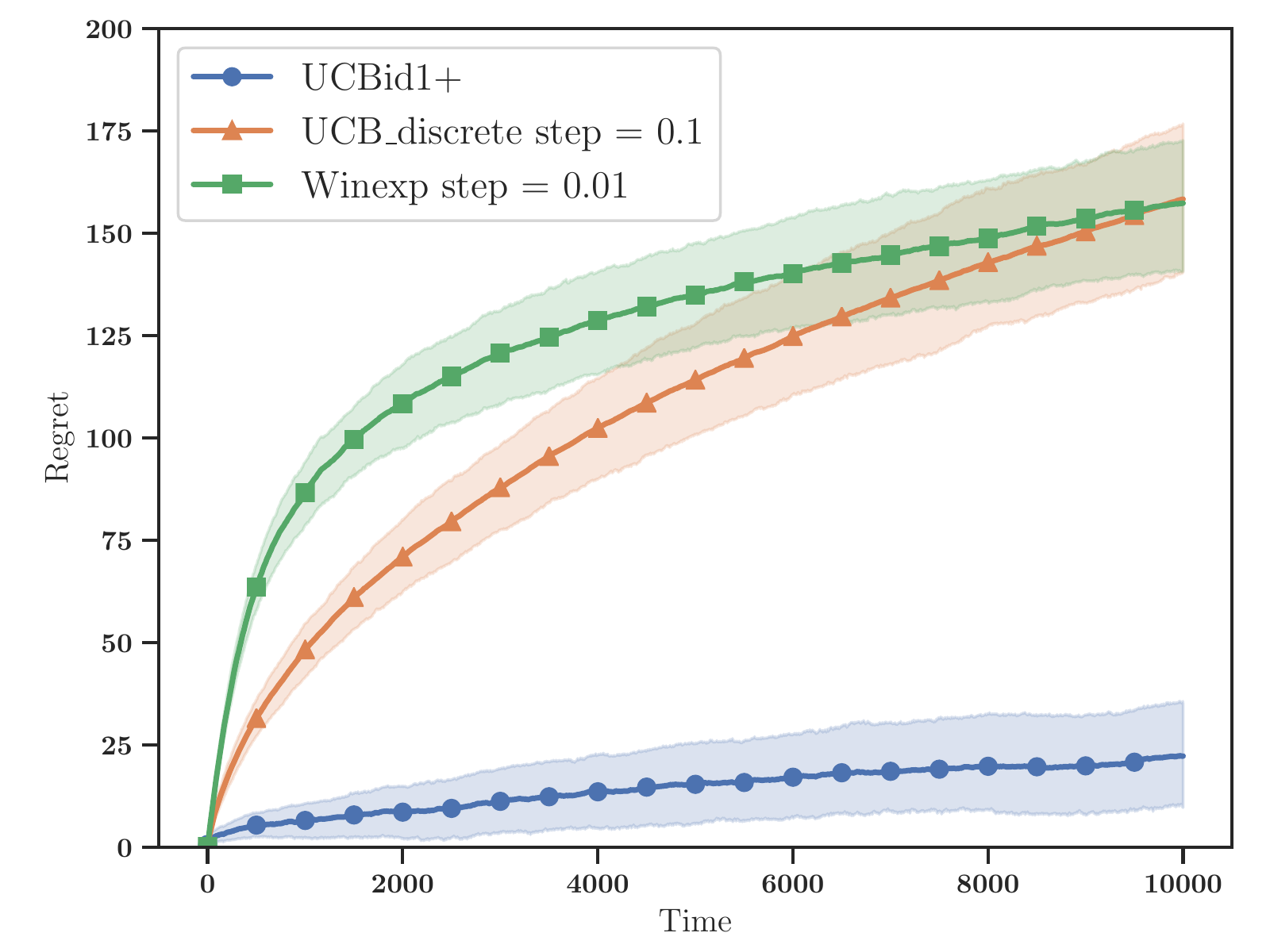}\label{fig:real_data_regret}}
\caption{Experiment with real bidding data}
\end{figure}

\subsection{Experiments On a Real Bidding Dataset}

We also experiment on a real-world bidding dataset representing the
highest bids from the contestants of one advertiser on a certain
campaign. Thanks to Numberly, a media trading agency, Adverline, an advertising network, and Xandr, a supply and demand-side platform, we collected a set of 56607 bids that were made
on a specific placement on Adverline's inventory on
auctions that Numberly participated to, for a specific
campaign. We keep only the bids smaller than the 90\% quantile and we
normalize them to get data between 0 and 1 (see Figure
\ref{fig:rw_histo} in Appendix \ref{sec:app_expe} for a
histogram). The regret plots are represented in Figure
\ref{fig:real_data_regret}. As earlier, with discrete simulated data,
we compare UCBid1+ with UCB, having operated a discretization into 10
arms and with Winexp with a discretization into 100
arms. Unsurprisingly, the regret plots are similar to those with
simulated data, since the distributions at hand are similar. UCBid1+
still largely outperforms the baseline algorithms.
\acks{We would like to thank Adverline for accepting to provide us with the bidding data on their inventories and Xandr for making this data transaction possible. We are very grateful to them for their support on this project.
Aurélien Garivier acknowledges the support of the Project IDEXLYON of the University of Lyon, in the framework of the Programme Investissements d’Avenir (ANR-16-IDEX-0005), and Chaire SeqALO (ANR-20-CHIA-0020).}
{\small
\bibliography{biblio}}
\clearpage
\onecolumn
\appendix
{\Large\textbf{Supplementary Material}}\vspace*{1em}\\
\paragraph{Outline.} We prove in Appendix \ref{sec:app_properties} all the results pertaining to Section \ref{sec:properties} apart from Theorem \ref{th:lower_bound_gen}, which is  proved separately in Appendix \ref{sec:app_lower_bound}. In Appendix \ref{sec:app_prelim}, we introduce preliminary results necessary to analyze the regrets of the algorithms presented in main body of the paper. Appendix \ref{sec:app_known_F} contains all the proofs of the results of Section \ref{sec:known_F}, while the theorems of Section \ref{sec:unknown_F} are proved in  Appendix \ref{sec:app_unknown_F}. A figure related to Section \ref{sec:experiments} is presented in  Appendix \ref{sec:app_expe}.

\paragraph{Notation.}
\begin{itemize}
\item In the following we write $U$ instead of $U_{v,F}$ (respectively $W$ instead of $W_{v,F}$; $b^*$ instead of $b^*_{v,F}$; $q^*$ instead of $q^*_{v,F}$ and $R_T$ instead of $R_T^{v,F}$) when there is no ambiguity.
\item  $b(q)$ denotes $F^{-1}(q)$.
\item $\hat{V}(n):=1/n\sum_{s=1}^n V(s)$ is the mean of the $n$ first
observed values.
\item  We set  $V'_s = V_s$ if $ M_s\leq B_s,$ and $V'_s = \emptyset$
otherwise.
\item We set $\mathcal{F}_t = \sigma((M_s,V'_s)_{s\leq t})$ be the
  $\sigma$-algebra generated by the the bid maxima and the values observed up
  to time $t$. 
\item $S_t :=(V_t-b^*)\1(M_t < b^*) - (V_t-B_t)\1(M_t < B_t)$  represents the instantaneous regret.
\end{itemize}

\section{Properties of first-price auctions}\label{sec:app_properties}
\subsection{General properties}

\begin{repeatlem}{lem:psi_F} 
For any cumulative distribution function $F$, $\psi_F$ is non decreasing.
\end{repeatlem}
\begin{proof}
Let $0<v_1<v_2<1$. We have $U_{v_2,F}(b_{v_2,F}^*)-U_{v_2,F}(b_{v_1,F}^*)\geq 0$ and  $U_{v_1,F}(b_{v_1,F}^*)-U_{v_1,F}(b_{v_2,F}^*)\geq 0$, by definition of $b_{v_1,F}^*$ and $b_{v_2,F}^*$.

By summing these two inequalities,
$U_{v_2,F}(b_{v_2,F}^*) -U_{v_1,F}(b_{v_2,F}^*)-(U_{v_2,F}(b_{v_1,F}^*)- U_{v_1,F}(b_{v_1,F}^*))\geq 0.$
Hence 
$$(v_2-v_1)(F(b^*_{v_2,F})-F(b^*_{v_1,F}))\geq 0.$$ 
We then prove the result by contradiction, by assuming that $b_{v_1,F}^*>b_{v_2,F}^*$. Then $F(b_{v_1,F}^*)=F( b_{v_2,F}^*)$, since $F$ is non decreasing. In this case, $$U_{v_1, F}(b_{v_1,F}^*)= (v_1- b_{v_1}^*)F(b_{v_1,F}^*)< (v_1- b_{v_2}^*)F(b_{v_2,F}^*)= U_{v_1, F}(b_{v_2,F}^*).$$
This is impossible, since $b^*_{v_1,F}$ is an optimizer of $U_{v_1, F}$. In conclusion, $b_{v_1,F}^*\leq b_{v_2,F}^*$

\end{proof}

\subsection{Properties under regularity assumptions}

\begin{repeatlem}{lem:unique_max}If Assumption \ref{ass:pseudo-mhr} is satisfied, then for any $v\in[0,1]$, $U_{v,F}$ has a unique maximizer.
\end{repeatlem}

\begin{proof}
If $F$ satisfies Assumption \ref{ass:pseudo-mhr} then $\frac{f}{F}$ is decreasing and $\phi_F: b \mapsto b + \frac{F(b)}{f(b)}$ is increasing and $f$ does not vanish on $]0,1[$.\\
The derivative of $U$ is $U'(b) = \left(v-b- \frac{F(b)}{f(b)}\right)f(b)$. So $U'(b)=0$ if and only if $v= b +  \frac{F(b)}{f(b)}$. Since $\phi_F$ is increasing, this can only be satisfied by a single $b \in [0,1]$. Also, since $f$ does not vanish, $U$ is unimodal (increasing then decreasing).\\
\end{proof}

\begin{lemma}\label{lem:strongly_concave}
If Assumption \ref{ass:pseudo-mhr} is satisfied, then $W_{v,F}$ is strongly concave.
\end{lemma}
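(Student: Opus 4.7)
The plan is to compute $W''_{v,F}$ explicitly as a function of $\phi_F$ and $f$, and then use Assumption \ref{ass:pseudo-mhr} to show that $W''_{v,F}$ is bounded above by a strictly negative constant on $[0,1]$, which is the definition of strong concavity.

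First, writing $b(q)=F^{-1}(q)$, one has $W_{v,F}(q)=(v-b(q))q$. Since Assumption \ref{ass:pseudo-mhr} ensures that $F$ is $C^1$ and strictly increasing, $b$ is $C^1$ with $b'(q)=1/f(b(q))$, and $q=F(b(q))$. A direct differentiation yields
\begin{equation*}
W'_{v,F}(q) = v - b(q) - q\,b'(q) = v - b(q) - \frac{F(b(q))}{f(b(q))} = v - \phi_F(b(q)),
\end{equation*}
which matches the identity already used in the discussion following Lemma \ref{lem:quadratic}. Differentiating once more gives
\begin{equation*}
W''_{v,F}(q) = -\phi_F'(b(q))\,b'(q) = -\frac{\phi_F'(b(q))}{f(b(q))}.
\end{equation*}

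Next I would lower bound $\phi_F'$ using the strict log-concavity of $F$. Since $(\log F)'=f/F$ is strictly decreasing, $F/f$ is strictly increasing, so $(F/f)'\geq 0$ and hence $\phi_F'(b)=1+(F/f)'(b)\geq 1$ for every $b\in(0,1)$. On the other hand, since $f$ is continuous on the compact interval $[0,1]$, it is bounded above by some constant $C_f:=\sup_{[0,1]}f<\infty$. Combining these two bounds,
\begin{equation*}
W''_{v,F}(q) = -\frac{\phi_F'(b(q))}{f(b(q))} \leq -\frac{1}{C_f} < 0,
\end{equation*}
which shows that $W_{v,F}$ is $(1/C_f)$-strongly concave on $[0,1]$.

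The argument is essentially a one-line calculation once the change of variable $q=F(b)$ is performed, so I do not anticipate any real obstacle; the only point requiring care is justifying differentiability of $F^{-1}$ and of $\phi_F$ under Assumption \ref{ass:pseudo-mhr}, but this is immediate from the $C^1$ regularity and the fact that $f$ does not vanish on $(0,1)$, a property already invoked in the proof of Lemma \ref{lem:unique_max}.
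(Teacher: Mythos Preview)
Your computation of $W'_{v,F}(q)=v-\phi_F(b(q))$ is exactly the identity the paper uses, and your idea of differentiating once more to bound $W''$ by $-1/C_f$ is natural. However, the step where you write $W''_{v,F}(q)=-\phi_F'(b(q))/f(b(q))$ and then $\phi_F'(b)=1+(F/f)'(b)\geq 1$ is not justified under Assumption~\ref{ass:pseudo-mhr} alone. That assumption only says $F$ is $C^1$, i.e.\ $f$ is continuous; it does \emph{not} guarantee that $f$ is differentiable, and $\phi_F(b)=b+F(b)/f(b)$ cannot be differentiated without differentiating $f$. Your final paragraph dismisses this point as ``immediate from the $C^1$ regularity'', but $C^1$ regularity of $F$ gives you differentiability of $F^{-1}$, not of $\phi_F$. (The paper itself only invokes $\phi_F'$ explicitly in Lemma~\ref{lem:lipschitz}, where the extra hypothesis ``$f$ is continuously differentiable'' is added precisely for this reason.)

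The paper's own argument sidesteps this by stopping at the first derivative: it observes that $\phi_F$ is increasing (which follows from strict log-concavity of $F$, since $f/F$ decreasing makes $F/f$ increasing, with no differentiation of $f$ required), hence $W'(q)=v-\phi_F(F^{-1}(q))$ is decreasing, hence $W$ is concave. Note that this establishes only (strict) concavity, not strong concavity as stated; indeed, the sole use of this lemma (in the proof of Lemma~\ref{lem:quadratic}) needs nothing more than ordinary concavity. If you do want genuine strong concavity, you can recover your bound without second derivatives: from $F/f$ increasing one gets $\phi_F(b_1)-\phi_F(b_2)\geq b_1-b_2$ directly, and combining this with the mean value theorem for $F^{-1}$ (which \emph{is} differentiable) yields $W'(q_1)-W'(q_2)\leq -(q_1-q_2)/C_f$ for $q_1>q_2$.
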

If $F$ satisfies Assumption \ref{ass:pseudo-mhr} then $\frac{f}{F}$ is decreasing and $\phi_F: b \mapsto b + \frac{F(b)}{f(b)}$ is increasing and $f$ does not vanish on $]0,1[$.\\
The derivative of $U$ is $U'(b) = \left(v-b- \frac{F(b)}{f(b)}\right)f(b)$.
The derivative of $W$ is $W'(q) = \left(v-b- \frac{F(F^{-1}(q))}{f(F^{-1}(q))}\right)= v- \phi_F'(F^{-1}(q))$, since $\phi_F$ is increasing. Consequently,  $U'$ is decreasing, and $U'$ is strongly concave.

\begin{repeatlem}{lem:lipschitz}
 If Assumption \ref{ass:pseudo-mhr} is satisfied and $f$ is differentiable, then $\psi_F: v \mapsto b^*(v, F)$ is Lipschitz continuous with a Lipschitz constant 1.
\end{repeatlem}
\begin{proof}
If $b^*$ is the optimum of the utility $U$, then it satisfies $(v-b^*)f(b^*)-F(b^*)=0$.
It satisfies $$\phi_F(b^*) :=b^* +\frac{F(b^*)}{f(b^*)}=v.$$
Since $\phi_F'(b^*) >1$ thanks to Assumption \ref{ass:pseudo-mhr}, $\phi_F$ is invertible and $(\phi_F)^{-1}= \psi_F$ is  Lipschitzian with constant $1$ .
\end{proof}

\begin{repeatlem}{lem:bound_F_b*} If  Assumption \ref{ass:pseudo-mhr} is satisfied, then $$F(b^*)\geq e^{-1}F(v)$$
\end{repeatlem}
\begin{proof}
We know that $b^*<v$ and 
$$\log \left(\frac{F(v)}{F(b^*)}\right) = \int_{b^*}^v \frac{f(u)}{F(u)}du.$$
Hence $$\frac{F(v)}{F(b)} = \exp \left(\int_{b^*}^v \frac{f(u)}{F(u)} du\right).$$
Since $\frac{f(u)}{F(u)}$ is decreasing, thanks to Assumption \ref{ass:pseudo-mhr}, 
 $$\frac{F(v)}{F(b)} \leq \exp \left((v-b^*)\frac{f(b^*)}{F(b^*)} \right).$$
 We have $v-b^* = \frac{F(b^*)}{f(b^*)}$, by definition of $b^*$. Hence $ \exp \left(v-b^*)\frac{f(b^*)}{F(b^*)}\right)=\exp(1)$
 and $$F(b^*)\geq \exp(-1)F(v).$$ 
 \end{proof}

\begin{repeatlem}{lem:quadratic} If  Assumption \ref{ass:pseudo-mhr} is satisfied, for any $0 \leq q' \leq 1$, $$W(q^*) - W(q') \leq \frac{1}{4}(q^* - q')^2 W(q^*)$$
\end{repeatlem}

\begin{proof}
Note that this proof is an adaptation of the proof of Lemma 3.2 in \cite{huang2018making}.
In this proof, we denote by $b(q)$ $ F^{-1}(q)$.

First of all, let us observe that $U'(b) = (v-\phi_F(b)) f(b)$.
We have $W'(q)= v- \phi_F(F^{-1}(q)).$

Assumption \ref{ass:pseudo-mhr} implies that $\phi_F'(b) >1, ~ \forall b \in [0,1]$.

To prove Lemma \ref{lem:quadratic}, we will apply case-based reasoning.
There are three cases depending on the relation between $q'$ and $q^*$: $q' > q^*$, $q' = q^*$, and $q' < q^*$.
The second case, i.e., $q' = q^*$, is trivial.

\medskip

First, consider the case when $q' > q^*$.
It holds 
\[
W(q^*) - W(q') = \int^{q'}_{q^*} - W'(q) dq = \int^{q'}_{q^*}  \Big(\phi_F(b(q)) -v\Big) dq. 
\]
We therefore need to bound $\phi_F(b(q), \forall q \in [q^*, q'].$
By definition of $q^*$, for any $q$ s.t.\ $q^* \le q \le q'$, we have $$q (v- b(q)) \le q^* (v-b(q^*)).$$
By rewriting this equation, 
\begin{equation} \label{eq:optimality_imp} b(q)\geq \frac{q v - q^* v + q^* b(q^*)}{q}= v\left(\frac{q-q^*}{q}\right) + \frac{q^*}{q}b(q^*)
\end{equation}
Secondly, by the intermediate value theorem, there exists $b \in [b(q^*), b(q)]$, such that 
\[
\phi_F(b(q)) - \phi_F(b(q^*)) =  \phi_F'(b)\Big( b(q) - b(q^*)\Big) \geq b(q) - b(q^*),
\]
for any $q^* \le q \le q'$, where the second inequality follows from Assumption \ref{ass:pseudo-mhr} that $\tfrac{d \phi_F(b)}{db} \ge 1$ and $F$ being increasing thanks to Assumption \ref{ass:pseudo-mhr}.
This in turn yields 
\[\phi_F(b(q)) \geq v + b(q) - b(q^*),
\]
since by definition, $W'(q^*) = \phi_F(b(q^*))=v$.
Combining with Inequality \ref{eq:optimality_imp}, we get that
\[
\phi_F(b(q)) -v \ge  v(\frac{q-q^*}{q}) + \frac{q^* }{q}b(q^*) - b(q^*)\ge (v-b(q^*))(\frac{q-q^*}{q})= \frac{W(q^*)}{q^*} (\frac{q-q^*}{q})
\]
Therefore, we get that
\begin{align*}
W(q^*) - W(q') &= \int^{q'}_{q^*} - W'(q) dq = \int^{q'}_{q^*}  \Big(\phi_F(b(q)) -v\Big) dq\ge\frac{W(q^*)}{q^*} \int^{q'}_{q^*} \frac{q - q^*}{q}  dq \\
&
\ge \frac{W(q^*)}{q^*}\int_{\frac{q'+q^*}{2}}^{q'} \frac{q - q^*}{q} dq,
\end{align*}
since $\frac{q - q^*}{q} \ge 0$ for any $q' \le q \le q^*$.
Moreover, for any $q \ge \tfrac{q' + q^*}{2}$, we have $\frac{q - q^*}{q} = 1 - \frac{q^*}{q}\ge 1 - \frac{2 q^*}{q' + q^*} \ge \frac{q' - q^*}{q' + q^*}$.
Hence, we can derive the following inequality
\[
W(q^*) - W(q') \ge \int^{q'}_{\frac{q' + q^*}{2}} \frac{q' - q^*}{q' + q^*} \frac{W(q^*)}{q^*} dq = \frac{(q' - q^*)^2}{2(q' + q^*)} \frac{W(q^*)}{q^*} = \frac{(q' - q^*)^2}{2 q^*(q' + q^*)} W(q^*) \enspace.
\]
The lemma then follows from the fact that $0 \le q', q^* \le 1$.

The second case, $q' > q^*$ has to be treated a little differently than the first, partly because we now need to upper bound $b(q)$ instead of lower-bounding it. We achieve this by using the concavity of $W$ (proved in Lemma \ref{lem:strongly_concave}).

By concavity of the revenue curve, for any $q' \le q \le q^*$, we have
\[
W(q) \ge \frac{q - q'}{q^* - q'}  W(q^*) + \frac{q^* - q}{q^* - q'} W(q') \enspace,
\]
because $W$ lies above the segment that connects $(q', W(q'))$ and $(q^*, W(q^*))$, between $q'$ and $q^*$. Hence
\[
(v-b(q))q \ge \frac{q - q'}{q^* - q'} (v-b(q^*))q^* + \frac{q^* - q}{q^* - q'} (v-b(q'))q' \ge qv - b(q^*) q^* \frac{q - q'}{q^* - q'} -b(q') q' \frac{q^* - q}{q^* - q'}  ,
\]

And
\[
-qb(q) \ge    \frac{q^* q'}{(q^* - q')}\Big(b(q^*)-b(q') \Big) +   q\frac{q' b(q') - q^*b(q^*)}{q^* - q'}  ,
\]
which yields 
\[
qb(q) \le    \frac{q^* q'}{(q^*- q' )}\Big(b(q')-b(q^*) \Big) +   q\frac{ q^*b(q^*) -q' b(q') }{q^* - q'}  ,
\]
Dividing both sides by $q$, we have
\begin{equation}\label{eq:concave}
b(q) \le    \frac{q^* q'}{q(q^*- q' )}\Big(b(q')-b(q^*) \Big) +   \frac{ q^*b(q^*) -q' b(q') }{q^* - q'}  ,
\end{equation}
Further, by the intermediate value theorem, there exists $b \in [b(q^*), b(q)]$, such that 
\[
\phi_F(b(q)) - \phi_F(b(q^*)) =  \phi_F'(b)\Big( b(q) - b(q^*)\Big) ,
\]
for any $q^* \le q \le q'$.
Further, by Assumption \ref{ass:pseudo-mhr} that $\tfrac{d \phi_F(b)}{db} \ge 1$, and because $b$ is increasing thanks to Assumption \ref{ass:pseudo-mhr}, for any $q' \le q \le q^*$,
\[
\phi_F(b(q)) - \phi_F(b(q^*))\leq b(q) - b(q^*)\]
and 
\[\phi_F(b(q)) \leq v + b(q) - b(q^*) ~=~ v+ b(q) - b(q^*),\]

Combining with Inequality \ref{eq:concave}, we get that
\begin{eqnarray*}
\phi_F(b(q)) & \le & v + \frac{q^* q'}{q(q^*- q' )}\Big(b(q')-b(q^*) \Big) +   \frac{ q^*b(q^*) -q' b(q') }{q^* - q'} -b(q*) \\
& = & v+ \frac{q' (q^* - q)}{q (q^* - q')} \big( b(q') - b(q^*) \big) ~
\le ~  v+ \frac{q' (q^* - q)}{q^* (q^* - q')} \big( b(q') - b(q^*) \big) \enspace,
\end{eqnarray*}
where the last inequality is due to $q \le q^*$ and $b(q') - b(q^*)<0$.
Hence, we have
\begin{align}
W(q^*) - W(q') & =  \int^{q^*}_{q'} W'(q) dq \notag \\& = ~ \int^{q^*}_{q'}v - \phi_F(b(q)) dq \notag \\& \ge ~ \int^{q^*}_{q'} \frac{q' (q^* - q)}{q^* (q^* - q')} \big( b(q^*) - b(q') \big) dq \notag \\
& =  \frac{q'}{2 q^*} (q^* - q') \big( b(q^*) -b(q') 
\big). \label{eq:manyprepeak1}
\end{align}
On the other hand, we have
\begin{equation}
\label{eq:manyprepeak2}
W(q^*) - W(q') =  (q^*-q')v +q'b(q')-q^* b(q^*).
\end{equation}
Taking the 
linear
combination $\frac{2 q^*}{3 q^* - q'} \cdot
\ref{eq:manyprepeak1} + \frac{q^* - q'}{3 q^* - q'} \cdot
\ref{eq:manyprepeak2}$, we have
%
\begin{align*}
W(q^*) - W(q') &\ge  v \frac{(q^* -q')^2}{3q^* - q')} - \frac{(q^* - q')^2}{3 q^* - q'} b(q^*)\\& = 
\frac{1}{q^* (3 q^* - q')} (q^* - q')^2 W(q^*) 
\\&\ge \frac{1}{3} (q^* -
q')^2 W(q^*) \enspace,
\end{align*}
where the last inequality holds because $0 \le q^*, q' \le 1$.
\end{proof}

\begin{repeatlem}{lem:sub_quadratic} If  Assumption \ref{ass:lambda_pseudo-mhr} is satisfied, for any $F^{-1}(b^*)\leq q' \leq F^{-1}(b^*+ \Delta) \leq b^*+ C_f \Delta)$, $$W(q^*) - W(q') \leq \frac{1}{c_f}\lambda (q^* - q')^2 ,$$
\end{repeatlem}

\begin{proof}
\begin{align*}
W(q^*) - W(q') = \int^{q'}_{q^*} - W'(q) dq = \int^{q'}_{q^*}  \Big(\phi_F(b(q)) -v\Big) dq. 
\end{align*}
 by the intermediate value theorem, there exists $b\in [b(q^*), b(q)]$, such that 
\[
\phi_F(b(q)) - \phi_F(b(q^*)) =  \phi_F'(b)\Big( b(q) - b(q^*)\Big) \geq \lambda( b(q) - b(q^*)),
\]

so that $\phi_F(b(q)) -v \leq  \lambda( b(q)-b(q^*))$
when $q^* \leq q \leq q'$ and $\phi_F(b(q)) -v \geq  \lambda (b(q)-b(q^*))$
when $q'\leq q \leq q^*$. Since $f$ is bounded from below by $c_f$, and since by the intermediate value theorem  $\exists u \in [q, q^*], ~ b(q)-b(q^*) = b'(u)(q-q^*)\geq \frac{1}{f(u)}(q-q^*)$,
this yields 
\[
W(q^*) - W(q') \leq \lambda \frac{1}{c_f}(q'-q^*)^2
\]
in both cases. 
\end{proof}

\begin{lemma}\label{lem:beta_distr}
Beta distributions such that $$\alpha + \beta< \alpha \beta$$ satisfy Assumption \ref{ass:pseudo-mhr}.
\end{lemma}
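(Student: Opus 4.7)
The plan is to verify Assumption~\ref{ass:pseudo-mhr} directly, that is, (i) $F$ is continuously differentiable on $[0,1]$, and (ii) $F$ is strictly log-concave, equivalently $f'(x)F(x)<f(x)^2$ for all $x\in(0,1)$. My strategy is to first deduce from $\alpha+\beta<\alpha\beta$ that $\alpha>1$ and $\beta>1$, and then derive strict log-concavity of $F$ from that of the density $f$.

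First I would rewrite the hypothesis as $(\alpha-1)(\beta-1)>1$. Since the product is strictly positive, the two factors share a sign; the case $\alpha,\beta<1$ is excluded because then $|\alpha-1|<1$ and $|\beta-1|<1$ would give $(\alpha-1)(\beta-1)<1$, contradicting the hypothesis. Hence $\alpha>1$ and $\beta>1$, which immediately ensures that $f(x)=x^{\alpha-1}(1-x)^{\beta-1}/B(\alpha,\beta)$ is continuous on $[0,1]$ with $f(0)=f(1)=0$, so that $F\in C^1([0,1])$.

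For (ii), I would note that $(\log f)''(x)=-(\alpha-1)/x^2-(\beta-1)/(1-x)^2<0$ on $(0,1)$, so $f$ is strictly log-concave. In particular, for any $0<y<x<1$ one has $f'(y)/f(y)>f'(x)/f(x)$, equivalently $f'(y)f(x)>f'(x)f(y)$ (using $f(x),f(y)>0$). Integrating this strict inequality in $y$ over $(0,x)$ and using $f(0)=0$, I would obtain $f(x)^2>f'(x)F(x)$, which is exactly $(f/F)'(x)<0$ on $(0,1)$.

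The main subtlety is preserving the strictness of the inequality through the integration step. This is where strict log-concavity of $f$ (available because $\alpha,\beta>1$) is essential, rather than mere log-concavity which would only yield the non-strict analogue, akin to the classical Bagnoli-Bergstrom result. I will also note in passing that the hypothesis $\alpha+\beta<\alpha\beta$ is in fact somewhat stronger than required (the weaker assumption $\alpha,\beta>1$ already suffices for the same conclusion), but it provides a clean parametric criterion matching the style of the other examples listed in Section~\ref{sec:properties}.
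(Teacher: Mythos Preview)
Your proof is correct and takes a genuinely different route from the paper's. The paper proceeds by direct computation: it sets $G(x)=B(\alpha,\beta)F(x)\big[(\alpha-1)(1-x)-(\beta-1)x\big]-x^{\alpha}(1-x)^{\beta}$, so that $G<0$ is equivalent to $f'F<f^2$, and then shows $G(0)=G'(0)=0$ together with $G''<0$ on $(0,1)$. The sign of $G''$ reduces to that of the quadratic $P(x)=-(\alpha+\beta)x^2+2\alpha x-(\alpha-1)$, whose maximum equals $(\alpha+\beta-\alpha\beta)/(\alpha+\beta)$; this is where the hypothesis $\alpha+\beta<\alpha\beta$ is actually used.

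Your argument is more conceptual and, as you observe, sharper: you first extract $\alpha>1$ and $\beta>1$ from $(\alpha-1)(\beta-1)>1$, then use only these inequalities to get strict log-concavity of $f$, and pass to strict log-concavity of $F$ via the integration trick $\int_0^x\big(f'(y)f(x)-f'(x)f(y)\big)\,dy>0$ with $f(0)=0$. This Pr\'ekopa/Bagnoli--Bergstrom style transfer is cleaner than the paper's calculus and explains \emph{why} the result holds, whereas the paper's computation only verifies it; it also shows that the lemma remains true under the weaker condition $\alpha,\beta>1$ (e.g.\ $\alpha=\beta=2$, where $\alpha+\beta=\alpha\beta$), which the paper's second-derivative argument cannot reach since $P$ then touches zero. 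The paper's approach, on the other hand, is entirely self-contained and does not rely on the reader recognizing the log-concavity inheritance principle.
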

\begin{proof}
The density of a Beta distribution satisfies 
$$f(x) = \frac{x^{\alpha -1}(1-x)^{\beta -1}}{B(\alpha, \beta)}$$
And $$f'(x) = \frac{(\alpha -1) x^{\alpha -2}(1-x)^{\beta -1} - (\beta -1) x^{\alpha -1}(1-x)^{\beta -2}}{B(\alpha, \beta)},$$

where   $B(\alpha, \beta) = \frac{\Gamma(\alpha+\beta)}{\Gamma(\alpha)\Gamma(\beta)}$ when $\Gamma$ denotes the Gamma function.
$F$ satisfies assumption \ref{ass:pseudo-mhr} if and only if
$\left(\frac{f}{F}\right)'(x) =\frac{F(x)f'(x)-f^2(x)}{F^2(x)}<0$, $\forall x \in]0,1[$, which is equivalent to:
\begin{align*}
f'(x) F(x) - f^2(x)<0 ,~\forall x \in]0,1[ &\iff \frac{f'(x)}{f(x)} F(x) < f(x) ,~\forall x \in]0,1[)\\
& \iff F(x) B(\alpha, \beta) \left[(\alpha -1)(1-x) - (\beta -1)x \right]< x^{\alpha}(1-x)^{\beta}, \\
& ~~~~\forall x \in]0,1[.
\end{align*}

Therefore we study the function $G:x \mapsto F(x) B(\alpha, \beta) \left[(\alpha -1)(1-x) - (\beta -1)x \right]- x^{\alpha}(1-x)^{\beta}.$ 
First of all, we observe that $G(0) = 0$.
Next, we note that 
\begin{align*}G'(x) = &- F(x) ( \alpha + \beta -2)B(\alpha, \beta) + ((\alpha-1)- (\alpha + \beta -2)x)x^{\alpha -1}(1-x)^{\beta -1}\\
&- \left(\left( \alpha (1-x) - \beta x \right) x^{\alpha -1}(1-x)^{\beta-1}\right)
\end{align*}
and $G'(0) = 0$. Now, we compute the second derivative of $G$:
\begin{align*}
G''(x) =& -( \alpha + \beta - 2 ) x^{\alpha-1}(1-x)^{\beta-1} + \left( (\alpha -1) - (\alpha + \beta -2) x)\right)^2x^{\alpha-2}(1-x)^{\beta-2} \\
&- (\alpha + \beta -2) x^{\alpha -1}(1-x)^{\beta -1} - \left( \alpha - (\alpha + \beta)x\right) ((\alpha -1)-\\
& (\alpha + \beta -2)x)x^{\alpha -2}(1-x)^{\beta -2} + ( \alpha + \beta)x^{\alpha -1}(1-x)^{\beta-1}
\end{align*}
The sign of $G''(x)$
is the same as that of $P(x)=-\left( \left( \alpha + \beta\right) -4\right) (x(1-x)) + (-1+2x) \left( (\alpha-1) - (\alpha + \beta-2)x\right)$.

By simplifying, we get $P(x) = - (\alpha + \beta)x^2+ 2 \alpha x -(\alpha-1)$.
This polynomial is always negative because its maximum is $P(\frac{\alpha}{\alpha + \beta}) = - \frac{\alpha ^2}{\alpha + \beta} + 2 \frac{\alpha^2}{\alpha + \beta} - \alpha +1 = \alpha^2(\frac{2}{\alpha+ \beta}-1) - \alpha +1 =  \frac{\alpha^2}{\alpha +\beta} - \alpha + 1 =\frac{\alpha + \beta - \alpha \beta}{\alpha + \beta}$.

Since $G''(x)<0, \forall x \in [0,1]$ and $G'(0)=0$, then $G'(x)<0, ~ \forall x \in [0,1]$. 
Similarly, $G'(x)<0, \forall x \in [0,1]$ and $G(0)=0$, implies  $G'(x)<0, ~ \forall x \in [0,1]$, which in turn implies that $F$ satisfies Assumption \ref{ass:pseudo-mhr}.
\end{proof}

\subsection{Continuous distribution 
leading to a utility with two global maximizers }\label{app:par_2max}
Consider a distribution which cumulative distribution function $F$ is piece-wise linear on $[0,v]$ at least.
We consider that it changes slope at $a_1 v<v$, and that it is constant on $[a_2 v,v]$, as in Figure \ref{fig:_ex_cdf}.  We denote by $b_1= F(a_1 v)$ and $b_2 = F(a_2 v)$. For simplicity we assume that $F$ is constant on $[a_2 v, a_3 v]$ it is linear and does not change slope on $[a_3 v,1]$ with $a_3>1$.
We make the following assumptions
\begin{equation}\begin{cases}a_2 v>v/2, \\
     a_2 v \leq\frac{v+ a_1 v}{2} - \frac{a_2 v-a_1 v}{b_2-b_1} \frac{b_1}{2}.  
     \end{cases}\label{hyp:1}
\end{equation}

\begin{figure}[ht]
\centering
  \includegraphics[width=0.55\textwidth]{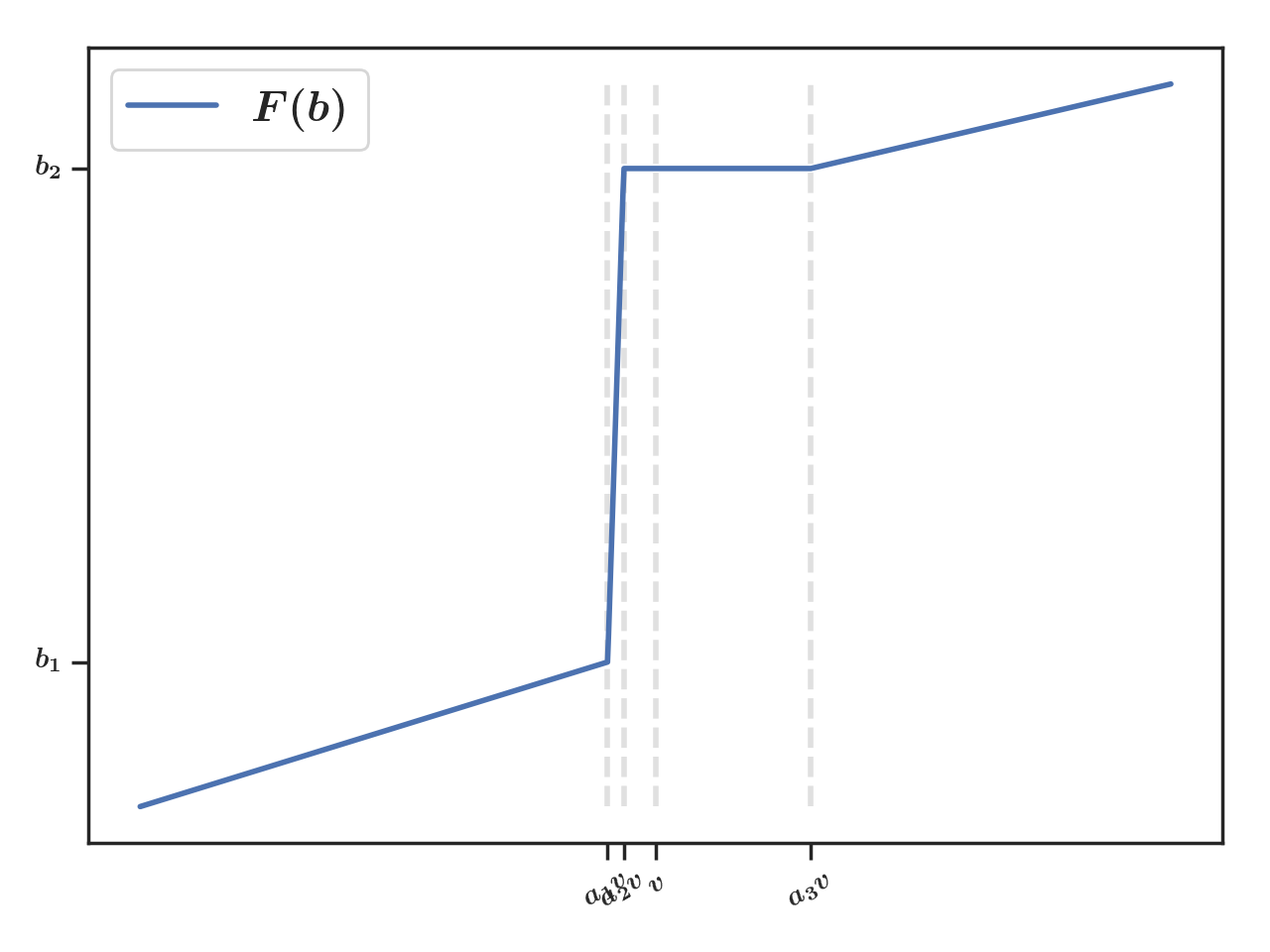}
  \caption{Example of $F$}
  \label{fig:_ex_cdf}
\end{figure}
Then 
\begin{itemize}
\item On $[0, a_1 v]$
$U_v(x)=\frac{b_1}{a_1 v}x$, 
and the optimum on  this interval is $v/2$.
The optimal value on  this interval  is $U_v(v/2)= \frac{b_1}{a_1 v}\frac{v^2}{4}$ on this interval.
\item On $[a_1 v, a_2 v]$, $U_v(x)= \left( \frac{b_2-b_1}{a_2 v-a_1 v}(x-a_1 v) + b_1 \right) (v-x)$,  and on this interval,
$U_v'(x) = \frac{b_2-b_1}{a_2 v-a_1 v}(v-2x+a_1 v) -b_1$ and 
$U'_v(x) = 0 \iff x= \frac{v+ a_1 v}{2} - \frac{a_2 v-a_1 v}{b_2-b_1} \frac{b_1}{2}.$ The optimizer on this interval is hence $a_2 v$, if $\frac{v+ a_1 v}{2} - \frac{a_2 v-a_1 v}{b_2-b_1} \frac{b_1}{2}>a_2 v$.
Under this condition, the optimal value is $U_v(a_2 v)= b_2(v-a_1 v)$ on this interval.
This can also be extended to the whole interval $[a_1 v, v]$, since U is decreasing after $a_2 v$.
\end{itemize}
Setting \begin{equation}
\frac{b_1}{a_1 }\frac{v}{4} = b_2 \label{eq:lb3}
\end{equation}
leads to the utility having two global maximizers, $v/2$ and $a_2 v$.

To summarize, the utility's argmax is $\{v/2,a_2 v\}$ if the set of Equations \ref{hyp:1} holds.

We can for example choose :

\begin{equation*}
    v = 1/2; ~ a_2 =\frac{15}{16};~ a_1 = \frac{29}{32};~b_2 = \frac{128}{29} b_1; b_1= 0.5
\end{equation*}

This choice of parameters satisfies Condition \ref{hyp:1} and Condition \ref{eq:lb3}.
Figure \ref{fig:_ex_utility} shows the corresponding utility on $[0,v]$.
\begin{figure}[h]
\centering
  \includegraphics[width=0.55\linewidth]{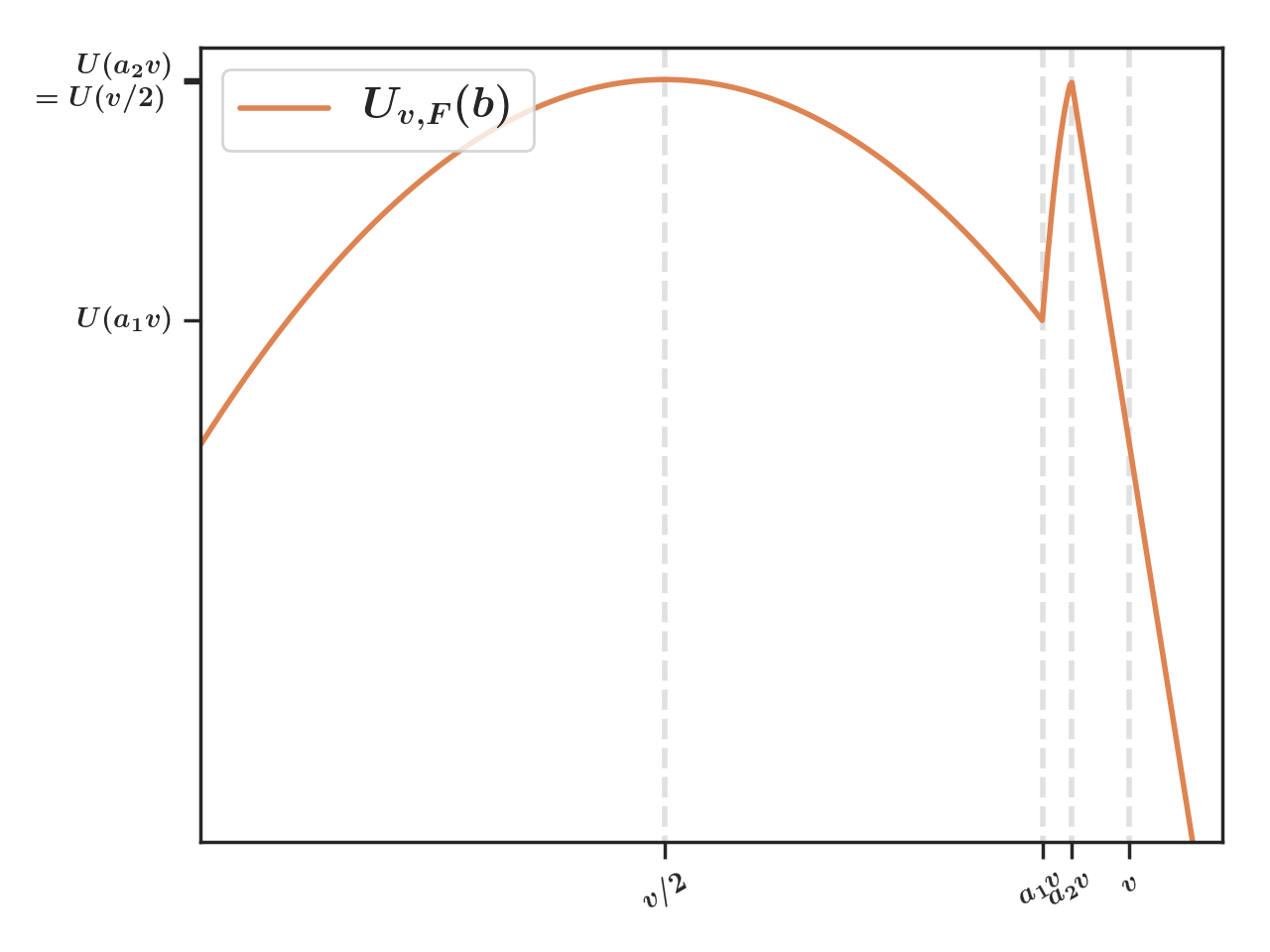}
  \caption{Associated Utility with two maximizers}
  \label{fig:_ex_utility}
\end{figure}

\section{Lower Bound}\label{sec:app_lower_bound}

\begin{repeatthm}{th:lower_bound_gen}
Let $\mathcal{C}$ denote the class of cumulative distribution functions on $[0,1]$. Any strategy, whether it assumes knowledge of $F$ or not, must satisfy
\begin{align*}
 \liminf_{T \rightarrow \infty} \frac{\max_{v \in [0,1], F \in \mathcal{C}}R_T^{v,F}}{\sqrt{T}}
 &\geq   \frac{1}{64}, 
\end{align*}
\end{repeatthm}
\begin{proof}

We exhibit a choice of $F$, and two alternative Bernoulli value distributions $Ber(v)$ and $Ber(v')$ that are difficult to distinguish but whose difference is large enough so that mistaking one for the other necessarily leads to a regret of the order of $\sqrt{T}$ when the cumulative distribution function is $F$.

Let $v<1$ and consider a discrete distribution  with support $\big\{\frac{v}{3}, \frac{2v}{3}, 1\big\}$ such that $F(\frac{v}{3}) = A$ and $F(\frac{2v}{3})= 2A + 3\frac{\Delta_T}{v}$, where $\Delta_T$ and $A$ are positive constants, that we will fix later on.  
A maximizer of the utility can only be a point of the support, since $U_{v,F}$ decreases in the intervals where $F$ is constant. It can not be $1$, because $v<1$. We have $U_{v,F}(\frac{v}{3}) = \frac{2vA}{3}$ and $U_{v,F}(\frac{2v}{3}) = \frac{2vA}{3} + \Delta_T$, while $U_{v,F}(1)\leq 0$.
Consequently,  when the value is $v$, the optimum is achieved by bidding $\frac{2v}{3}$ and bidding less than $\frac{2v}{3}$ yields a regret of at least $\Delta_T$.
Now let us consider the alternative situation in which the value is $v' = v - \delta_T$, with $\delta_T>0.$ We get $U_{v',F}(\frac{v}{3}) = \frac{2Av}{3} - \delta_T A$ and $U_{v',F}(\frac{2v}{3}) = \frac{2Av}{3} + \Delta_T - \delta_T(2A + \frac{3 \Delta_T}{v} )$.
When $\Delta_T < \delta_T(2A + \frac{3 \Delta_T}{v})$, the optimal bid is $\frac{v}{3}$ and the regret incurred by bidding more than $\frac{2v}{3}$ is at least $\delta_T(A + \frac{3 \Delta_T}{v} ) - \Delta_T$.
By setting $\Delta_T = \frac{A \delta_T}{2 - 3 \delta_T /v}$, we ensure that the regret incurred by bidding on the wrong side of $\frac{2v}{3}$ is larger than $\Delta_T$, whether the value is $v$ or $v'$.
Further, by setting $\delta_T = \sqrt{{v(1-v)}/{T}}$, we force the error $\Delta_T$ to be of the order of ${1}/{\sqrt{T}}$.

We also set $A =\frac{1}{4}$, and $v=1/2$.
We can prove that $\forall T>16$, 
 $2A +3 \frac{\Delta_T}{v}  < 1$ ; 
Indeed, if $T>16>(11/3)^2$,
$\frac{4}{3}<2\sqrt{T}-6$ hence $\frac{4}{3\sqrt{T}}<2 - \frac{6}{\sqrt{T}}$ which implies
$\frac{2}{3}\frac{\frac{1}{\sqrt{T}}}{2 - \frac{6}{\sqrt{T}}}= 6 \Delta_T<\frac{1}{2}= 1 -2A$.

We  denote by $\Po_{v,F}(\cdot)$ the probability of an event under the first configuration (respectively $\E_{v,F}(\cdot)$ the expectation of a random variable under the first configuration), and by $\Po_{v',F}(\cdot)$ the probability of an event under the second configuration (respectively $\E_{v- \delta-T, F}(\cdot)$ the expectation of a random variable under the first configuration).
We denote by $I_t$ the information collected up to time $t+1$ : $(M_t, V'_t, \ldots M_1, V'_1)$.
 $\Po_{v,F}^{I_t}$ (respectively $\Po_{v'}^{I_t}$) denotes the law of $I_t$ in the first (respectively second) configuration.

We consider the Kullback Leibler divergence between $\Po_{v,F}^{I_t}$ and $\Po_{v',F}^{I_t}$. We prove that it is equal to 
\begin{equation} KL(\Po_v^{I_t},\Po_{v',F}^{I_t})= kl(v,v') \E[N_t], \label{eq:kl_chain_rule}\end{equation}
where $kl(\cdot, \cdot)$ denotes the Kullback Leibler divergence between two Bernoulli distributions.
Indeed, thanks to  the chain rule for conditional KL, 
$$
KL(\Po_{v,F}^{I_t},\Po_{v',F}^{I_t})= KL(\Po_{v,F}^{I_t},\Po_{v',F}^{I_{t}})\\ + KL(\Po_{v,F}^{(M_t,V'_t)|I_{t}},\Po_{v',F}^{(M_t,V'_t)|I_{t}}),
$$
and
\begin{align*}
KL(\Po_{v,F}^{(M_t,V'_t)|I_{t}},\Po_{v',F}^{(M_t,V'_t)|I_{t}})&= \E[\E[KL(\nu_{I_t}\otimes \mathcal{D}_F, \nu'_{I_t}\otimes \mathcal{D}_F)|I_{t}]]\\
&= \E[kl(v, v')\1(B_t>M_t)].\\
\end{align*}
where $\nu_{I_t}$(respectively $\nu'_{I_t}$) denotes the  law of $V'_t$ knowing $I_t$ in the first configuration (respectively the second), and $\mathcal{D}_F$ the law of $M_t$.

By induction, we obtain $$ KL(\Po_{v,F}^{I_t},\Po_{v',F}^{I_t})= kl(v,v') \E_{v,F}[N_t].$$

 We stress that in either of the former configurations (under $(v,F)$ or $(v',F)$), playing on the wrong side of $\frac{2}{3} v$ yields a regret larger than $\Delta_T$.
 Using this, we get that $\forall T> 16$,
 
 \begin{align*}
 \max(R_T^{v,F}, R^{v',F}_T) & \geq \frac{1}{2}(R_T^{v,F}+ R^{v-\delta,F}_T)
 \\& \geq \frac{1}{2}\sum_{t=1}^T\left( \Delta_T \Po_{v,F}\left(B_t<\frac{2}{3} v\right) + \Delta_T \Po_{v', F}\left(B_t>\frac{2}{3} v\right)\right)\\
 & \geq \frac{1}{2}\sum_{t=1}^T\left( \Delta_T \Po_{v,F} \left(B_t<\frac{2}{3} v\right) + \Delta_T \left(1- \Po_{v',F}(B_t>\frac{2}{3} v)\right)\right)\\
  & \geq \frac{1}{2}\sum_{t=1}^T \Delta_T \left(1-  TV(\Po_{v,F}^{I_t}, \Po_{v',F}^{I_t})\right)\\
  & \geq \frac{1}{2}\sum_{t=1}^T \Delta_T \left(1-  \sqrt{\frac{1}{2}KL(\Po_{v,F}^{I_t}, \Po_{v',F}^{I_t})}\right)\\
  &\geq \frac{1}{2}\sum_{t=1}^T \Delta_T \left(1-  \sqrt{\frac{1}{2} \E_{v,F}[N_t]kl(v, v')}\right)
 \\
 &\geq \frac{1}{2}\sum_{t=2}^T \Delta_T \left(1-  \sqrt{\frac{1}{2} T kl(v, v')}\right)
 \end{align*}
 where we used Pinsker's inequality in the fifth inequality and where $TV(\cdot, \cdot)$ denotes the total variation. 
 Yet, since $kl(v,v')= \frac {(v'-v)^2}{2} \int_0^1 g''(v' + s(v' + s(v-v'))2(1-s)ds$, where $g(x) = kl(x,v')$ thanks to Taylor's inequality, 
 \begin{align*}
 kl(v,v')&\leq \frac {(v'-v)^2}{2} \int_0^1 2 \max_{u\in[v,v']} g''(u)ds\\
 &\leq (v'-v)^2 \frac{1}{\min_{u\in[v,v']}u(1-u)}\\
 &\leq \frac{(v'-v)^2}{v'(1-v')},
 \end{align*} 
 since $v= \frac{1}{2}$.
 
 Therefore, 
 
  \begin{align*}
 \max(R_T^{v,F}, R^{v',F}_T)  &
 \geq \frac{1}{2}\sum_{t=1}^T \Delta_T \left(1-  \sqrt{\frac{1}{2} T kl(v, v')}\right)\\
 &\geq \frac{1}{2}\sum_{t=1}^T \Delta_T \left(1-  \sqrt{\frac{1}{8} \frac{1}{(1/2 -\frac{1}{2\sqrt{T}})(1/2 +\frac{1}{2\sqrt{T}})}}\right)
 \\
 &\geq \frac{1}{2}\times \frac{A \delta_T}{2 - 3/2 \delta_T } T\left(1-  \sqrt{\frac{1}{8} \frac{1}{(1/2 -\frac{1}{2\sqrt{T}})(1/2 +\frac{1}{2\sqrt{T}})}}\right)
 \\
 &\geq\frac{1}{16 - 12 /\sqrt{T}} \sqrt{T} \left(1-  \sqrt{\frac{1}{8} \frac{1}{(1/2 -\frac{1}{2\sqrt{T}})(1/2 +\frac{1}{2\sqrt{T}})}}\right)
 \end{align*}
 
 Finally 
 \begin{align*}
 \liminf_{T \rightarrow \infty} \frac{\max(R_T^{v,F}, R^{v',F}_T)}{\sqrt{T}}
 &\geq   \frac{1}{16} \left(1-  \sqrt{\frac{1}{2}}\right)  \geq \frac{1}{64}
 \end{align*}
 
\end{proof}

\section{Preliminary Results}\label{sec:app_prelim}

\subsection{Concentration inequalities used for the upper bounds} \label{sec:concentration_ineq}

\subsubsection{ On the value $V_t$}\label{subsec:app_concentration_v}

\begin{lemma} \label{lem:concentration_V}The following concentration inequality on the values holds 
\begin{align*}
&\sum_{t=2}^T \Po
	\left(
		(\hat{V}_t-v)^2 \geq \frac{\gamma \log (t-1)}{2 N_t}
		\right) \leq 
\sum_{t=1}^T 2 e \sqrt{\gamma}( \log (t))t^{-\gamma}.
\end{align*}
\end{lemma}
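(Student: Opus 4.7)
The plan is to reduce the statement to a concentration inequality for a sum of i.i.d.\ bounded random variables, then apply a peeling argument over the random index $N_t$ to improve upon the loose rate $t^{1-\gamma}$ that a naive union bound would produce.

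First I would show that $\hat V_t = (1/N_t)\sum_{k=1}^{N_t}\tilde V_k$, where $\tilde V_1, \tilde V_2, \dots$ are i.i.d.\ copies of $V$. This holds because $V_s$ is independent of both $M_s$ and $\mathcal{F}_{s-1}$, so conditioning on the event $\{M_s\le B_s\}$ (which is measurable with respect to $\sigma(M_s,\mathcal{F}_{s-1})$) does not perturb the distribution of $V_s$ among selected indices. Setting $S_n := \sum_{k=1}^n (\tilde V_k - v)$, the target event rewrites as $\{S_{N_t}^2 \ge N_t\,\gamma\log(t-1)/2\}$. A useful preliminary remark is that since $|\hat V_t - v|\le 1$, the event is vacuous unless $N_t \ge \gamma\log(t-1)/2$, so I may restrict attention to this range of indices.

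Next I would apply a peeling argument on $N_t$: partition the admissible range into geometric slices $n\in [\alpha^k, \alpha^{k+1})$ for some $\alpha>1$. On each such slice, the target event implies $\max_{n\in[\alpha^k,\alpha^{k+1})}|S_n|\ge\sqrt{\gamma\alpha^k\log(t-1)/2}$, and Doob's maximal inequality applied to the Hoeffding supermartingale $\exp(\lambda S_n - n\lambda^2/8)$ gives a probability at most $2\exp(-\gamma\log(t-1)/\alpha)$. Summing over the $O(\log_\alpha(t-1))$ slices produces a per-$t$ bound of the form $C(\alpha)\,\log(t)\,(t-1)^{-\gamma/\alpha}$. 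A careful choice of $\alpha$ (close to $1$ and depending on $\gamma$) yields the announced $2e\sqrt{\gamma}\,\log(t)\,t^{-\gamma}$ rate, after which I would sum over $t\in\{2,\dots,T\}$ to recover the stated inequality.

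The main obstacle lies precisely in this calibration of $\alpha$ and the accompanying bookkeeping of constants. A coarse choice such as $\alpha=2$ gives the suboptimal exponent $\gamma/2$ through Doob's inequality, while a very fine choice with $\alpha-1=O(1/\log t)$ inflates the number of slices and threatens to introduce a spurious $\log^2 t$ factor. Recovering exactly the form $\sqrt{\gamma}\log(t)\,t^{-\gamma}$ with constant $2e$ requires tying $\alpha$ carefully to $\gamma$ and $t$, and leveraging that the event is empty for small $N_t$ in order to absorb the logarithmic slack induced by the slicing; this is where the $\sqrt{\gamma}$ prefactor arises from the optimization.
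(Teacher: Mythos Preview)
Your strategy is correct and matches the paper's, which simply invokes a known time-uniform concentration result (Lemma~11 of Capp\'e et al., 2013) as a black box: after the same reduction to the event $\{\exists\, m\le t-1:\ 2m(\hat V(m)-v)^2\ge\gamma\log(t-1)\}$, the paper quotes the bound $2e\sqrt{\eta\log t}\,e^{-\eta}$ and substitutes $\eta=\gamma\log(t-1)$. The peeling-plus-Doob argument you outline is precisely the machinery behind that cited lemma, so you are essentially re-deriving from scratch what the paper imports. Your worry about the exact $\sqrt{\gamma}\log t$ prefactor is legitimate---standard geometric peeling with ratio $\alpha$ close to~$1$ naturally produces a $\gamma\log^2 t$ factor rather than $\sqrt{\gamma}\log t$, and the vacuity-for-small-$N_t$ observation does not by itself close this gap---but the distinction is immaterial here since either prefactor sums to $O(1)$ over $t$ when $\gamma>1$.
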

\begin{proof}
We have, for all $\eta_{t-1}$,
\begin{align*} 
	\sum_{t=2}^T \Po
	\left(
		(\hat{V}(N_t)-v)^2 \geq \frac{\eta_{t-1}}{2 N_t}
		\right) 
		&\leq 
		\sum_{t=2}^T \Po \left(\exists m : ~1\leq m \leq t, 2m (\hat{V}(m)-v)^2 \geq 			\eta_{t-1} \right)
	 \\
	&\leq \sum_{t=1}^T 2 e \sqrt{\eta_{t-1} \log (t-1) } \exp(-\eta_{t-1}) := l_1(T)
\end{align*}
where the second inequality comes from Lemma 11 in \citep{cappe2013kullback}, and from the fact that $V_t$ is a positive random variable  bounded by 1, so $1/2-$ sub-Gaussian.

Therefore, if $\eta_t := \gamma \log t$,
\begin{align*}
l_1(T) = \sum_{t=2}^T 2 e \sqrt{\gamma}( \log(t-1) )(t-1)^{-\gamma}
\end{align*} which tends to a finite limit as soon as $\gamma> 1$.
\end{proof}
\subsubsection{On the cumulative distribution function of $M_t$}

\begin{lemma} \label{lem:concentration_F}  The following concentration inequality holds on the empirical cumulative distribution $\hat{F}_t$.
$$
		\sum_{t=2}^T \Po \left(
			\| \hat{F}_t-F\|_{\infty}\geq \frac{\gamma \log(t-1)}{2 (t-1)}
		\right) \\
		\leq 2 \sum_{t=1}^T t^{-\gamma} .$$
\end{lemma}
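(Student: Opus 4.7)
The plan is to apply the Dvoretzky--Kiefer--Wolfowitz (DKW) inequality separately at each round and then sum. The key observation is that in our model the opponents' maximal bids $M_1, M_2, \ldots$ are i.i.d.\ with CDF $F$ and, crucially, are observed at every round regardless of the bidder's action. Consequently $\hat F_t(b) = \tfrac{1}{t-1}\sum_{s=1}^{t-1} \1\{M_s \le b\}$ is exactly the standard empirical CDF built from $t-1$ i.i.d.\ samples from $F$. Because the sample size $t-1$ is deterministic (in contrast to the random count $N_t$ governing $\hat V_t$), one can invoke DKW directly at round $t$, without any peeling or union-bound refinement of the kind used in the proof of Lemma~\ref{lem:concentration_V}.

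Next I would instantiate DKW at the threshold implicit in the statement, namely $\epsilon_t = \sqrt{\gamma \log(t-1)/(2(t-1))}$, the deviation being understood in the same squared form as in Lemma~\ref{lem:concentration_V}. Massart's sharp version \citep{massart1990tight} gives
$$\Po\!\left(\|\hat F_t - F\|_\infty \ge \epsilon_t\right) \le 2 \exp\!\left(-2(t-1)\epsilon_t^{\,2}\right) = 2(t-1)^{-\gamma}.$$
This is the per-round bound; it mirrors the bound obtained for $\hat V_t$, but with a cleaner constant precisely because the denominator in $\epsilon_t^{\,2}$ is the deterministic $t-1$ rather than the random $N_t$.

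Finally I would sum over $t=2,\ldots,T$ and re-index with $s=t-1$, giving
$$\sum_{t=2}^T \Po\!\left(\|\hat F_t - F\|_\infty \ge \epsilon_t\right) \le \sum_{t=2}^T 2(t-1)^{-\gamma} = 2\sum_{s=1}^{T-1} s^{-\gamma} \le 2\sum_{t=1}^T t^{-\gamma},$$
which is the stated inequality (and converges as $T\to\infty$ as soon as $\gamma>1$). There is no real obstacle in this proof: the whole content of the lemma is a one-line application of DKW followed by a trivial summation. Its role in the paper is purely to record, in a form parallel to Lemma~\ref{lem:concentration_V}, the quantitative concentration bound for $\hat F_t$ that will subsequently be plugged into the regret analysis of UCBid1+.
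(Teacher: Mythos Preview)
Your proposal is correct and matches the paper's proof essentially line for line: both apply Massart's DKW inequality at each round $t$ with threshold $\epsilon_t^2 = \gamma\log(t-1)/(2(t-1))$ to obtain $2(t-1)^{-\gamma}$, then re-index and sum. You also correctly identify that the deviation in the lemma statement is to be read in squared form (as the paper's proof makes explicit), which is indeed the intended interpretation.
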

\begin{proof}
It holds
\begin{align*}
	&\sum_{t=2}^T \Po \left(
		(\max_{b\in[0,1]} |F_t(b)-F(b)|)^2 \geq \frac{\gamma \log(t-1)}{2(t-1)}
	\right)\\
		&\leq \sum_{t=2}^T \Po \left(
			\| \hat{F}_t-F\|_{\infty}^2\geq \frac{\gamma \log(t-1)}{2 (t-1)}
		\right)\\
		&\leq \sum_{t=1}^{T-1} 2 e^{-\frac{2\gamma  \log (t} {2t}}\\
		&\leq \sum_{t=1}^T 2 t^{-\gamma},
\end{align*}according to the Dvoretzky–Kiefer–Wolfowitz inequality (see \cite{massart1990tight}).\\
Note that this also  yields 
\begin{align*}
 \sum_{t=2}^T \Po \left(
			\| \hat{F}_t-F\|_{\infty}\geq \frac{\gamma \log(t-1)}{2 N_t}
		\right)
		&\leq \sum_{t=2}^T
		\Po \left(
			\| \hat{F}_t-F\|_{\infty}\geq \frac{\gamma \log(t-1)}{2 (t-1)}
		\right) \\
		&\leq 2 \sum_{t=1}^T t^{-\gamma} .
\end{align*}
\end{proof}

\subsubsection{Local concentration inequality}

This lemma is key for the proof of the upper bound of the regret of UCBid1+. It quantifies the variation of $\hat{F}_t$ on a small interval.
\begin{repeatlem}{lem:local_concentration_inequality}
For any $a,b\in [0,1]$, if $F$ is continuous and increasing, then 
\begin{multline}\sup_{a\leq x \leq b}|\hat{F}_t(x) - F(x) - (\hat{F}_t(a)- F(a))| \\\leq  \sqrt{\frac{2(F(b)- F(a))\log
\left( \frac{e \sqrt{t}}{\sqrt{2(F(b)- F(a))}\eta}\right)
}
{t}} + \frac{\log(\frac{t}{2(F(b)- F(a) \eta^2 })}{6 t},
\end{multline}
with probability $1-\eta$
\end{repeatlem}

Remark : it follows from the lemma that the the maximal gap between $\hat{F}_t(x) - F(x)$ and $\hat{F}_t(\frac{a+b}{2})- F(\frac{a+b}{2})$ can easily be bounded by : \begin{multline*}\sup_{a\leq x \leq b}|\hat{F}_t(x) - F(x) - (\hat{F}_t(\frac{a+b}{2})- F(\frac{a+b}{2}))|\\ \leq 2 \sqrt{\frac{2(F(b)- F(a))\log
\left( \frac{e \sqrt{t}}{\sqrt{2\eta(F(b)- F(a))}}\right)}
{t}} + 2 \frac{\log(\frac{t}{2(F(b)- F(a) \eta^2 })}{6 t}
\end{multline*} with probability $1-\eta$.

\textbf{Proof}:

Let $X_1,\dots,X_n\stackrel{iid}\sim dF$. 
Let $m>2$
For every $1\leq i\leq m$, let $x_i$ be such that
\[F(x_i) = F(a) + \frac{i}{m}\big(F(b)-F(a)\big)\;. \]
By Bernstein's inequality, since $t\big(\Fn(x_i)-\Fn(a)\big) \sim \mathcal{B}(n, F(x_i)-F(a))$ has a variance bounded by $t\big(F(b)-F(a))$, there is an event $A$ of probability at least $1- m e^{-z}$ on which 
\begin{align*} 
\max_{0\leq i\leq m} \big| \Fn(x_i)-\Fn(a) -(F(x_i)-F(a)) \big| \leq \sqrt{\frac{2\big(F(b)-F(a)\big)z}{t}}  + \frac{z}{3t} := \delta,
\end{align*}
by a union bound.
Besides, for $i=0$, $\Fn(x_i)-\Fn(a) -(F(x_i)-F(a))=0$.

On this event, for every $x_{i-1}\leq x \leq x_i$:
\begin{align*}
\Fn(x)-\Fn(a) -(F(x)-F(a))  & \leq \Fn(x_i)-\Fn(a)-(F(x_i)-F(a)) + F(x_i) - F(x) \leq \delta + \frac{1}{m},\\
\Fn(x)-\Fn(a) -(F(x)-F(a))  &\geq \Fn(x_{i-1})-\Fn(a)-(F(x_{i-1})-F(a)) + F(x_{i-1}) - F(x) \\
&\geq -\delta - \frac{1}{m}\;.
\end{align*}
and hence
\[\sup_{a\leq t \leq b} \big| \Fn(x)-\Fn(a) -(F(x)-F(a)) \big|  \leq \sqrt{\frac{2\big(F(b)-F(a)\big)z}{t}}  + \frac{z}{3t} + \frac{1}{m} \;.\]

Now, take \[m= \Big \lceil \sqrt{\frac{t}{2\big(F(b)-F(a)\big)}} \Big \rceil \]  and $z = \log(m/\eta) $: one gets that with probability at least $1-\eta$, 
\begin{align*}
\sup_{a\leq t \leq b}& \big| \Fn(x)-\Fn(a) -(F(x)-F(a)) \big| \\& \leq
 \sqrt{\frac{2\big(F(b)-F(a)\big)\log\left( \frac{\sqrt{\frac{t}{2(F(b)-F(a))}}}{\eta}\right)}{t}}  + \frac{\log\left( \frac{\sqrt{\frac{t}{2(F(b)-F(a))}}}{\eta}\right)}{3t} + \sqrt{\frac{2\big(F(b)-F(a)\big)}{t}}\\
 &\leq    \sqrt{\frac{2\big(F(b)-F(a)\big)\log\left(\frac{e\sqrt{t}}{\sqrt{2(F(b)-F(a))}\eta}\right)}{t}} + \frac{\log\left( \frac{t}{2(F(b)-F(a))\eta^2}\right)}{6t}\;.
\end{align*}

\subsection{General bound on the instantaneous regret}\label{subsec:instan_regret_app}
In the following, we will repeatedly use the following general bound on the instantaneous regret conditioned on the past and on a current victory. 
\begin{lemma}\label{lem:conditional_exp}
Let $A$ be an $\mathcal{F}_{t-1}$-measurable event. Let $S_t$ denote $(V_t-b^*)\1(M_t < b^*) - (V_t-B_t)\1(M_t < B_t) $.
The following inequality holds:
\begin{equation*}
\E\left[S_t \1(B_t>b^*)\1(A)| \mathcal{F}_{t-1} \vee \sigma(\1(B_t>M_t))\right]\leq \frac{U(b^*)- U(B_t)}{F(b^*)} \1(M_t\leq B_t)\1(A).
\end{equation*}
\end{lemma}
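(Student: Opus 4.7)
The plan is to exploit measurability to reduce the statement to a conditional calculation in two cases (win vs.\ loss), then to turn the resulting algebraic inequality into the defining optimality of $b^*$.

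First, since $B_t$ and the event $A$ are $\mathcal{F}_{t-1}$-measurable, and $b^*$ is deterministic, the indicator $\1(B_t>b^*)\1(A)$ can be pulled outside the conditional expectation. It therefore suffices to bound $\E[S_t \mid \mathcal{F}_{t-1}\vee\sigma(\1(B_t>M_t))]$ on the event $\{B_t>b^*\}\cap A$. Split the analysis according to whether $M_t\le B_t$ (a win) or $M_t>B_t$ (a loss). On the loss event, combined with $B_t>b^*$, we automatically have $M_t>b^*$, so both indicators appearing in $S_t$ vanish and $S_t=0$ almost surely; this matches the factor $\1(M_t\le B_t)$ on the right-hand side. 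The whole content of the lemma is therefore concentrated in the win event.

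On $\{M_t\le B_t\}$, I would use that $V_t$ is independent of $(M_t,\mathcal{F}_{t-1})$ and of $\1(B_t>M_t)$. The only nontrivial piece is the conditional law of $\1(M_t<b^*)$ given that $M_t\le B_t$: since $b^*<B_t$ on the event considered, a direct computation (assuming, for simplicity, that $F$ is continuous, which matches how the paper uses these quantities) gives
\[
\E\big[\1(M_t<b^*)\,\big|\,\mathcal{F}_{t-1},\,M_t\le B_t\big]=\frac{F(b^*)}{F(B_t)}.
\]
Plugging in and using $\E[V_t]=v$ yields, on the win event,
\[
\E[S_t\mid\mathcal{F}_{t-1},M_t\le B_t]=(v-b^*)\frac{F(b^*)}{F(B_t)}-(v-B_t).
\]

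The final step is to verify that this quantity is at most $(U(b^*)-U(B_t))/F(b^*)$. Expanding $U(b^*)-U(B_t)=(v-b^*)F(b^*)-(v-B_t)F(B_t)$ and clearing denominators by the positive factor $F(b^*)F(B_t)$, the desired inequality rearranges to
\[
\bigl(F(b^*)-F(B_t)\bigr)\bigl[(v-b^*)F(b^*)-(v-B_t)F(B_t)\bigr]\le 0,
\]
that is, $(F(b^*)-F(B_t))(U(b^*)-U(B_t))\le 0$. The first factor is non-positive because $B_t>b^*$ and $F$ is non-decreasing, while the second is non-negative by the very definition of $b^*$ as a maximizer of $U$, so the product is indeed non-positive.

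The only real subtlety is the treatment of ties and of the strict versus non-strict inequalities in the definitions of $S_t$, of "winning", and of the CDF $F$; as long as we either assume $F$ continuous at $b^*$ and $B_t$ or adopt the same convention throughout (as the paper implicitly does), the computation of the conditional probability $\Po(M_t<b^*\mid M_t\le B_t)=F(b^*)/F(B_t)$ goes through and the rest is mechanical. No concentration inequality or structural assumption on $F$ is needed here; the lemma is purely a conditioning identity combined with the optimality of $b^*$.
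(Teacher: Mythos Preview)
Your proof is correct and follows essentially the same route as the paper's: condition on win/loss, note $S_t=0$ on the loss, compute the conditional expectation on the win, and finish using $F(B_t)\ge F(b^*)$ together with the optimality of $b^*$. One minor simplification you overlooked is that your expression $(v-b^*)\,F(b^*)/F(B_t)-(v-B_t)$ on the win event is \emph{exactly} $(U(b^*)-U(B_t))/F(B_t)$, so the final comparison with $(U(b^*)-U(B_t))/F(b^*)$ is immediate from $F(B_t)\ge F(b^*)$ and $U(b^*)\ge U(B_t)$, without the product rearrangement.
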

\begin{proof}
When $B_t>b^*$, the instantaneous regret can be decomposed as follows 
\begin{equation}
S_t \1(B_t>b^*) =(B_t-v)\1(M_t\leq b^*) \1(B_t>b^*)+ (B_t - b^*)\1 \left\{(M_t \leq b^* \leq B_t)\right\}.
\end{equation}
Note that in particular, there is no instantaneous regret when $M_t> B_t$.
Therefore
\begin{align*}
&\E\left[S_t \1(B_t>b^*) \1(A)| \mathcal{F}_{t-1} \vee \1(B_t>M_t)\right]\\
& \leq \frac{(B_t-b^*)F(b^*) + (B_t-v)(F(B_t)-F(b^*))}{F(B_t)}
\1(M_t\leq B_t) \1(B_t>b^*) \1(A)\\
&\leq \frac{U(b^*)- U(B_t)}{F(b^*)} \1(M_t\leq B_t) \1(A),
\end{align*}
since $U(b^*)-U(B_t)= (v-b^*)F(b^*)- (v-B_t)F(B_t)$, which also equals $(B_t-b^*)F(b^*) + (B_t-v)(F(B_t)-F(b^*))$.
\end{proof}

\subsection{Other lemmas}

\begin{lemma}\label{lem:sum_1_n}
The expectations $\E\left[\sum_{t=2}^T\frac{1}{N_t} \1\{M_t\leq B_t\}\right]$ and $ \E\left[\sum_{t=2}^T \sqrt{\frac{1}{N_t}} \1\{M_t\leq B_t\}\right]$  can always be bounded as follows
$$\begin{cases}\E\left[\sum_{t=2}^T \frac{1}{N_t} \1\{M_t\leq B_t\}\right]\leq 1+\log T,\\
 \E\left[\sum_{t=2}^T  \sqrt{\frac{1}{N_t}} \1\{M_t\leq B_t\}\right] \leq 1+ \sqrt{T}.\end{cases}$$
\end{lemma}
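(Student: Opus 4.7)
The central observation is that the indicator $\1\{M_t\leq B_t\}$ marks exactly the times at which the counter $N$ is incremented by one, so both quantities on the left can be reindexed as partial sums over the number of wins. Since both algorithms (UCBid1 and UCBid1+) initialize $B_1=1\geq M_1$, the first round is always won, hence $N_t\geq 1$ for every $t\geq 2$ and the reciprocals $1/N_t$ and $1/\sqrt{N_t}$ are well defined throughout the sum.

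Let $\tau_1=1<\tau_2<\cdots$ denote the successive win times, and $K(T)=N_{T+1}\leq T$ the total number of wins up to horizon $T$. By construction $N_{\tau_k}=k-1$, and $\1\{M_t\leq B_t\}=1$ precisely when $t\in\{\tau_1,\tau_2,\dots\}$. We may therefore reindex pathwise,
$$\sum_{t=2}^T \frac{\1\{M_t\leq B_t\}}{N_t} \;=\; \sum_{k=2}^{K(T)}\frac{1}{k-1} \;=\; \sum_{j=1}^{K(T)-1}\frac{1}{j},$$
and likewise $\sum_{t=2}^T \1\{M_t\leq B_t\}/\sqrt{N_t}=\sum_{j=1}^{K(T)-1}1/\sqrt{j}$. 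Since $K(T)-1\leq T-1$, each reindexed sum is dominated by the corresponding deterministic partial sum up to index $T-1$, independently of the sample path.

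It then suffices to invoke the classical integral comparisons $\sum_{j=1}^n 1/j\leq 1+\log n$ and $\sum_{j=1}^n 1/\sqrt{j}\leq 2\sqrt{n}$ (the latter from $1/\sqrt{j}\leq\int_{j-1}^{j}dx/\sqrt{x}$ for $j\geq 2$, combined with the $j=1$ term). Because the resulting bounds hold pathwise, they transfer to the expectation with no additional step, yielding the two inequalities of the lemma (possibly up to a tightening of the constant in the $\sqrt{T}$ bound, which does not affect the subsequent regret analyses).

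There is no real obstacle here: the only subtlety is checking that $N_t\geq 1$ for $t\geq 2$ so that every summand is finite, which is guaranteed by the initial bid $B_1=1$. The lemma is essentially a deterministic rearrangement followed by a textbook $p$-series estimate.
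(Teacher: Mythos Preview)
Your proof is correct and takes essentially the same approach as the paper: both arguments exploit that $\1\{M_t\leq B_t\}$ marks exactly the increments of the counter $N_t$, so the sum collapses to a harmonic-type sum indexed by the number of wins. You phrase this via explicit win times $\tau_k$, while the paper writes the equivalent double-sum decomposition $\sum_{t}\sum_{n}\1\{N_t=n,\,N_{t+1}=n+1\}$ and notes that the inner sum is at most $1$ for each $n$; these are two notations for the same reindexing.

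Your remark about the constant in the square-root bound is well taken: the integral comparison actually gives $\sum_{j=1}^{T-1}1/\sqrt{j}\leq 2\sqrt{T-1}-1$, not $1+\sqrt{T}$, and the paper's own chain of inequalities yields the same quantity before asserting $1+\sqrt{T}$. This is a minor slip in the stated constant that does not affect any of the $O(\sqrt{T\log T})$ regret bounds where the lemma is invoked.
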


\begin{proof}
Since winning an auction increments the number of observations $N_t$ by 1,
\begin{align*}
\sum_{t=2}^T \E\Big[ \sqrt{\frac{1}{N_{t}}} \1(M_t \leq B_t)\Big]
&\leq \sum_{t=2}^T \sum_{n=1}^{T-1}  \sqrt{\frac{1}{n}}  \1\{N_t= n,~ N_{t+1} =n+1\}\\
&\leq \sum_{n=1}^{T-1}  \sqrt{\frac{1}{ n}} \sum_{t=2}^T \1\{N_{t}= n,~ N_{t} =n+1\}\\
&\leq \sum_{n=1}^{T-1}  \sqrt{\frac{1}{ n}}\\
&\leq 1 + \sum_{n=2}^{T-1} \int_{n-1}^n \sqrt{\frac{1}{ u}}du\\
&\leq 1+\sqrt{T}.
\end{align*}

Similarly, we get
\begin{align*}
\sum_{t=2}^T \E\Big[ \frac{1}{N_{t}} \1(M_t \leq B_t)\Big]
&\leq \sum_{t=2}^T \sum_{n=1}^{T-1}  \frac{1}{n}  \1\{N_t= n,~ N_{t+1} =n+1\}\\
&\leq \sum_{n=1}^{T-1}  \frac{1}{ n} \sum_{t=2}^T \1\{N_{t}= n,~ N_{t} =n+1\}\\
&\leq \sum_{n=1}^{T-1}  \frac{1}{ n}\\
&\leq 1 + \sum_{n=2}^{T-1} \int_{n-1}^n \frac{1}{ u}du\\
&\leq 1+\log{T}.
\end{align*}
\end{proof}

\begin{lemma}\label{lem:dist_U_max}
If $g_1$ and $g_2$ are two functions such that $\|g_1 - g_2\|_{\infty} \leq \delta$, then 
$$g_1(b_1^*) - g_1(b^*_2) \leq 2 \delta$$
 where $b_1^* = \max(\argmax_{b \in [0,1]} g_1(b))$
 and $b_2^* = \max(\argmax_{b \in [0,1]} g_2(b))$.
\end{lemma}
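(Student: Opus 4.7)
The plan is to establish the inequality via a three-term chaining argument that inserts $g_2$ between the two values of $g_1$. The key observation is that the uniform bound $\|g_1 - g_2\|_\infty \le \delta$ controls the discrepancy between $g_1$ and $g_2$ at any single point, and that the definition of $b_2^*$ as a maximizer of $g_2$ forces $g_2(b_2^*) \ge g_2(b_1^*)$.

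More explicitly, I would write
\begin{equation*}
g_1(b_1^*) - g_1(b_2^*) = \bigl(g_1(b_1^*) - g_2(b_1^*)\bigr) + \bigl(g_2(b_1^*) - g_2(b_2^*)\bigr) + \bigl(g_2(b_2^*) - g_1(b_2^*)\bigr).
\end{equation*}
The first and third brackets are each bounded by $\delta$ in absolute value by the uniform bound assumption, so in particular each is at most $\delta$. The middle bracket is non-positive, since $b_2^* \in \argmax_{b \in [0,1]} g_2(b)$ implies $g_2(b_2^*) \ge g_2(b_1^*)$, i.e.\ $g_2(b_1^*) - g_2(b_2^*) \le 0$. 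Summing the three bounds gives $g_1(b_1^*) - g_1(b_2^*) \le 2\delta$, which is the desired conclusion.

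There is no real obstacle here: the statement is a routine ``maximizers are close in value'' fact that does not use either the outer $\max$ convention in the definitions of $b_1^*, b_2^*$ or any structural property of $[0,1]$ beyond the existence of maximizers (which is implicitly assumed). The outer $\max$ over the argmax sets is irrelevant for the proof since any element of $\argmax g_i$ could be substituted for $b_i^*$ without affecting the chain of inequalities. The whole argument is three lines and uses only the definition of a maximizer together with the triangle-inequality-style splitting above.
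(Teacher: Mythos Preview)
Your proof is correct and essentially identical to the paper's. The paper writes the decomposition as two terms, $g_1(b_1^*) - g_1(b_2^*) \leq \bigl(g_1(b_1^*) - g_2(b_2^*)\bigr) + \bigl(g_2(b_2^*) - g_1(b_2^*)\bigr) \leq 2\delta$, which simply absorbs your first two brackets into one (using $g_2(b_2^*) \ge g_2(b_1^*)$ implicitly); the underlying idea is the same chaining through $g_2$ at the two points.
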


\begin{proof}
Indeed, 
\begin{align*}
0\leq g_1(b^*_1) - g_1(b^*_2) &\leq g_1(b_1^*) - g_2(b^*_2) + g_2(b^*_2) -  g_1(b^*_2) \\
 & \leq  2 \delta.
\end{align*}
\end{proof} 

\begin{lemma}\label{lem:log}
For any $a>0$, 
 $t\geq2 a \log(a)$ implies $t\geq a \log t$.
\end{lemma}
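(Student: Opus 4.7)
The plan is to introduce the auxiliary function $h(t) = t - a\log t$ defined on $(0,\infty)$ and show that $h(t)\geq 0$ for every $t\geq 2a\log a$. Since $h'(t) = 1 - a/t$, the function $h$ is decreasing on $(0,a]$ and increasing on $[a,\infty)$, with a unique minimum at $t=a$. Hence the minimum of $h$ restricted to $[2a\log a,\infty)$ is attained either at the interior critical point $t=a$ (when $2a\log a \leq a$) or at the left endpoint $t=2a\log a$ (when $2a\log a>a$). A case analysis on the position of $2a\log a$ relative to $a$ will cover all values of $a>0$.

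In the first case, $2a\log a \leq a$ is equivalent to $\log a \leq 1/2$, i.e.\ $a\leq \sqrt{e}$; here the minimum of $h$ on $[2a\log a,\infty)$ is $h(a) = a(1-\log a)$, which is nonnegative because $\log a \leq 1/2 <1$. Note that when $a<1$ the lower bound $2a\log a$ is negative, which is harmless: $h$ tends to $+\infty$ at $0^+$ and reaches its minimum at $t=a>0$, so the inequality $h(t)\geq 0$ holds on all of $(0,\infty)\supseteq [2a\log a,\infty)$.

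In the second case, $a>\sqrt{e}$ and $2a\log a>a$, so $h$ is increasing on $[2a\log a,\infty)$ and it suffices to verify that $h(2a\log a)\geq 0$. A direct computation gives
\[
h(2a\log a) = 2a\log a - a\log(2a\log a) = a\bigl(\log a - \log 2 - \log\log a\bigr),
\]
so the target reduces to the elementary inequality $a \geq 2\log a$. Setting $g(a)=a-2\log a$, one has $g'(a)=1-2/a$, whence $g$ is minimized on $(0,\infty)$ at $a=2$ with value $g(2)=2-2\log 2 > 0$, so $g(a)>0$ for every $a>0$.

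There is no real obstacle here; the only subtlety is handling the regime $a<1$ (where $2a\log a$ is negative and the hypothesis looks vacuous), which is resolved by the monotonicity observation above. The key quantitative step is the scalar inequality $a\geq 2\log a$, which is a one-line consequence of calculus.
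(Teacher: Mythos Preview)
Your proof is correct. It takes a genuinely different route from the paper. The paper's argument applies the scalar inequality $\log(x/y)\leq x/y$ with $x=t$, $y=2a$ to obtain $a\log t \leq t/2 + a\log(2a)$, and then combines this with the hypothesis $a\log a\leq t/2$ to conclude $a\log t\leq t$. As printed, the paper's display has the inequality signs reversed and silently drops the $a\log 2$ term, so strictly speaking that argument only yields the conclusion under the stronger hypothesis $t\geq 2a\log(2a)$; this is harmless for the applications in the paper but leaves a small gap relative to the lemma as stated.

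Your approach instead analyzes $h(t)=t-a\log t$ directly, locates its unique minimum at $t=a$, and splits on whether this minimum lies inside the constraint region. The reduction of Case~2 to the one-variable inequality $a\geq 2\log a$ is clean and closes the gap exactly. The trade-off is length: the paper's linearization is a two-line heuristic, while your argument is a few lines longer but fully rigorous and proves the lemma precisely as stated, including the borderline regime $a\in(\sqrt{e},e)$ where $\log\log a<0$.
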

\begin{proof}
\begin{align*}
    a \log t &\geq a \left(\frac{t}{2a} + \log(2a)\right)\\
    &\geq t/2 + a \log (a),
    \end{align*}
where the first inequality follows from the fact that $\log(x/y)\leq x/y$ for any positive $x$ and $y$.
Hence when $t>2 a \log(a)$, 
$t\geq t/2+ a \log t \geq a \log t.$
\end{proof}

\section{Known $F$}\label{sec:app_known_F}
\subsection{Upper Bounds of the Regret of UCBid1}
We prove the somewhat more precise form of Theorem \ref{th:FPUCBID_general}.
\begin{repeatthm}{th:FPUCBID_general}
UCBid1 incurs a regret bounded as follows
$$R_T \leq  \frac{1 }{ F(b^*)} \sqrt{\gamma \log T }(\sqrt{T}+1) + O(1).$$
\end{repeatthm}
\begin{proof}
We denote by $U^{UCBid1}_t$ the function $b \mapsto (\hat{V}_t + \epsilon_t -b)F(b)$.
The regret can be decomposed as follows. 
\begin{align*}
R_T&\leq 1 + \sum_{t=2}^T\Po \left(|\hat{V}_{t}-v|\geq \epsilon_{t} \right) + \sum_{t=2}^T\E \left[S_t \1\left\{|\hat{V}_{t}-v|\leq \epsilon_{t} \right\}\right],
\end{align*}
Lemma \ref{lem:concentration_V} yields the following bound on the probability of over-estimating $\hat{V}_t$: $$\sum_{t=2}^T\Po(|\hat{V}_{t}-v|\geq \epsilon_{t}) \leq \sum_{t=1}^{t}2 e \sqrt{\gamma}( \log t )t^{-\gamma}.$$

Since $F(x)\leq 1, \forall x\in[0,1]$, and $\|U^{UCBid1}_{t}-U\|_{\infty} = \|(\hat{V}_{t}-v+ \epsilon_{t})F(x)\|_{\infty}\leq |\hat{V}_{t}-v+ \epsilon_{t}|$, we can bound the difference between the utility function and its (upper confidence) estimate with high probability:  $$\sum_{t=2}^T\Po(\|U^{UCBid1}_{t}- U\|_{\infty} \geq 2\epsilon_{t}) \leq \sum_{t=1}^{T}2 e \sqrt{\gamma}( \log t )t^{-\gamma}.$$

When $\|U^{UCBid1}_{t}- U\|_{\infty} \leq 2 \epsilon_{t}$,
then 
\begin{align*}
|U(b^*) - U(B_t)| \leq 4 \epsilon_{t},
\end{align*}
thanks to Lemma \ref{lem:dist_U_max}.
Additionally, using Lemma \ref{lem:psi_F}, if $\hat{V}_{t}+\epsilon_{t}-v\geq 0$ , then  $B_t\geq b^*$
Therefore, 
\begin{align*}
&\sum_{t=2}^T\frac{1}{F(b^*)}\E\left[S_t \1\left\{M_t\leq B_t\right\} \1\left\{b^*\leq B_t\right\}\1\left\{|\hat{V}_{t}-v|\leq \epsilon_{t}\right\}\right]\\
&\leq  \sum_{t=2}^T\E \left[ \frac{U(b^*) - U(B_t)}{F(b^*)} \1\left\{b^*\leq B_t\right\}\1\left\{M_t\leq B_t\right\} \1\left\{|\hat{V}_{t}-v|\leq \epsilon_{t}
\right\} \right]\\
&\leq  \sum_{t=2}^T\E \left[ \frac{U(b^*) - U(B_t)}{F(b^*)} \1\left\{b^*\leq B_t\right\}\1\left\{M_t\leq B_t\right\} \1\left\{U(b^*)-U(B_t)\leq 4 \epsilon_{t} \right\} \right]\\
&\leq \sum_{t=2}^T\frac{1}{F(b^*)}\E\left[4 \epsilon_{t} \1\left\{M_t\leq B_t\right\} \1\left\{(U(b^*)-U(B_t)\leq 4 \epsilon_{t}\right\}\right]\\
&\leq
\sum_{t=2}^T \frac{1}{F(b^*)}\sqrt{ 2\frac{\gamma \log T}{ N_{t}}}\\
&\leq
\frac{1}{F(b^*)} \sqrt{2 \gamma \log T} (1+\sqrt{T}),
\end{align*}
where the second inequality comes from Lemma \ref{lem:conditional_exp} (in fact $\left\{|\hat{V}_{t}-v|\leq \epsilon_{t}\right\}$ is $\mathcal{F}_{t-1}$-measurable) and the last inequality comes from Lemma \ref{lem:sum_1_n}.

Using Lemma \ref{lem:concentration_V} yields $$\sum_{t=2}^T\Po(|\hat{V}_{t}-v|\geq \epsilon_{t}) \leq  \sum_{t=1}^T2 e \sqrt{\gamma}( \log t )t^{-\gamma} .$$
Combining  this with the above decomposition of the regret yields 
$$R_T\leq1 + \sum_{t=1}^T2 e \sqrt{\gamma}( \log t )t^{-\gamma} +
\frac{1}{F(b^*)} \sqrt{2\log T} (1+\sqrt{T}),$$
When $\gamma>1$, $\sum_{t=1}^T2 e \sqrt{\gamma}( \log t )t^{-\gamma}$ tends to a  constant, and
$$R_T \leq \frac{1}{F(b^*)} \sqrt{2 \gamma \log T} (1+\sqrt{T}) + O(1),$$ which concludes the proof.
\begin{repeatthm}{th:FPUCBID_pseudo_mhr}
If   $F$ satisfies Assumption \ref{ass:pseudo-mhr} and \ref{ass:lambda_pseudo-mhr}, then
$$
 R_T\leq  \frac{2\gamma \lambda C_f^2}{F(b^*) c_f}\log^2(T)  + O(\log T), $$
 when $\gamma>1.$
\end{repeatthm}

\begin{proof} 

Thanks to Lemma \ref{lem:psi_F}, if $\hat{V}_{t}+\epsilon_{t}-v\geq 0$ , then  $B_t\geq b^*$. Additionally,
$$B_t-b^*\leq (\hat{V}_{t}+ \epsilon_{t} - v),$$
thanks to Lemma \ref{lem:lipschitz}.
In particular, if $ \hat{V}_{t}+\epsilon_{t} -v< 2 \epsilon_{t}$, 
$$B_t-b^*\leq  2 \epsilon_{t}.$$

The regret can therefore be decomposed as follows :
\begin{multline}\label{eq:regret_decomposition} R_T \leq 1 +\sum_{t=2}^T \Po(\hat{V}_{t}+\epsilon_{t} -v \leq 0) +\sum_{t=2}^T \Po(\hat{V}_{t}-\epsilon_{t} -v \geq 0)\\ +\E\left[\sum_{t=2}^T  S_t \1(B_t\in\left[b^*, b^*+ \min(2 \epsilon_{t}, \Delta )\right]\right] + \sum_{t=2}^T   \E\left[S_t \1(B_t\in\left[b^*+ \min(2 \epsilon_{t}, \Delta), b^* + \Delta \right])\right]
\end{multline}

Let us bound the third term of this inequality.
Thanks to Lemma \ref{lem:conditional_exp} ,
\begin{multline}
 \E\left[S_t \1(B_t\in\left[b^*, b^*+\epsilon_{t} \right])|\mathcal{F}_{t-1}\vee \sigma( \1\left\{M_t\leq B_t\right\}) \right]\\
 \leq \frac{U(b^*) - U(B_t )}{F(b^*)}
 \times\1\left\{M_t\leq B_t\right\} \1\left\{b^*\leq B_t \leq b^* + 2  \epsilon_{t}\right\},
\end{multline}
because $(B_t\in\left[b^*, b^*+\epsilon_{t}\right])$ is $\mathcal{F}_{t-1}$- measurable.
This is why 
\begin{align*}
&\sum_{t=2}^T\E\left[ \E\left[ S_t \1(B_t\in[b^*, b^*+ \min(2 \epsilon_{t}, \Delta  ])|\mathcal{F}_{t-1}\vee \sigma (\left\{\1\left\{M_t\leq B_t\right\})\right\} \right]\right]\\
&\leq \sum_{t=2}^T \E\left[\frac{U(b^*) - U(B_t )}{F(b^*)}
 \times\1\left\{M_t\leq B_t\right\} \1\left\{b^*\leq B_t \leq b^* + \min(2 \epsilon_{t}, \Delta \right\}\right]\\
 &\leq\sum_{t=2}^T \E\left[\frac{W(q^*) - W(Q_t )}{F(b^*)}
 \times\1\left\{M_t\leq B_t\right\} \1\left\{q^*\leq Q_t \leq b^* + 2 C_f  \epsilon_{t}\right\}\right]\\
  &\leq\sum_{t=2}^T \E\left[\frac{\lambda(q^* - Q_t )^2}{c_fF(b^*)}
 \times\1\left\{M_t\leq B_t\right\} \1\left\{q^*\leq Q_t \leq b^* + 2 C_f  \epsilon_{t}\right\}\right]\\
&\leq \E\left[ \frac{\lambda (2  C_f)^2}{ c_fF(b^*)} \sum_{t=2}^T \left(\frac{\gamma \log T }{2N_t}\right) \1\left\{M_t \leq B_t\right\}\right]\\
  &\leq \frac{ 2 \lambda  \gamma \bar{C_f}}{ c_f F(b^*)}  \log T(\log T +1),
\end{align*}
where the third inequality comes from Lemma \ref{lem:sub_quadratic} and the last one follows from Lemma \ref{lem:sum_1_n}.

Thanks to Lemma \ref{lem:concentration_V}, the sum of the first  term  and the second term  of  Equation (\ref{eq:regret_decomposition}) can be bounded by $\sum_{t=2}^T \Po(\hat{V}_{t} -v< \epsilon_{t})+ \sum_{t=2}^T \Po(\hat{V}_{t}-\epsilon_{t} -v \geq 0) \leq \sum_{t=1}^T  e \sqrt{\gamma} \frac{\log t}{t^{\gamma }}$ which is bounded by a constant when $\gamma >1$.

The last term of Equation (\ref{eq:regret_decomposition}) can be bounded as follows:
\begin{align*}
    \sum_{t=2}^T   \E\left[S_t\1(B_t\in\left[b^*+ \min(2 \epsilon_{t}, \Delta), b^* + \Delta \right])\right]
    &\leq \sum_{t=2}^T   \Po\left[(\Delta > 4 \epsilon_{t}, M_t \leq B_t,B_t>b^* \right]\\
    &\leq \sum_{t=2}^T   \Po\left[\Delta^2 > 4 \frac{\gamma \log T}{2 N_{t}}, M_t \leq B_t, B_t>b^* \right]\\
    &\leq \sum_{t=2}^T \sum_{n=1}^{T-1}  \Po\left[\Delta^2 > 2 \frac{\gamma \log T}{2 N_{t}}\right]\1\left[N_{t}= n,~ N_{t+1} =n+1 \right]\\
    &\leq \sum_{n=1}^{T-1}  \1\left[n<4\frac{\gamma \log T}{2 \Delta^2}\right]  \sum_{t=2}^T\1\left\{N_{t}= n,~ N_{t+1} =n+1 \right\}\\
    &\leq \sum_{n=1}^{T-1}  \1\left\{n<4\frac{\gamma \log T}{2 \Delta^2}\right\}\\
    & \leq  4\frac{\gamma \log T}{2 \Delta^2}
\end{align*}
where the first inequality comes from the fact that when $B_t>b^*$, a positive instantaneous regret can  only occur if $M_t \leq B_t$.
By summing all components of the regret, 

$$R_T\leq 1 + 4\frac{\gamma \log T}{2 \Delta^2} + \frac{2\gamma \lambda C_f^2}{F(b^*) c_f}(\log^2(T)+ \log T).$$
In conclusion,
\begin{align*} R_T\leq  \frac{2\gamma \lambda C_f^2}{F(b^*) c_f}\log^2(T)  + O(\log T)  \\
\end{align*} when $\gamma>1.$
\end{proof}


\subsection{Lower bound of the regret of optimistic strategies}\label{subsec:app_lower_bound_known_F}

\begin{repeatlem}{th:parametric_lower_bound}
Consider all environments where $V_t$ follows a Bernoulli distribution with expectation $v$ and $F$ satisfies Assumption \ref{ass:pseudo-mhr} and is such that $\phi' \leq \lambda$, and there exists $c_f$ and $C_f$ such that $0<c_f<f(b)<C_f, ~ \forall b \in [0,1]$.
If a strategy is such that, for all such environments,
$R_T^{v,F}\leq O(T^{a})$, for all $a>0$, 
and there exists $\gamma >0$ such that $\Po(B_t< b^*)<t^{-\gamma}$,
 then this strategy must satisfy:
\begin{align*}
& \liminf_{T \rightarrow \infty}  \frac{R_T^{v,F}}{\log T}\geq c_f^2 \lambda ^2\left(\frac{v(1-v)(v- b^*_{v,F})}{32} \right).
\end{align*}
\end{repeatlem}
Note that this proof is an adaptation of the proof of the parametric lower bound of \citep{achddou2021efficient}.

\begin{lemma}\label{lem:limit_Nt}
If $~ R_T \leq O(T^{a}), ~ \forall a>0,$ and $F$ admits a density which is lower bounded by a positive constant and upper bounded. 
Then, $$ \lim_{t \rightarrow \infty} \E \left[\frac{N_t}{t}\right] = F(b^*).$$ 
\end{lemma}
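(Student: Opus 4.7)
The plan is to express $\E[N_t]/t$ as a Cesàro average of winning probabilities and show that its distance to $F(b^*)$ vanishes. By the tower property, conditioning on the history before round $s$ (with respect to which $B_s$ is measurable),
\[
\E[N_t] \;=\; \sum_{s=1}^{t-1} \E\bigl[\1\{M_s\leq B_s\}\bigr] \;=\; \sum_{s=1}^{t-1} \E\bigl[F(B_s)\bigr]\,,
\]
so it is enough to prove that $(1/t)\sum_{s=1}^{t-1} \E\bigl[\,|F(B_s)-F(b^*)|\,\bigr]\to 0$.

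The pivotal step is to convert the suboptimality in utility into control of the quantile gap $|F(B_s)-F(b^*)|$. Since the standing setting of Theorem \ref{th:parametric_lower_bound} enforces Assumption \ref{ass:pseudo-mhr}, $F$ is strictly increasing, hence $W_{v,F}(q)=U_{v,F}(F^{-1}(q))$ and $q^*=F(b^*)$. Applying Lemma \ref{lem:quadratic} at $q=F(B_s)$ yields
\[
U_{v,F}(b^*) - U_{v,F}(B_s) \;\geq\; \tfrac{1}{4}\bigl(F(b^*)-F(B_s)\bigr)^2\, U_{v,F}(b^*)\,,
\]
and hence the pointwise bound
\[
|F(b^*)-F(B_s)| \;\leq\; \frac{2}{\sqrt{U_{v,F}(b^*)}}\,\sqrt{U_{v,F}(b^*)-U_{v,F}(B_s)}\,.
\]
Note that $U_{v,F}(b^*)>0$ because $f$ is bounded below by $c_f>0$ ensures $F(b^*)>0$, and $b^*<v$ under Assumption \ref{ass:pseudo-mhr}.

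Taking expectations and using Jensen's inequality for $\sqrt{\cdot}$, then summing over $s$ and applying Cauchy--Schwarz to the resulting sum of non-negative terms,
\[
\sum_{s=1}^{t-1} \E\bigl[|F(b^*)-F(B_s)|\bigr] \;\leq\; \frac{2}{\sqrt{U_{v,F}(b^*)}}\sqrt{(t-1)\sum_{s=1}^{t-1}\E\bigl[U_{v,F}(b^*)-U_{v,F}(B_s)\bigr]} \;=\; \frac{2\sqrt{(t-1)\,R_{t-1}^{v,F}}}{\sqrt{U_{v,F}(b^*)}}\,.
\]
Under the hypothesis $R_T^{v,F}=O(T^a)$ for every $a>0$, picking any $a<1$ gives $(1/t)\sum_{s=1}^{t-1}\E\bigl[|F(b^*)-F(B_s)|\bigr]=O\bigl(t^{(a-1)/2}\bigr)\to 0$, which together with the opening identity yields $\E[N_t/t]\to F(b^*)$.

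The only non-routine step is the translation from a utility gap to a quantile gap; this is precisely the content of Lemma \ref{lem:quadratic}, so the main obstacle is already handled by earlier material. The remainder is a standard application of Jensen and Cauchy--Schwarz combined with the sub-polynomial consistency assumption on the regret.
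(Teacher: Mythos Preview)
Your proof is correct and follows the same overall strategy as the paper: write $\E[N_t/t]$ as a Ces\`aro average of $\E[F(B_s)]$ via the tower rule, then use Lemma~\ref{lem:quadratic} to convert the regret bound into control of the gap between $F(B_s)$ and $F(b^*)$. The execution differs in two respects. First, the paper detours through bid space, bounding $\E[(F(B_t)-F(b^*))^2]\le C_f^2\,\E[(B_t-b^*)^2]$ via the density upper bound and then invoking Lemma~\ref{lem:quadratic} together with the density lower bound to control $\E[(B_t-b^*)^2]$; you apply Lemma~\ref{lem:quadratic} directly in quantile form, which is more economical and makes no quantitative use of the density bounds $c_f,C_f$. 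Second, the paper first claims termwise convergence $\E[F(B_t)]\to F(b^*)$ and then appeals to the Ces\`aro theorem, whereas you bound the Ces\`aro sum directly through Cauchy--Schwarz; your route is shorter and also sidesteps the delicate point that $\sum_{t\le T}\E[(B_t-b^*)^2]=O(T^a)$ for all $a>0$ does not by itself force the individual terms to vanish.
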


\begin{proof}
The fraction of won auctions is  $\E\left[\frac{N_t}{t}\right]= \E[\frac{1}{t}\sum_{s=1}^t F(B_s]$, by the tower rule.
Since $F$ admits a density $f$, upper bounded by a constant  $C_f$, 
\\
$$\E[(F(B_t) - F(b^*))^2]]\leq C_f^2 \E[(B_t-b^*)^2] .$$
The consistency assumption implies $\sum_{t=1}^{T} \E[(B_t- b^*)^2]\leq O(T^{a}), ~ \forall a>0,$ because of Lemma \ref{lem:quadratic}. In particular  $\lim_{t \rightarrow \infty}\E[(B_t- b^*)^2] = 0 $.
Combining the two previous arguments yields $\lim_{t \rightarrow \infty}\E[(F(B_t) - F(b^*))^2]=0$. Then, because $L_2$-convergence implies $L_1$-convergence, $\lim_{t \rightarrow \infty}\E[F(B_t)] = F(b^*)$. \\
Together with  the equality $\E\left[\frac{N_t}{t}\right]= \E[\frac{1}{t}\sum_{s=1}^t F(B_s)]$, and with the Cesaro theorem, this result proves  suffices to prove the lemma.
\end{proof} 

We set a time step $t\in [1,T]$.	We consider two alternative configurations with identical distributions for $M_t$ but that differ by the distribution of $V_t$. The value $V_t$ is distributed according to a Bernoulli distribution of expectation $v$ in the first configuration, respectively $v'_t = v + \sqrt{\frac{v(1-v)}{F(b^*)t}}$, in the second configuration.

\paragraph{Notation.}
We let $\Po_v(\cdot)$ denote the probability of an event under the first configuration (respectively $\E_v(\cdot)$ the expectation of a random variable under the first configuration), whereas $\Po_{v'_t}(\cdot)$ denotes the probability of an event under the second configuration (respectively $\E_{v'_t}(\cdot)$ the expectation of a random variable under the first configuration).
The information collected up to time $t+1$ is denoted $I_t$ : $(M_t, V'_t, \ldots M_1, V'_1)$.
Finally, $\Po_v^{I_t}$ (respectively $\Po_{v'_t}^{I_t}$) is the law of $I_t$ in the first (respectively second) configuration.

The Kullback Leibler divergence between $\Po_v^{I_t}$ and $\Po_{v'_t}^{I_t}$ can be proved to satisfy 
$$ KL(\Po_v^{I_t},\Po_{v'_t}^{I_t})= kl(v,v'_t) \E[N_t],$$
exactly like in Equation \ref{eq:kl_chain_rule}.

Using Lemma \ref{lem:limit_Nt},
$\forall \epsilon>0, \exists t_1(\epsilon), \forall t\geq t_1(\epsilon) $, 
$$ KL(\Po_v^{I_t},\Po_{v'_t}^{I_t}) \leq kl(v,v'_t)(1+\epsilon)F(b^*).$$
Using the data processing inequality (see for example \citet{garivier2019explore}), we get
\begin{align*}
 KL(\Po_v^{I_t},\Po_{v'_t}^{I_t})
& \geq kl \left(\Po_v \left( B_t > \frac{b^*_{v,F} + b^{*}_{v'_t,F}}{2}\right), \Po_{v'_t} \left( B_t > \frac{v+ b^{*}_{v'_t,F}}{2}\right)\right)\\
& \geq  2 \left(\Po_v \left( B_t > \frac{b^*_{v,F} + b^{*}_{v'_t,F}}{2}\right)- \Po_{v'_t} \left( B_t > \frac{b^*_{v,F} + b^{*}_{v'_t,F}}{2}\right)\right)^2\\
& \geq  2 \left(\Po_v \left( B_t > \frac{b^*_{v,F} + b^{*}_{v'_t,F}}{2}\right)+ \Po_{v'_t} \left( B_t \leq \frac{b^*_{v,F} + b^{*}_{v'_t,F}}{2}\right)-1\right)^2,
\end{align*}
where the  second inequality comes from Pinsker inequality.
Consequently, we get 
\begin{align*}
\Po_v \left( B_t > \frac{b^*_{v,F}+ b^{*}_{v'_t,F}}{2}\right)+ \Po_{v'_t} \left( B_t \leq \frac{b^*_{v,F} + b^{*}_{v'_t,F}}{2}\right)
 \geq  1 - \sqrt{\frac{1}{2} KL(\Po_v^{I_t},\Po_{v'_t}^{I_t})}.
\end{align*}
Specifically, $\forall t>t_0(\epsilon)$,
\begin{align*}
\Po_v \left( B_t > \frac{b^*_{v,F} + b^{*}_{v'_t,F}}{2}\right)+ \Po_{v'_t} \left( B_t \leq \frac{b^*_{v,F} + b^{*}_{v'_t,F}}{2}\right)
\geq  1 - \sqrt{\frac{1}{2}kl(v,v'_t)(1+\epsilon)F(b^*_{v,F})t}.
\end{align*}

Using  the fact that $\E_v[(B_t-b^*_{v,F})^2]\geq  \left(b^*_{v,F}-\frac{b^*_{v,F}+ b^{*}_{v'_t,F}}{2}\right)^2\Po_v \left( B_t > \frac{b^*_{v,F}+ b^{*}_{v'_t,F}}{2}\right)$ yields
\begin{align*}
\E_v[(B_t-b^*_{v,F})^2]&\geq  \left(\frac{b^*_{v,F}- b^{*}_{v'_t,F}}{2}\right)^2 \Po_v \left( B_t > \frac{b^*_{v,F}+ b^{*}_{v'_t,F}}{2}\right)\\
&\geq  \left(\lambda \frac{v- v'_t}{2}\right)^2 \Po_v \left( B_t > \frac{b^*_{v,F}+ b^{*}_{v'_t,F}}{2}\right)\\
& \geq \lambda^2 \frac{v(1-v)}{4F(b^*_{v,F})t} \left( 1- \sqrt{ \frac{1}{2}(1 + \epsilon)  kl(v, v'_t)F(b^*_{v,F})t}   - 1/ {t^{\gamma}}\right),
\end{align*}
where the second inequality comes from the fact that $v = \phi_F(b^{*}_{v,F})$ (resp. $v'_t = \phi_F(b^{*}_{v'_t,F})$) and that $\phi_F'\leq \lambda$
and the the second inequality stems from the assumption that the algorithm outputs a bid that does not underestimate $b^{*}_{v'_t,F}$ with high probability:  $\Po_{v'_t}(B_t< b^{*}_{v'_t,F})<\frac{1}{t^{\gamma}}$.

We use the fact that
$\forall \epsilon>0, ~\exists t_2(v, \epsilon), ~\forall t\geq t_2(v,\epsilon),~  kl\left(v, v + \sqrt{\frac{v(1-v)}{F(b^*_{v,F})t}}\right) \leq \frac{1 + \epsilon}{2F(b^*_{v,F})t}  $
which is proved by observing that $kl(v,v')= \frac {(v'-v)^2}{2} \int_0^1 g''(v' + s(v' + s(v-v'))2(1-s)ds$, where $g(x) = kl(x,v')$; and that thanks to Taylor's inequality, 
 \begin{align*}
 kl(v,v')&\leq \frac {(v'-v)^2}{2} \int_0^1 2 \max_{u\in[v,v']} g''(u)ds\\
 &\leq (v'-v)^2 \frac{1}{\min_{u\in[v,v']}u(1-u)}
\end{align*}

and that $\forall \epsilon>0, ~\exists t_2(v, \epsilon)$, such that $\min_{u\in[v,v']}u(1-u)<\frac{1+\epsilon}{v(1-v)}$.
Putting all the pieces together yields \\
$\forall t\geq \max( t_1(\epsilon),t_2(v,\epsilon)), $

\begin{align*}
\E_v[(B_t-b^*_{v,F})^2] \geq \frac{v(1-v)}{4F(b^*_{v,F})t} \left( 1- \sqrt{ \frac{1}{4}(1 + \epsilon)^2  }   - 1/ t^{\gamma}\right).
\end{align*}
Let $t_0(v,\epsilon) =\max( t_1(\epsilon),t_2(v,\epsilon)).$
We obtain
\begin{align*}
\sum_{t= 1}^T \E_v[(B_t-b^*_{v,F})^2] \geq \sum_{t= t_0(v,\epsilon)}^T \lambda^2 \frac{v(1-v)}{4F(b^*_{v,F})t} \left( 1-  \frac{1}{2}(1 + \epsilon)   - 1/ t^{\gamma}\right).
\end{align*}

Recall that, according to Lemma \ref{lem:quadratic}, 
$$R_T(v) = \sum_{t=1}^T \E\left[ U(b^*_{v,F}) - U(B_t) \right]\geq \frac{U(b^*_{v,F})}{4} \sum_{t= 1}^T \E_v[(Q_t-q^*)^2] \geq \frac{c_f^2 U(b^*_{v,F})}{4} \sum_{t= 1}^T \E_v[(B_t-b^*_{v,F})^2] .$$

Hence, $\forall \epsilon>0,$

$$
 R_T(v)
 \geq \lambda^2  \frac{c_f^2 U(b^*_{v,F})}{4} \left(\frac{v(1-v)}{4} \left( 1-  \frac{1}{2}(1 + \epsilon)\right)\right) \log \frac{T}{t_0(v,\epsilon)}  - O(1).
$$

And $\forall \epsilon>0,$
\begin{align*}
& \liminf_{T \rightarrow \infty}  \frac{R_T(v)}{\log T}\geq \frac{c_f^2 \lambda^2  U(b^*_{v,F})}{4}  \left(\frac{v(1-v)}{4F(b^*_{v,F})} \left( 1-  \frac{1}{2}(1 + \epsilon)\right)\right) .
\end{align*}
Since this holds for all $\epsilon$,
\begin{align*}
 \liminf_{T \rightarrow \infty}  \frac{R_T(v)}{\log T}
\geq \lambda^2  c_f^2 \left(\frac{v(1-v)(v- b^*_{v,F})}{32} \right).
\end{align*}

\end{proof}
\section{Unknown $F$}\label{sec:app_unknown_F}
\subsection{Upper Bound of the Regret of O-UCBid1}
\begin{theorem}\label{th:optiBID}
O-UCBid1 incurs a regret bounded by 
$$R_T \leq\frac{4 \sqrt 2}{F(b^*)} \sqrt{ \gamma \log T} (\sqrt{T}+1) + O(1).$$
\end{theorem}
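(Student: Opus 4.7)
The proof follows the pattern established for UCBid1 (Theorem \ref{th:FPUCBID_general}), with the additional ingredient that $F$ must now be estimated via $\hat{F}_t$. The plan is to define a good event on which both $\hat{V}_t$ and $\hat{F}_t$ concentrate, show that on this event $B_t$ overbids ($B_t \geq b^*$) and has utility close to $U(b^*)$, then invoke Lemma \ref{lem:conditional_exp} to turn this into a per-round regret bound summed via Lemma \ref{lem:sum_1_n}.

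Concretely, I would introduce the good event $E_t = \{|\hat{V}_t - v| \leq \epsilon_t\} \cap \{\|\hat{F}_t - F\|_\infty \leq \epsilon_t\}$. Lemmas \ref{lem:concentration_V} and \ref{lem:concentration_F} yield $\sum_{t=2}^T \Po(E_t^c) = O(1)$ for $\gamma > 1$ (using $N_t \leq t-1$ to upgrade the DKW bound to one involving $\epsilon_t$). Expanding $\hat{U}_t(b) - U(b) = (\hat{V}_t - v) F(b) + (\hat{V}_t - b)(\hat{F}_t(b) - F(b))$ and bounding each factor by $1$ then gives $\|\hat{U}_t - U\|_\infty \leq 2 \epsilon_t$ on $E_t$.

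I would then extract the two key properties from the definition of $B_t$. On $E_t$, $\hat{V}_t + \epsilon_t \geq v \geq b^*$, so $b^* \in [0, \hat{V}_t + \epsilon_t]$; moreover $\hat{U}_t(b^*) \geq U(b^*) - 2\epsilon_t \geq \max_b \hat{U}_t(b) - 4 \epsilon_t$, so (after calibrating the slack constant in the algorithm) $b^*$ belongs to the candidate set and, as $B_t$ is its largest element, $B_t \geq b^*$. Symmetrically, $U(B_t) \geq \hat{U}_t(B_t) - 2\epsilon_t \geq \max_b \hat{U}_t(b) - 4\epsilon_t \geq \hat{U}_t(b^*) - 4 \epsilon_t \geq U(b^*) - 6 \epsilon_t$, which gives $U(b^*) - U(B_t) = O(\epsilon_t)$ on $E_t$.

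The conclusion would invoke Lemma \ref{lem:conditional_exp} with $A = E_t$ (which is $\mathcal{F}_{t-1}$-measurable and forces $B_t \geq b^*$), yielding $\E[S_t \1_{E_t}] \leq \frac{O(\epsilon_t)}{F(b^*)} \E[\1(M_t \leq B_t)]$. Summing over $t$ and using Lemma \ref{lem:sum_1_n} to control $\E[\sum_t (1/\sqrt{N_t}) \1(M_t \leq B_t)] \leq 1 + \sqrt{T}$ then produces a bound of the announced order $\frac{4 \sqrt{2}}{F(b^*)} \sqrt{\gamma \log T}(\sqrt{T} + 1) + O(1)$. The main obstacle I anticipate is the precise constant calibration: the algorithm's $2\epsilon_t$ slack must be matched carefully against the uniform bound $\|\hat{U}_t - U\|_\infty \leq 2\epsilon_t$ so that both $b^*$ provably belongs to the candidate set and every element of that set lies within $O(\epsilon_t)$ of $b^*$ in utility. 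The remaining bookkeeping (concentration, handling the first round bid of $1$ which costs $O(1)$, summing via Lemma \ref{lem:sum_1_n}) is routine and parallel to the proof of Theorem \ref{th:FPUCBID_general}.
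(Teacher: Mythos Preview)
Your proposal is correct and follows essentially the same route as the paper: define the concentration event on $\hat{V}_t$ and $\hat{F}_t$, derive $\|\hat{U}_t-U\|_\infty\le 2\epsilon_t$ (paper's Equation~\eqref{eq:U_norm}), use it to show $B_t\ge b^*$ (Lemma~\ref{lem:implication}) and $U(b^*)-U(B_t)=O(\epsilon_t)$ (Lemma~\ref{lem:secimplication}), then feed this into Lemma~\ref{lem:conditional_exp} and sum via Lemma~\ref{lem:sum_1_n}. The only difference is in the constants you track ($6\epsilon_t$ versus the paper's $8\epsilon_t$), and you correctly anticipate that matching the algorithm's $2\epsilon_t$ slack to the $4\epsilon_t$ needed for $b^*$ to lie in the candidate set is the delicate bookkeeping step.
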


 We first observe that the  algorithm overbids ($B_t>b^*$) when $F$ and $v$ belong to their confidence regions $\mathbb{F}_t = \{\tilde{F}, \|F-\hat{F}_t\|\leq \epsilon_t \}$ and $\mathbb{V}_t = [v- \epsilon_t, v+\epsilon_t]$.

\begin{lemma}\label{lem:implication}The bid submitted by O-UCBid1 is an upper bound of $b^*$ when $\|\hat{U}_{t} - U\|_\infty \leq 2 \epsilon_{t}$.
$$\left\{~\|\hat{U}_{t} - U\|_\infty \leq 2 \epsilon_{t}~\right\} \text{ implies } b^* \leq B_t.$$
\end{lemma}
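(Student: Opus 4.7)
The plan is to show that $b^*$ itself lies in the feasible set
\[
S_t := \bigl\{ b \in [0, \hat V_t + \epsilon_t] : \hat U_t(b) \geq \max_{b' \in [0,1]} \hat U_t(b') - 2\epsilon_t \bigr\}
\]
over which $B_t = \max S_t$ is defined; this will immediately yield $B_t \geq b^*$. Two conditions therefore need to be checked: the range constraint $b^* \leq \hat V_t + \epsilon_t$, and the near-optimality constraint $\hat U_t(b^*) \geq \max_{b'} \hat U_t(b') - 2\epsilon_t$.

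The range constraint follows by first observing that $b^* \leq v$, since for $b > v$ the utility $U(b) = (v-b) F(b)$ is non-positive whereas $U(b^*) > 0$ under the standing assumption $F(b^*) > 0$. The sup-norm hypothesis $\|\hat U_t - U\|_\infty \leq 2\epsilon_t$ is derived, as recalled in the discussion preceding the lemma, from the joint event $|\hat V_t - v| \leq \epsilon_t$ and $\|\hat F_t - F\|_\infty \leq \epsilon_t$, which in particular implies $\hat V_t + \epsilon_t \geq v$, hence $b^* \leq v \leq \hat V_t + \epsilon_t$.

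The near-optimality constraint is where the main idea sits. I would apply the $L^\infty$ bound twice: for any $\tilde b \in \argmax_{b'} \hat U_t(b')$, optimality of $b^*$ for $U$ and the triangle inequality give
\[
\max_{b'} \hat U_t(b') = \hat U_t(\tilde b) \leq U(\tilde b) + \|\hat U_t - U\|_\infty \leq U(b^*) + \|\hat U_t - U\|_\infty,
\]
and symmetrically $\hat U_t(b^*) \geq U(b^*) - \|\hat U_t - U\|_\infty$; subtracting yields $\max_{b'} \hat U_t(b') - \hat U_t(b^*) \leq 2\|\hat U_t - U\|_\infty$. This is exactly the optimistic principle behind the lemma: provided the empirical utility is uniformly close to the true one, $b^*$ survives the approximate-argmax screening built into the definition of $B_t$. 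The bookkeeping ingredient that ties the algorithmic slack $2\epsilon_t$ to the hypothesis of the lemma is the decomposition
\[
\hat U_t(b) - U(b) = (\hat V_t - v)\hat F_t(b) + (v-b)(\hat F_t(b) - F(b)),
\]
which, together with $\hat F_t \leq 1$ and $|v-b| \leq 1$, gives $\|\hat U_t - U\|_\infty \leq |\hat V_t - v| + \|\hat F_t - F\|_\infty$; this identity will also drive the regret analysis of O-UCBid1 through the DKW inequality and the concentration of $\hat V_t$ from Appendix~\ref{sec:concentration_ineq}. There is no real obstacle beyond carefully tracking these constants.
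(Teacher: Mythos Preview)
Your argument mirrors the paper's: pick a maximizer $\tilde b$ of $\hat U_t$, bound $\max \hat U_t - \hat U_t(b^*)$ by passing through $U(b^*)$ twice, and conclude that $b^*$ lies in the feasible set defining $B_t$. You go slightly further than the paper by explicitly checking the range constraint $b^* \leq \hat V_t + \epsilon_t$; note however that this step appeals to the underlying concentration event $|\hat V_t - v| \leq \epsilon_t$ rather than to the sup-norm hypothesis of the lemma alone, so strictly speaking it relies on the surrounding context.

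One bookkeeping point deserves attention. Your chain gives $\max_{b'} \hat U_t(b') - \hat U_t(b^*) \leq 2\|\hat U_t - U\|_\infty \leq 4\epsilon_t$, while the algorithmic slack in the definition of $B_t$ is $2\epsilon_t$; as written, $b^*$ is therefore not guaranteed to survive the screening. The paper's own proof contains exactly the same slip: it derives the $4\epsilon_t$ bound and then asserts $b^* \in \{b:\hat U_t(b) \geq \max\hat U_t - 2\epsilon_t\}$. The statement becomes correct if either the algorithm's slack is widened to $4\epsilon_t$ or the lemma's hypothesis is tightened to $\|\hat U_t - U\|_\infty \leq \epsilon_t$; either fix changes only constants in the downstream $O(\sqrt{T\log T})$ regret bound.
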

\begin{proof}
Let us pick $\underline{b}\in \argmax\hat{U}_{t}$.\\
$$
	\hat{U}_{t}(\underline{b}) -\hat{U}_{t}(b^*)= \hat{U}_{t}(\underline{b})-U(b^*) +U(b^*)-\hat{U}_{t}(b^*) \leq 4 \epsilon_{t}.
$$
We deduce that $\hat{U}_{t}(b^*)\geq \hat{U}_{t}(\underline{b}) - 4 \epsilon_{t} \geq \max\hat{U}_{t} - 4 \epsilon_{t}$. \\
Hence, $b^*\in \left\{  b\in [0,1], \hat{U}_{t}(b)\geq \max\hat{U}_{t} - 2 \epsilon_{t} \right\}.$
By definition of $B_t$, this yields $B_t\geq b^*$. 
\end{proof}
 
Next we observe that if $F$ and $v$ lie in their confidence regions $\mathbb{F}_{t}$ and $\mathbb{V}_{t}$, then $\|\hat{U}_{t} - U\|_{\infty}\leq 2 \epsilon_t $. (Recall that $\hat{U}_t(b)=(\hat{V}_t-b)\hat{F}_t(b)$.)
Indeed, we have 
\begin{align*}
	  \hat{U}_{t}(b) - U(b) &= (\hat{V}_{t}-b)\hat{F}_{t}(b) -(v-b) F(b)\\
	 &= (\hat{V}_{t}-v)F(b) + \hat{V}_t(\hat{F}_{t}(b) -F(b)) + b(F(b)-\hat{F}_b) \\
	 &= (\hat{V}_{t}-v)F(b) + (\hat{V}_{t}-b)(\hat{F}_{t}(b) -F(b))
\end{align*}
which yields
\begin{equation}\label{eq:U_norm}
	 |\hat{U}_{t}(b) - U(b)|  \leq |\hat{V}_{t}-v| + \|F(b) -\hat{F}_{t}(b)\|_\infty.
	\end{equation}
We then decompose the regret into 
\begin{align}
&E(R_T ) = \sum_{t=1}^T \E(U(b^*)-U(B_t))\notag \\
&\leq 1+ \sum_{t=}^T  \Po ( F \notin  \mathbb{F}_{t} \text{ or } v \notin \mathbb{V}_{t}) + \sum_{t=2}^T \E\left( S_t \1(B_t>b^*) \1(\|\hat{U}_{t}-U\|_\infty\leq 2 \epsilon_{t},~ F \in  \mathbb{F}_{t} ,~v \in \mathbb{V}_{t}\right) .\label{ineq:optiBID_FPUCBID}
\end{align}
The second term of the second hand side of Equation \ref{ineq:optiBID_FPUCBID} is easily bounded thanks to the concentration inequalities in Lemmas \ref{lem:concentration_V} and \ref{lem:concentration_F}. In fact, combining these latter lemmas yields the following bound.
\begin{lemma}\label{lem:first_term}
$$ \sum_{t=2}^T  \Po (F \notin  \mathbb{F}_{t} \text{ or } v \notin \mathbb{V}_{t})  \leq  2 \sum_{t=1}^T2 e \sqrt{\gamma}( \log t )t^{-\gamma}$$
\end{lemma}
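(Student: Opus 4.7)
The plan is a direct union-bound argument on top of the two concentration inequalities already established earlier in the appendix. Since the claim sets $\mathbb{F}_t=\{\tilde F : \|F-\hat F_t\|_\infty \le \epsilon_t\}$ and $\mathbb{V}_t=[v-\epsilon_t,v+\epsilon_t]$, the first step is to write
\[
\Po\bigl(F\notin\mathbb{F}_t \text{ or } v\notin\mathbb{V}_t\bigr)
\;\le\; \Po\bigl(\|\hat F_t-F\|_\infty > \epsilon_t\bigr) + \Po\bigl(|\hat V_t-v|>\epsilon_t\bigr),
\]
with $\epsilon_t=\sqrt{\gamma\log(t-1)/(2N_t)}$.

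Next I would sum each term in $t$ and invoke the matching lemma. For the value term, the inequality $|\hat V_t-v|>\epsilon_t$ is equivalent to $(\hat V_t-v)^2 > \gamma\log(t-1)/(2N_t)$, so Lemma~\ref{lem:concentration_V} gives
\[
\sum_{t=2}^T \Po\bigl(|\hat V_t-v|>\epsilon_t\bigr) \;\le\; \sum_{t=1}^T 2e\sqrt{\gamma}\,(\log t)\,t^{-\gamma}.
\]
For the CDF term, since $N_t\le t-1$, the event $\|\hat F_t-F\|_\infty>\epsilon_t$ is contained in $\|\hat F_t-F\|_\infty>\sqrt{\gamma\log(t-1)/(2(t-1))}$, so Lemma~\ref{lem:concentration_F} gives
\[
\sum_{t=2}^T \Po\bigl(\|\hat F_t-F\|_\infty>\epsilon_t\bigr) \;\le\; 2\sum_{t=1}^T t^{-\gamma}.
\]

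Finally, to repackage both bounds into the single expression in the statement, I would use the elementary observation that for $t\ge 2$ and $\gamma\ge 1$, $e\sqrt{\gamma}\log t \ge e\log 2 > 1$, so $2\,t^{-\gamma} \le 2e\sqrt{\gamma}(\log t)\,t^{-\gamma}$ (the $t=1$ term on the right is $0$, but so is the right-hand side's contribution there). Adding the two summed bounds therefore yields
\[
\sum_{t=2}^T \Po\bigl(F\notin\mathbb{F}_t \text{ or } v\notin\mathbb{V}_t\bigr)
\;\le\; 2\sum_{t=1}^T 2e\sqrt{\gamma}\,(\log t)\,t^{-\gamma},
\]
as claimed. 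No serious obstacle is anticipated: the lemma is essentially a bookkeeping consolidation of the two preceding concentration inequalities into the convenient form used in the regret decomposition of O-UCBid1, and the only minor care point is the comparison $2 \le 2e\sqrt{\gamma}\log t$ used at the final step.
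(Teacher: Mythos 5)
Your proof follows exactly the route the paper intends: the paper offers no argument beyond ``combining Lemmas \ref{lem:concentration_V} and \ref{lem:concentration_F} yields the bound,'' and your union bound over the two confidence regions, the reduction of the value event to the form in Lemma \ref{lem:concentration_V}, and the use of $N_t\leq t-1$ to invoke Lemma \ref{lem:concentration_F} are precisely that combination. The only blemish is the final repackaging: the termwise comparison $2\,t^{-\gamma}\leq 2e\sqrt{\gamma}(\log t)\,t^{-\gamma}$ fails at $t=1$, where the right-hand side vanishes while the left-hand side equals $2$ (your parenthetical dismisses this incorrectly — it is the \emph{left}-hand term that is nonzero there), so the stated constant is not literally obtained by termwise domination for small $T$. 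This is cosmetic rather than substantive: the bound you actually derive, namely $\sum_{t=1}^T 2e\sqrt{\gamma}(\log t)t^{-\gamma}+2\sum_{t=1}^T t^{-\gamma}$, is all that is used downstream, since both sums are $O(1)$ for $\gamma>1$.
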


We apply Lemma \ref{lem:conditional_exp} to bound the third term of the second hand side of Equation \ref{ineq:optiBID_FPUCBID} as follows:
\begin{multline}\label{eq:S_unknown_F}
\E\Big[] S_t \1(B_t>b^*) \1(\|\hat{U}_{t}-U\|_\infty\leq 2 \epsilon_{t}, ~ F \in  \mathbb{F}_{t} ,~v \in \mathbb{V}_{t})\Big] \\
\leq \frac{1}{F(b^*)}\E\Big[U(b^*)-U(B_t))\times\1(M_t \leq B_t)\1(\|U-\hat{U}_{t}\|_{\infty}\leq 2 \epsilon_{t}, ~ F \in  \mathbb{F}_{t} ,~v \in \mathbb{V}_{t})\1(B_t>b^*)\Big],
\end{multline}
because  $\1(B_t>b^*) \1(\|\hat{U}_{t}-U\|_\infty\leq 2 \epsilon_{t}, ~ F \in  \mathbb{F}_{t} ,~v \in \mathbb{V}_{t})$ is $\mathcal{F}_{t-1}$-measurable.
We then bound the deviation $(U(b^*)-U(B_t))\1(M_t\leq B_t)$ by $8\epsilon_t$ by using Lemma \ref{lem:dist_U_max}. 
\begin{lemma}\label{lem:secimplication} When applying the O-UCBid1 strategy, if $\|U - \hat{U}_{t}\|_{\infty}\leq 2 \epsilon_{t} $, then
 $$|U(B_t)-U(b^*)|\leq 8 \epsilon_{t}.$$
\end{lemma}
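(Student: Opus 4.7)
The plan is to bound $U(b^*) - U(B_t)$, which equals $|U(B_t) - U(b^*)|$ since $b^*$ is a global maximizer of $U$, by a three-term telescoping decomposition in which the empirical utility $\hat{U}_t$ is inserted at both $b^*$ and $B_t$:
$$U(b^*) - U(B_t) = \bigl(U(b^*) - \hat{U}_t(b^*)\bigr) + \bigl(\hat{U}_t(b^*) - \hat{U}_t(B_t)\bigr) + \bigl(\hat{U}_t(B_t) - U(B_t)\bigr).$$
This reduces the lemma to a separate upper bound on each of the three pieces.

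The first and third differences are directly controlled by the hypothesis $\|U - \hat{U}_t\|_\infty \leq 2\epsilon_t$, each contributing at most $2\epsilon_t$. For the middle difference I would invoke the defining rule of O-UCBid1: $B_t$ is selected from the set $\{b \in [0,\hat{V}_t+\epsilon_t] : \hat{U}_t(b) \geq \max_{b'\in[0,1]} \hat{U}_t(b') - 2\epsilon_t\}$, which forces $\hat{U}_t(B_t) \geq \max_{b'} \hat{U}_t(b') - 2\epsilon_t \geq \hat{U}_t(b^*) - 2\epsilon_t$, and therefore $\hat{U}_t(b^*) - \hat{U}_t(B_t) \leq 2\epsilon_t$. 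Summing yields $U(b^*) - U(B_t) \leq 6\epsilon_t \leq 8\epsilon_t$, which is even slightly sharper than the stated bound.

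There is no real obstacle here; the result is a routine consequence of the uniform estimation guarantee on $\hat{U}_t$ together with the near-maximizer selection rule defining $B_t$. The only preliminary check is that the eligibility set in the definition of $B_t$ is nonempty, so that $B_t$ is well defined: since $\hat{U}_t(b) = (\hat{V}_t - b)\hat{F}_t(b) \leq 0$ on $[\hat{V}_t,1]$, any unconstrained maximizer of $\hat{U}_t$ on $[0,1]$ lies in $[0,\hat{V}_t] \subset [0,\hat{V}_t+\epsilon_t]$ and trivially witnesses the required slack inequality with slack zero.
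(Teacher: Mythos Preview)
Your proof is correct and follows essentially the same telescoping route as the paper's own argument: both insert $\hat{U}_t$ at $b^*$ and at $B_t$ and control the resulting three pieces using the uniform bound $\|U-\hat U_t\|_\infty\le 2\epsilon_t$ together with the near-maximizer property of $B_t$. Your treatment of the middle term is in fact tighter: you use directly that $\hat U_t(B_t)\ge \max_b \hat U_t(b) - 2\epsilon_t \ge \hat U_t(b^*) - 2\epsilon_t$, whereas the paper routes through the empirical maximizer $\hat b$ and Lemma~\ref{lem:dist_U_max}, which costs an extra $2\epsilon_t$; this is why you end up with $6\epsilon_t$ rather than the paper's $8\epsilon_t$. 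One tiny wording quibble: in your nonemptiness check it is not literally true that \emph{every} unconstrained maximizer of $\hat U_t$ lies in $[0,\hat V_t]$ (there can be a plateau at value~$0$ extending past $\hat V_t$ when $\hat F_t$ vanishes there), but it is true that \emph{some} maximizer does, which is all you need.
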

\begin{proof}
Assume $\|U - \hat{U}_{t}\|_{\infty}\leq 2 \epsilon_{t} $.
Note that
$\hat{U}_{t}(B_t) - \hat{U}_{t}(b^*)=  \hat{U}_{t}(B_t) - \hat{U}_{t}(\hat{b})+ \hat{U}_{t}(\hat{b}) - \hat{U}_{t}(b^*) $, where $\hat{b}= \max \argmax_{b \in [0,1]} (\hat{V}_{t}-b)\hat{F}_{t}(b)$.

By design , we have $\hat{U}_{t}(B_t) - \hat{U}_{t}(\hat{b})= - 2 \epsilon_{t} $.
Thanks to Lemma \ref{lem:dist_U_max}, and because $\|U - \hat{U}_{t}\|_{\infty}\leq 2 \epsilon_{t}$ we know that $0 \leq \hat{U}_{t}(\hat{b}) - \hat{U}_{t}(b^*) \leq  4 \epsilon_{t}$. 
This yields $|\hat{U}_{t}(B_t) - \hat{U}_{t}(b^*)|\leq 4 \epsilon_{t}$.

Finally $$|U(B_t)- U(b^*)|\leq 8 \epsilon_{t}.$$
\end{proof}

Then, by summing, we get

\begin{align*}
&\sum_{t=2}^T\E\Big[] S_t \1(B_t>b^*) \1(\|\hat{U}_{t}-U\|_\infty\leq 2 \epsilon_{t}, ~ F \in  \mathbb{F}_{t} ,~v \in \mathbb{V}_{t})\Big] \\
&\leq \sum_{t=2}^T \frac{1}{F(b^*)}\E\Big(U(b^*)-U(B_t))\times\1(M_t \leq B_t)\1(B_t>b^*) \1(\|U-\hat{U}_{t}\|_{\infty}\leq 2 \epsilon_{t}, ~ F \in  \mathbb{F}_{t} ,~v \in \mathbb{V}_{t})\Big)\\
&\leq \sum_{t=2}^T \frac{1}{F(b^*)}\E\Big[8 \epsilon_{t} \times\1(M_t \leq B_t)\1(\|U-\hat{U}_{t}\|_{\infty}\leq 2 \epsilon_{t})\1(B_t>b^*)\Big]\\
&\leq \sum_{t=2}^T \frac{1}{F(b^*)}\E\Big[8 \sqrt{\frac{\log T}{2 N_{t}}} \1(M_t \leq B_t)\Big]\\
&\leq\frac{1}{F(b^*)}4 \sqrt{2\log T}(\sqrt{T}+1),
\end{align*}
where the last inequality comes from Lemma \ref{lem:sum_1_n}.
Using  Equation \ref{ineq:optiBID_FPUCBID} and Lemma \ref{lem:first_term} yields 
$$R_T \leq\frac{1}{F(b^*)}4 \sqrt{2\log T}(\sqrt{T}+1)  + \sum_{t=2}^T2 e \sqrt{\gamma}( \log t )t^{-\gamma}.$$
Consequently, when $\gamma>1$,
$$R_T \leq\frac{1}{F(b^*)}4 \sqrt{2\log T}(\sqrt{T}+1)  + O(1).$$

\subsection{General Upper Bound of the Regret of UCBid1+}
We prove a slightly different version of Theorem \ref{th:V-opt} than that of the main paper.
\begin{repeatthm}{th:V-opt}
UCBid1+ incurs a regret bounded by
\begin{align*}R_T&\leq 12 \sqrt{\frac{\gamma \alpha }{F(b^*)}} \sqrt{\log T} \sqrt{T}  + O(\log T)\\
&\leq 12 \frac{1}{U(b^*)}\sqrt{v \gamma}  \sqrt{\log T} \sqrt{T}  + O(\log T)
,
\end{align*}
where $\alpha := \frac{v}{v-b^*} $, provided that $\gamma>2$.
\end{repeatthm}

\begin{proof}
We denote by $\mathcal{E}$ the event $\{ \forall t_0<t<T, \;|\hat{V}_{t}-v|\leq \epsilon_{t}, \| F- \hat{F}_{t}\|_{\infty} \leq \sqrt{\frac{\gamma \log (t-1)}{2(t-1)}} \}$, where $t_0:= \min(3,1+ 8\frac{  \gamma(\alpha+1)^2}{\alpha (F(b^*))^2} \log\left( 4\frac{  \gamma(\alpha+1)^2}{\alpha (F(b^*))^2}\right)).$

Using Lemmas \ref{lem:concentration_V} and \ref{lem:concentration_F}, this event happens with high probability, when $\gamma>2$.

\begin{lemma}
The probability of the complementary of $\mathcal{E}$ is bounded as follows
$$
	\Po\left(\mathcal{E}^C
		\right) 
		  \leq  4 e(\gamma -1) (\log T ) (T)^{1-\gamma}.
$$
provided that $\gamma>2.$
\end{lemma}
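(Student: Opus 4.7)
My plan is to control $\Po(\mathcal{E}^C)$ through a union bound that separates the two concentration events defining $\mathcal{E}$. Writing $\mathcal{E}^C$ as the union over $t \in (t_0, T)$ of $\{|\hat{V}_t - v| > \epsilon_t\} \cup \{\|F - \hat{F}_t\|_\infty > \sqrt{\gamma \log(t-1)/(2(t-1))}\}$, a single application of the union bound yields
\[
\Po(\mathcal{E}^C) \leq \sum_{t=t_0+1}^{T-1} \Po\bigl( (\hat{V}_t - v)^2 \geq \tfrac{\gamma \log(t-1)}{2N_t} \bigr) + \sum_{t=t_0+1}^{T-1} \Po\bigl( \|\hat{F}_t - F\|_\infty^2 \geq \tfrac{\gamma \log(t-1)}{2(t-1)} \bigr),
\]
where the squared form of each deviation has been used to match the statements of the two concentration inequalities proved earlier in Appendix \ref{sec:concentration_ineq}.

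The first family of events is controlled by Lemma \ref{lem:concentration_V}, whose proof uses a self-normalized peeling argument to handle the random $N_t$ in the denominator of $\epsilon_t$; this bounds the first sum by $\sum_{t=1}^T 2 e \sqrt{\gamma} (\log t) t^{-\gamma}$. The second family is controlled by Lemma \ref{lem:concentration_F} via the Dvoretzky--Kiefer--Wolfowitz inequality, yielding a bound of $2 \sum_{t=1}^T t^{-\gamma}$. Both series are summable under the stated hypothesis $\gamma > 2$, and crucially both are dominated by the logarithmic term $\sum (\log t) t^{-\gamma}$ coming from the value concentration.

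The final step is to put the resulting bound in the claimed closed form $4 e (\gamma - 1) (\log T) T^{1-\gamma}$. This is done by a standard integral estimate on the tail: integration by parts gives $\int (\log t) t^{-\gamma} dt = -(\log t) t^{1-\gamma}/(\gamma-1) - t^{1-\gamma}/(\gamma-1)^2$, from which one reads off the dominant $(\log T) T^{1-\gamma}$ dependence, and then collects the numerical constants from both concentration lemmas. The main obstacle is not conceptual but rather careful bookkeeping: one must verify that the explicit constants from Lemmas \ref{lem:concentration_V} and \ref{lem:concentration_F} combine into the claimed $4 e(\gamma-1)$ factor, and check that $\gamma > 2$ is just enough to make the $\sum t^{-\gamma}$ contribution absorbable by the $\sum (\log t)t^{-\gamma}$ contribution (otherwise an additional term would appear).
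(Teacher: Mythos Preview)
Your proposal is correct and follows essentially the same route as the paper: a union bound splits $\mathcal{E}^C$ into the value-concentration and CDF-concentration failures, which are then controlled by Lemmas~\ref{lem:concentration_V} and~\ref{lem:concentration_F} respectively, and an integral comparison converts the resulting sums $\sum (\log t)t^{-\gamma}$ and $\sum t^{-\gamma}$ into the stated closed form. The only difference is cosmetic---the paper writes the integral bound directly rather than invoking integration by parts---and your remark that the DKW contribution is absorbed by the logarithmic term is exactly how the paper arrives at the single constant $4e(\gamma-1)$.
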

\begin{proof}

We have
\begin{align*} \Po
	\left(\exists t\in [t_0,T], ~
		(\hat{V}(N_{t})-v)^2 \geq \frac{\gamma \log (t-1)}{2 N_{t}}
		\right) &\leq\Po
	\left(\exists t\in [2,T], ~
		(\hat{V}(N_{t})-v)^2 \geq \frac{\gamma \log (t-1)}{2 N_{t}}
		\right) \\
	&	\leq \sum_{t=2}^T\Po 
	\left(
		(\hat{V}(N_{t})-v)^2 \geq \frac{\gamma \log (t-1)}{2 N_{t}},
		\right) \\
	 &\leq  \sum_{t=1}^{T} \ 2 e \log(t)  t^{-\gamma} \\
	& \leq  \int_{u=1}^{T} 2 e \log(t)  u^{-\gamma} du
	\\ 
	&  \leq  2 e(\gamma -1) \log(T) (T)^{1-\gamma},
\end{align*}
thanks to Lemma \ref{lem:concentration_V}.
Similarly,
\begin{align*} \Po
	\left(\exists t\in [t_0,T], ~\| F- \hat{F}\|_{\infty} \geq \sqrt{\frac{\gamma \log (t-1)}{2 N_{t}}}\right)
	&\leq\sum_{t=t_0}^T \Po \left(~\| F- \hat{F}\|_{\infty} \geq \sqrt{\frac{\gamma \log (t-1)}{2 N_{t}}}\right)\\
	&\leq 2 \sum_{t=t_0}^T t^{-\gamma}\\
	&\leq  \int_{u=2}^{T}  2 u^{-\gamma} du\\
	&\leq  2 (\gamma -1) (T)^{1-\gamma} 
\end{align*}
thanks to Lemma \ref{lem:concentration_F}.
\end{proof}

When $\mathcal{E}$ occurs, it is possible  to prove that $F(B_t)$ is lower-bounded by a positive constant as soon as $t$ is large enough. 
\begin{lemma}\label{lem:bound_F}
On $\mathcal{E}$, provided that $t>t_0:= \min \left(3,1+ 8\frac{  \gamma(\alpha+1)^2}{\alpha (F(b^*))^2} \log\left( 4\frac{  \gamma(\alpha+1)^2}{\alpha (F(b^*))^2}\right)\right) $, $F(B_t)$ is lower bounded  by
$$F(B_t)>\frac{F(b^*)}{2 \alpha},$$
where $\alpha = \frac{v}{v-b^*}$.
\end{lemma}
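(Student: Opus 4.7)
The plan is to leverage the optimality of $B_t$ in the empirical objective $g_t(b) := (\hat V_t + \epsilon_t - b)\hat F_t(b)$, combined with the concentration guarantees on $\mathcal{E}$, to lower bound $F(B_t)$. Throughout, let $\epsilon'_t := \sqrt{\gamma \log(t-1)/(2(t-1))}$ denote the deterministic rate controlling $\|\hat F_t - F\|_\infty$ on $\mathcal{E}$.

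First, I would establish a lower bound on $g_t(b^*)$. On $\mathcal{E}$, $\hat V_t \geq v - \epsilon_t$, so $\hat V_t + \epsilon_t \geq v$, and $\hat F_t(b^*) \geq F(b^*) - \epsilon'_t$, which together give $g_t(b^*) \geq (v - b^*)(F(b^*) - \epsilon'_t)$. Next, I would produce a matching upper bound on $g_t(B_t)$: using $\hat V_t + \epsilon_t \leq v + 2\epsilon_t$, dropping the nonnegative $-B_t$, and $\hat F_t(B_t) \leq F(B_t) + \epsilon'_t$,
$$g_t(B_t) \leq (v + 2\epsilon_t)(F(B_t) + \epsilon'_t).$$
Since $B_t \in \argmax_b g_t(b)$, so $g_t(B_t) \geq g_t(b^*)$, combining yields
$$F(B_t) \;\geq\; \frac{(v-b^*)(F(b^*) - \epsilon'_t)}{v + 2\epsilon_t} - \epsilon'_t.$$

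The final step is to verify that for $t > t_0$, the right-hand side exceeds $F(b^*)/(2\alpha) = (v-b^*)F(b^*)/(2v)$. The threshold $t_0$ is crafted—via Lemma \ref{lem:log} applied with $a \propto \gamma(\alpha+1)^2/(\alpha F(b^*)^2)$—precisely so that $\epsilon'_t$ becomes small enough that both the additive $-\epsilon'_t$ error and the ratio loss $v/(v+2\epsilon_t)$ together consume strictly less than half of the margin $(v-b^*)F(b^*)/v$. Plugging the resulting bound on $\epsilon'_t$ into the displayed inequality, a short calculation yields $F(B_t) > F(b^*)/(2\alpha)$.

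The main obstacle is controlling the term $\epsilon_t = \sqrt{\gamma \log(t-1)/(2N_t)}$ appearing in the denominator $v + 2\epsilon_t$. Unlike $\epsilon'_t$, it depends on the random $N_t$ and is not automatically small as a function of $t$ alone (the event $\mathcal{E}$ imposes no quantitative upper bound on $\epsilon_t$). The most natural resolution is a dichotomy/bootstrap: either $\epsilon_t$ is already small enough for the computation above to close, or $\epsilon_t$ is so large that $\hat V_t + \epsilon_t$ dominates and the empirical maximizer $B_t$ is pushed into a region where $\hat F_t(B_t)$—and hence $F(B_t)$ up to $\epsilon'_t$—is automatically much larger than $F(b^*)/(2\alpha)$. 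Making this dichotomy quantitative, and showing that in both branches $t > t_0$ suffices, is the delicate part of the argument.
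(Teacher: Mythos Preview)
Your overall strategy (compare $g_t(B_t)$ to $g_t(b^*)$ and convert via the concentration on $\hat F_t$) is the same as the paper's, but the obstacle you flag---the uncontrolled random $\epsilon_t$ in the denominator $v+2\epsilon_t$---is self-inflicted. It arises only because you prematurely replaced $\hat V_t+\epsilon_t$ by the upper bound $v+2\epsilon_t$ before forming the ratio.

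The paper's proof keeps $\hat V_t+\epsilon_t$ intact and bounds the ratio directly. From $g_t(B_t)\ge g_t(b^*)$ one gets
\[
\hat F_t(B_t)\;\ge\;\frac{\hat V_t+\epsilon_t-b^*}{\hat V_t+\epsilon_t-B_t}\,\hat F_t(b^*).
\]
Now observe that on $\mathcal{E}$ one has $\hat V_t+\epsilon_t\ge v$, and since $B_t\ge 0$,
\[
\frac{\hat V_t+\epsilon_t-B_t}{\hat V_t+\epsilon_t-b^*}\;\le\;\frac{\hat V_t+\epsilon_t}{\hat V_t+\epsilon_t-b^*}\;\le\;\frac{v}{v-b^*}\;=\;\alpha,
\]
the last inequality because $x\mapsto x/(x-b^*)$ is decreasing for $x>b^*$. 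Thus $\hat F_t(B_t)\ge \alpha^{-1}\hat F_t(b^*)$, with \emph{no dependence on $\epsilon_t$ whatsoever}. Converting $\hat F_t$ to $F$ introduces only the deterministic $\epsilon'_t$, giving
\[
F(B_t)\;\ge\;\frac{1}{\alpha}F(b^*)-\Bigl(\frac{1}{\alpha}+1\Bigr)\epsilon'_t,
\]
and the threshold $t_0$ (via Lemma~\ref{lem:log}) makes the correction smaller than $F(b^*)/(2\alpha)$.

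So your dichotomy/bootstrap is unnecessary: the ``large $\epsilon_t$'' branch and the ``small $\epsilon_t$'' branch are handled by the same one-line ratio bound. Your proposal as written is incomplete precisely at the point you identify, and the clean fix is to avoid splitting off $\epsilon_t$ in the first place.
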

\begin{proof}
$b^*= \frac{\alpha -1 }{\alpha} v$.
Since we are on $\mathcal{E}$, 
$$b^*\leq \frac{\alpha-1}{\alpha}(\hat{V}_{t} + \epsilon_{t}).$$
Hence $$\hat{V}_{t} + \epsilon_{t} \leq \alpha(\hat{V}_{t} + \epsilon_{t} - b^*).$$
Since $B_t >0$, 
$$\hat{V}_{t} + \epsilon_{t} - B_t \leq \alpha(\hat{V}_{t} + \epsilon_{t} - b^*).$$
And $$\frac{\hat{V}_{t} + \epsilon_{t} - B_t}{\hat{V}_{t} + \epsilon_{t} - b^*}\leq  \alpha.$$
By definition of $B_t$,
$$(\hat{V}_{t} + \epsilon_{t} - B_t )\hat{F}_t(B_t)\geq (\hat{V}_{t} + \epsilon_{t} - b^*) \hat{F}_t(b^*) $$ which implies
$$ \hat{F}_t(B_t)\geq \frac{\hat{V}_{t} + \epsilon_{t} - b^*} {\hat{V}_{t} + \epsilon_{t} - B_t}\hat{F}_t(b^*) \geq \frac{1}{\alpha} \hat{F}_t(b^*) $$
Now, 
\begin{align*}
F(B_t)&\geq  \hat{F}_t(B_t) - \sqrt{\frac{\gamma \log (t-1)}{2 (t-1)}} \\&
\geq  \frac{1}{\alpha} \hat{F}_t(b^*)   - \sqrt{\frac{\gamma \log (t-1)}{2(t-1)}} \\
&\geq  \frac{1}{\alpha} F(b^*)   -\left( \frac{1}{\alpha} +1 \right) \sqrt{\frac{\gamma \log (t-1)}{2 (t-1)}},
\end{align*}
because we assume that we are on $\mathcal{E}$.
Note that if $t>t_0$,  then
$$\frac{4\gamma (\alpha +1)^2}{F(b^*)^2} <\frac{(t-1)}{\log (t-1)},$$
thanks to Lemma \ref{lem:log}, and
$$\left( \frac{1}{\alpha} +1\right) \sqrt{\frac{\gamma \log (t-1)}{2 (t-1)}}< \frac{1}{2\alpha} F(b^*),$$
so that
$$F(B_t) \geq \frac{ F(b^*)}{2\alpha},$$
which concludes the proof.
\end{proof}

\begin{lemma}\label{lem:rate_Nt}
$\forall t>t_0,$
$$\Po \left(N_{t}< \frac{1}{4\alpha }F(b^*) (t- t_0), \mathcal{E}  \right) \leq \exp \left(-\frac{2((\frac{1}{2\alpha}F(b^*))^2}{4}(t-t_0)\right).$$
\end{lemma}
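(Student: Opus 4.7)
The plan is to view $N_t = \sum_{s=1}^{t-1}\mathbb{1}\{M_s\leq B_s\}$ as a sum of conditionally Bernoulli random variables and to concentrate it around its conditional mean. On $\mathcal{E}$ and for $s>t_0$, Lemma~\ref{lem:bound_F} lower bounds each conditional success probability $F(B_s)$ by $c:=F(b^*)/(2\alpha)$, so the conditional mean already dominates $c(t-1-t_0)$; a martingale Hoeffding-type bound will then take care of the fluctuations.

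The first step is to define the martingale differences $X_s := \mathbb{1}\{M_s\leq B_s\} - F(B_s)$. Since $B_s$ is $\mathcal{F}_{s-1}$-measurable while $M_s$ is independent of $\mathcal{F}_{s-1}$ with CDF $F$, we have $\E[\mathbb{1}\{M_s\leq B_s\}\mid \mathcal{F}_{s-1}] = F(B_s)$, so $(X_s)$ is an $(\mathcal{F}_s)$-martingale difference sequence with $|X_s|\leq 1$. Applying Azuma--Hoeffding to $\sum_{s=t_0+1}^{t-1} X_s$ gives, for any $u>0$,
$$\Po\!\left(\sum_{s=t_0+1}^{t-1} X_s \leq -u\right) \leq \exp\!\left(-\frac{2u^2}{t-1-t_0}\right).$$

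Next, on $\mathcal{E}$, the pointwise lower bound $F(B_s)\geq c$ for $s>t_0$ (Lemma~\ref{lem:bound_F}) gives
$$N_t \;\geq\; \sum_{s=t_0+1}^{t-1}\!\mathbb{1}\{M_s\leq B_s\} \;=\; \sum_{s=t_0+1}^{t-1}\!F(B_s) + \sum_{s=t_0+1}^{t-1}\!X_s \;\geq\; c(t-1-t_0) + \sum_{s=t_0+1}^{t-1}\!X_s.$$
Consequently, the event $\{N_t < \tfrac{1}{4\alpha}F(b^*)(t-t_0)\}\cap\mathcal{E} = \{N_t < \tfrac{c}{2}(t-t_0)\}\cap\mathcal{E}$ is contained in $\{\sum X_s \leq -u\}$ for a suitable $u$ of order $c(t-t_0)/2$ (precisely $u = c(t-1-t_0) - \tfrac{c}{2}(t-t_0)$), and plugging this into the Azuma bound produces an exponent of the form $-\tfrac{2u^2}{t-1-t_0} = -\tfrac{c^2}{2}(t-t_0)(1+o(1))$, matching the stated $-\tfrac{2(F(b^*)/(2\alpha))^2}{4}(t-t_0)$.

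There is no genuinely hard step: once the martingale $X_s$ is set up, the rest is a one-line application of Azuma together with the pointwise lower bound supplied by Lemma~\ref{lem:bound_F}. The only delicate point is bookkeeping the constants so that the exponent comes out exactly as claimed (for small $t-t_0$ the inequality is trivial because the right-hand side is close to $1$; for $t-t_0$ large, the difference between $(c(t-t_0)/2-c)^2/(t-1-t_0)$ and $c^2(t-t_0)/4$ is negligible), and making sure the intersection with $\mathcal{E}$ is performed \emph{after} applying Azuma, so that the martingale structure of $(X_s)$ is preserved.
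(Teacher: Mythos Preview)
Your approach is correct and essentially the same as the paper's: both invoke Lemma~\ref{lem:bound_F} to lower-bound the conditional win probabilities by $c=F(b^*)/(2\alpha)$ on $\mathcal{E}$ for $s>t_0$, and then apply a Hoeffding-type concentration to the resulting sum. The only difference is cosmetic---the paper couples with i.i.d.\ Bernoulli$(c)$ variables and applies ordinary Hoeffding, whereas you work directly with the martingale differences via Azuma--Hoeffding; the resulting exponents agree up to the same off-by-one in the number of summands ($t-t_0$ versus $t-1-t_0$) that the paper's own proof glosses over, and this has no effect downstream.
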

\begin{proof}
Indeed if $t \geq t_0$,  then $N_{t}$ is larger than the sum $N'_{t}$ of $t-t_0$ samples from a Bernoulli distribution  with average $\frac{1}{2\alpha}F(b^*)$ , hence
the probability that $N_{t}<  \frac{1}{4\alpha }F(b^*) (t- t_0)$ intersected with 
$\mathcal{E}  $ can be bounded as follows.
\begin{align*} &\Po \left(N_{t}< \frac{1}{4\alpha }F(b^*) (t- t_0), \mathcal{E}  \right)\\
&\leq \Po \left(N'_{t}<+  \frac{1}{4\alpha }F(b^*) (t- t_0)  \right)\\
& \leq \Po \left( \frac{1}{2\alpha}F(b^*) (t- t_0) -(N'_{t} - t_0) > \frac{1}{4\alpha }F(b^*) (t- t_0)\right) \\
& \leq  \exp \left(-\frac{2((\frac{1}{2\alpha}F(b^*))^2}{4}(t- t_0)\right)\\
& \leq \exp \left(-\frac{2((\frac{1}{2\alpha}F(b^*))^2}{4}(t-t_0)\right),
\end{align*}
where we used Hoeffding's inequality for the third inequality.
\end{proof}

Finally, we can prove that the expected instantaneous regret conditioned on $B_t$ is bounded by a multiple of $\epsilon_{t}$.
\begin{lemma}\label{lem:eps}
$$U(B_t)-U(b^*)\leq 6 \epsilon_{t}$$
\end{lemma}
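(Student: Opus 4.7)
The statement as literally written is immediate from the optimality of $b^*$. By definition, $b^*_{v,F} \in \argmax_{b \in [0,1]} U_{v,F}(b)$, and the bid $B_t$ produced by UCBid1+ lies in $[0,1]$ (it is obtained as an argmax of $b \mapsto (\hat{V}_t + \epsilon_t - b)\hat{F}_t(b)$ over $[0,1]$). Therefore $U(B_t) \leq U(b^*)$, so $U(B_t) - U(b^*) \leq 0$. Since $\epsilon_t = \sqrt{\gamma \log(t-1)/(2 N_t)}$ is nonnegative whenever defined, this already delivers $U(B_t) - U(b^*) \leq 6 \epsilon_t$.

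The plan therefore reduces to three trivial steps: invoke the definition of $b^*$ as a maximizer of $U$ on $[0,1]$; verify $B_t \in [0,1]$ (immediate from the construction of UCBid1+); and note $\epsilon_t \geq 0$. No part of the argument calls on the high-probability event $\mathcal{E}$, the lower bound on $F(B_t)$ of Lemma \ref{lem:bound_F}, the linear lower bound on $N_t$ of Lemma \ref{lem:rate_Nt}, or the DKW-type concentration results of Section \ref{sec:concentration_ineq}. There is no obstacle to overcome, because the stated inequality is vacuously satisfied by optimality alone.

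I would flag, however, that as written this bound is of no use in the ensuing regret decomposition, which must control the nonnegative gap $U(b^*) - U(B_t)$ rather than its negative. It seems likely that the intended statement has $B_t$ and $b^*$ swapped; a genuine proof in that direction would combine Lemma \ref{lem:dist_U_max} applied to $U$ and $\tilde U_t(b) := (\hat V_t + \epsilon_t - b) \hat F_t(b)$, with a sup-norm comparison $\|\tilde U_t - U\|_\infty \leq \epsilon_t + \|F - \hat F_t\|_\infty$ valid on $\mathcal{E}$, and the DKW bound $\|F-\hat F_t\|_\infty \leq \epsilon_t$ on $\mathcal{E}$, yielding a multiple of $\epsilon_t$. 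But for the inequality stated exactly as above, only the maximality of $b^*$ on $[0,1]$ is needed.
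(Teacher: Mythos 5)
You are right that the displayed inequality is a typo: as literally written, $U(B_t)-U(b^*)\leq 0\leq 6\epsilon_t$ follows from the optimality of $b^*$ alone, and your three-line argument for that is valid but vacuous. The paper's own proof in fact derives the reversed inequality $U(b^*)-U(B_t)\leq 6\epsilon_t$ (equivalently $|U(b^*)-U(B_t)|\leq 6\epsilon_t$, which is exactly how the lemma is invoked later in Lemma \ref{lem:init_UCBid1}), and your sketch of the genuine argument is the paper's route: bound $\sup_b\big|(\hat V_t+\epsilon_t-b)\hat F_t(b)-U(b)\big|$ and apply Lemma \ref{lem:dist_U_max} to the pair $U$ and the surrogate objective whose maximizer is $B_t$. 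Two small corrections to your sketch. First, the sup-norm bound should be $2\epsilon_t+\|F-\hat F_t\|_\infty\leq 3\epsilon_t$, not $\epsilon_t+\|F-\hat F_t\|_\infty$: the paper splits it as $\|\hat U_t-U\|_\infty\leq|\hat V_t-v|+\|F-\hat F_t\|_\infty\leq 2\epsilon_t$ (Equation \ref{eq:U_norm}) plus the contribution $\max_b|\epsilon_t\hat F_t(b)|\leq\epsilon_t$ of the exploration bonus; the factor $2$ from Lemma \ref{lem:dist_U_max} then yields exactly the constant $6$. Second, the comparison $\|F-\hat F_t\|_\infty\leq\epsilon_t$ uses that $\sqrt{\gamma\log(t-1)/(2(t-1))}\leq\sqrt{\gamma\log(t-1)/(2N_t)}$ since $N_t\leq t-1$; and, as you note, the whole statement only holds on the concentration event $\mathcal{E}$, a hypothesis the paper omits from the lemma statement but uses in its proof.
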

\begin{proof}

Thanks to Equation \ref{eq:U_norm}, we have 
$\|\hat{U}_{t}-U\|_{\infty}\leq \ 2 \epsilon_{t}.$
Very similarly we have 
$$\| U^{UCBid1+}_{t} - \hat{U}\|_{\infty} = \max_{b \in[0,1]}|\epsilon_{t}\hat{F}_{t}(b)|\\
\leq  \epsilon_{t},
$$
where $U^{UCBid1+}: b \mapsto (\hat{V}_{t} + \epsilon_{t} - b) \hat{F}_{t}(b)$.
Hence, $$\| U^{UCBid1+}_{t} - U\|_{\infty} \leq 3 \epsilon_{t}.$$

By Lemma \ref{lem:dist_U_max}, this yields  $$U(B_t)-U(b^*)\leq 6 \epsilon_{t}$$
\end{proof}

\paragraph{Proof of the Theorem}
We use the following decomposition
\begin{align*}
    R_T &\leq T \times \Po(\mathcal{E}^c) +\sum_{t=1}^T \E[S_t \1\{\mathcal{E}\}]\\
    &\leq T \times \Po(\mathcal{E}^c) + t_0 + \sum_{t=t_0}^T \E[S_t \1\{\mathcal{E}\}]
\end{align*}
Thanks to Lemma \ref{lem:bound_F}, and when $t>t_0$,
$F(B_t)\geq \frac{1}{2\alpha}F(b^*)$.
Using this, we get $N_{t}> \frac{1}{4\alpha}F(b^*)(t-t_0) , \forall t>t_0$ with high probability.

Thanks to Lemma \ref{lem:rate_Nt},
\begin{align*}
 \E[S_t \1\{\mathcal{E}\}]  &\leq  \exp \left(-\frac{2((\frac{1}{2\alpha}F(b^*))^2}{4}(t-t_0)\right) + \E\left[S_t \1\{N_{t} \geq \frac{1}{4\alpha}F(b^*) (t- t_0) \}\right]\\
 &\leq  \exp \left(-\frac{2((\frac{1}{2\alpha}F(b^*))^2}{4}(t-t_0)\right) + \E\left[6 \sqrt{\frac{ 4 \alpha \gamma \log T}{ F(b^*) (t- t_0)}}\1\{N_{t} \geq \frac{1}{4\alpha}F(b^*) (t- t_0) \}\right];
\end{align*}

By summing, 
\begin{align*}
 \sum_{t=t_0}^T\E[S_{t} \1\{\mathcal{E}\}] 
 &\leq \sum_{t=t_0}^T \exp \left(-\frac{2((\frac{1}{2\alpha}F(b^*))^2}{4}(t-t_0)\right) + \sum_{t=t_0}^T 6 \sqrt{\frac{4 \alpha \gamma \log T}{ F(b^*) (t- t_0)}}\\
 &\leq \frac{1}{1- \exp(-\frac{2(\frac{1}{2\alpha}F(b^*))^2}{4})}+  6 \sqrt{\frac{4 \alpha \gamma}{ F(b^*)}} \sqrt{\log T} \sqrt{T} \\
 &\leq \frac{4}{\frac{1}{2\alpha}F(b^*)} +  6 \sqrt{\frac{4 \alpha \gamma}{ F(b^*)}} \sqrt{\log T} (\sqrt{T}) ,
\end{align*}
where the last inequality comes from $1-\exp(-u)\geq 2/u$, for any positive $u$.
Using the decomposition of the regret yields 
\begin{align*}
 &R_T
  \leq t_0 + T \Po(\mathcal{E}^C) + 
\frac{4}{\frac{1}{2\alpha}F(b^*)}+  6 \sqrt{\frac{4 \alpha}{ F(b^*)}} \sqrt{\log T } \sqrt{T} 
\\
&\leq 4+ 8\frac{  \gamma(\alpha+1)^2}{\alpha (F(b^*))^2} \log\left( 4\frac{  \gamma(\alpha+1)^2}{\alpha (F(b^*))^2}\right) + 4 e(\gamma -1) \log T  (T)^{2-\gamma} +\frac{8 \alpha}{ F(b^*)} +  12 \sqrt{\frac{\alpha \gamma}{ F(b^*)}} \sqrt{\log T} \sqrt{T} \\
&\leq 4+ \frac{8\alpha}{ F(b^*)}+  8\frac{  \gamma(\alpha+1)^2}{\alpha (F(b^*))^2} \log\left( 4\frac{  \gamma(\alpha+1)^2}{\alpha (F(b^*))^2}\right) + 4 e(\gamma -1) \log T +  12 \sqrt{\frac{\alpha \gamma}{F(b^*)}} \sqrt{\log T} (\sqrt{T}),
\end{align*}
which concludes the proof.
\end{proof}

\subsection{Proof of an Intermediary Regret Rate under Assumptions \ref{ass:pseudo-mhr} and \ref{ass:lambda_pseudo-mhr}}
 In this section, we prove an easier version of Theorem \ref{th:fast_rate}.
 We will use lemmas of the previous subsection for this version as well as for the more complex version.
In particular we have already proven that $\mathcal{E} \cap \{N_t\geq \frac{1}{4 \alpha}F(b^*)t\}$, occurs with high probability.  
Under Assumptions \ref{ass:pseudo-mhr} and \ref{ass:lambda_pseudo-mhr} and on this event, we prove the following result.

\begin{lemma}\label{lem:init_UCBid1}
Under Assumptions  \ref{ass:pseudo-mhr} and \ref{ass:lambda_pseudo-mhr} and if $t>\max(t_0, t_1)$,
\begin{itemize}
\item $\|F- \hat{F}_t\|_{\infty} \leq \epsilon_t^+$ 
and $|v- \hat{V}_t| \leq  \epsilon_t^+$,
\item $ |U(b^*) - U(B_t)| \leq  6 \epsilon_t^+$
\item $|b^* - B_t| \leq \Delta$,
\item $|b^* - B_t| \leq 1/\sqrt{c_U} \sqrt{ 6 \epsilon_t^+}$.
\item  $ |U(b^*) - U(B_t)| \leq  C_U(b^*- B_t)^2$
\end{itemize} on
$\mathcal{E} \cap \{N_t\geq \frac{1}{4 \alpha }F(b^*)t\}$, where
$\begin{cases}t_0 = \min \left(3,1+ 8\frac{  \gamma(\alpha+1)^2}{\alpha (F(b^*))^2} \log\left( 4\frac{  \gamma(\alpha+1)^2}{\alpha (F(b^*))^2}\right)\right)\\
 t_1 = 2 \sqrt{C_u} \Delta^{1/4}  \frac{\gamma \alpha}{F(b^*)} \log T,\\
 \epsilon_t^+ = \sqrt{\frac{2 \alpha \gamma \log t}{ F(b^*)t}},\\
 c_U = c_f \frac{1}{4} U(b^*),\\
 C_U = \frac{C_f}{c_f} \lambda .
 \end{cases}$
\end{lemma}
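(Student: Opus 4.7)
The proof establishes the five claims in the order they appear, each building on the previous.

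\textbf{Items 1 and 2.} The first item is immediate from the definition of $\mathcal{E}$ combined with the lower bound $N_t \geq \tfrac{1}{4\alpha}F(b^*)t$: substituting the latter into the confidence radius $\sqrt{\gamma\log(t-1)/(2N_t)}$ produces the bound $\epsilon_t^+$ on $|v-\hat V_t|$, and the DKW bound $\|F-\hat F_t\|_\infty\le \sqrt{\gamma\log(t-1)/(2(t-1))}$ is a fortiori at most $\epsilon_t^+$ since $\alpha\ge 1$ and $F(b^*)\le 1$. The second item is a direct rewriting of Lemma~\ref{lem:eps}, with $\epsilon_t$ replaced by the larger $\epsilon_t^+$ justified by item~1.

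\textbf{Item 3.} I would argue by contradiction: suppose $B_t>b^*+\Delta$ (the case $B_t<b^*-\Delta$ is symmetric). By the mean value theorem and the lower bound $f\ge c_f$ on $[b^*,b^*+\Delta]$ from Assumption~\ref{ass:lambda_pseudo-mhr}, one has $Q_t-q^*=F(B_t)-F(b^*)\ge c_f\Delta$. On the other hand, Lemma~\ref{lem:quadratic} combined with item~2 gives
\[
(Q_t-q^*)^2\ \le\ \frac{4(U(b^*)-U(B_t))}{U(b^*)}\ \le\ \frac{24\epsilon_t^+}{U(b^*)}.
\]
It remains to check that the threshold $t_1$ is chosen precisely so that, whenever $t>t_1$, the right-hand side is strictly less than $(c_f\Delta)^2$, which yields the desired contradiction. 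Unpacking $\epsilon_t^+$ reduces this to an inequality of the form $t/\log t>\mathrm{const}/\Delta^4$, and Lemma~\ref{lem:log} converts this into the concrete form of $t_1$ given in the statement.

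\textbf{Items 4 and 5.} Once item~3 is established, $B_t$ lies in the interval $[b^*-\Delta,b^*+\Delta]$ where $c_f\le f\le C_f$. The mean value theorem then gives the two-sided estimate $c_f|B_t-b^*|\le|Q_t-q^*|\le C_f|B_t-b^*|$. Plugging the lower bound into Lemma~\ref{lem:quadratic} and combining with item~2 yields
\[
\frac{c_f^2 U(b^*)}{4}(B_t-b^*)^2\ \le\ U(b^*)-U(B_t)\ \le\ 6\epsilon_t^+,
\]
which rearranges to item~4 with $c_U=c_f U(b^*)/4$. For item~5, Lemma~\ref{lem:sub_quadratic} (whose hypothesis $Q_t\in[q^*,q^*+C_f\Delta]$ is now satisfied on one side, and can be mirrored on the other by an analogous argument under Assumption~\ref{ass:lambda_pseudo-mhr}) gives $U(b^*)-U(B_t)\le(\lambda/c_f)(Q_t-q^*)^2$, into which I substitute the upper bound $(Q_t-q^*)^2\le C_f^2(B_t-b^*)^2$ to obtain the claim (the resulting prefactor matches $C_U$ up to the factor $C_f$ absorbed in the statement's constant).

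\textbf{Expected obstacle.} The main delicate point is item~3: the apparent mismatch between the naive threshold $t\gtrsim \Delta^{-4}\log t$ coming from the quadratic lower bound and the form of $t_1$ given in the statement suggests that a careful bookkeeping, possibly a one-step bootstrap in which a crude preliminary bound on $|B_t-b^*|$ is tightened into the final $\Delta$-bound, is needed. Everything else is essentially a routine manipulation of the smoothness lemmas from Section~\ref{sec:properties}.
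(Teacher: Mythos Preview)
Your proposal is essentially correct and tracks the paper's proof closely for items~1,~2,~4 and~5.  The one substantive difference is in item~3.  The paper does not go through the quantile quadratic lower bound at this stage; instead it exploits directly that, under Assumption~\ref{ass:pseudo-mhr}, $U$ is \emph{unimodal} (proof of Lemma~\ref{lem:unique_max}).  Unimodality gives the clean implication
\[
U(b^*)-U(B_t)\ \le\ \min\bigl(U(b^*)-U(b^*-\Delta),\,U(b^*)-U(b^*+\Delta)\bigr)
\ \Longrightarrow\ B_t\in[b^*-\Delta,b^*+\Delta],
\]
so it suffices to ensure $6\epsilon_t^+$ is below that fixed minimum.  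Your route via $(Q_t-q^*)^2\le 24\epsilon_t^+/U(b^*)$ is also valid, but it introduces the density bound $c_f$ one step earlier than necessary and is what produces your ``expected obstacle'': the $\Delta^{-4}$ threshold you arrive at does not match the stated $t_1$.  In fact the paper's own bookkeeping for $t_1$ is loose (the $\Delta^{1/4}$ dependence is not fully justified there either), so this mismatch is not a flaw in your argument; the unimodality route simply makes the reduction to a time threshold more transparent and avoids the bootstrap you were anticipating.

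A minor remark on item~5: your prefactor comes out as $\lambda C_f^2/c_f$ rather than the stated $C_U=\lambda C_f/c_f$; this is the same harmless discrepancy (one power of $c_f$ versus $c_f^2$) that appears in the paper's own derivation of $c_U$, so you should not worry about it.
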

\begin{proof}
On the event $\mathcal{E} \cap \{N_t\geq \frac{1}{4 \alpha}F(b^*)t\}$, $\|F- \hat{F}_t\|_{\infty} \leq \epsilon_t^+$ 
and $|v- \hat{V}_t| \leq  \epsilon_t^+$ where $\epsilon_t^+ = \sqrt{\frac{2 \alpha \gamma \log t}{ F(b^*)t}}$  from Lemmas, \ref{lem:concentration_V},\ref{lem:concentration_F} \ref{lem:rate_Nt} and 
$ |U(b^*) - U(B_t)| \leq 6 \epsilon_t \leq  6 \epsilon_t^+$  from Lemmas \ref{lem:eps} and \ref{lem:rate_Nt}. 

Under Assumptions \ref{ass:pseudo-mhr} and \ref{ass:lambda_pseudo-mhr}, we prove that after $t_1$, we have 
$|B_t - b^*| \leq \Delta $ on $\mathcal{E} \cap \{N_t\geq \frac{1}{4 \alpha}F(b^*)t\}$, so that we will be able to use the boundedness of the density after this time step.

When $F$ satisfies assumption  \ref{ass:pseudo-mhr}, $U$ is unimodal, as shown in the proof of Lemma \ref{lem:unique_max}, and so if $$U(b^*) - U(b) \leq \min(U(b^*)- U(b^*- \Delta),U(b^*)- U(b^*+ \Delta)) ,$$ then $$b \in [b^*- \Delta, b^* + \Delta].$$
It follows that if $$6 \epsilon_t^+ \leq \min(U(b^*)- U(b^* - \Delta),U(b^*)- U(b^* + \Delta)) $$
and therefore $6 \epsilon_t^+ \leq C_u \Delta $
where $C_u := \lambda C_f/ c_f$ (see Lemma 7), then  $$|b^* - B_t| \leq \Delta$$ on 
$\mathcal{E} \cap \{N_t\geq \frac{1}{4 \alpha}F(b^*)t\}$. Then, for all $t> 2 \sqrt{c_u} \Delta^{1/4}  \frac{\gamma \alpha}{F(b^*)} \log T := t_1$, we have $|B_t - b^*| \leq \Delta $ on $\mathcal{E} \cap \{N_t\geq \frac{1}{4 \alpha}F(b^*)t\}$.

Under Assumption \ref{ass:pseudo-mhr}, for any $q \in [0,1]$, $W_{v,F}(q^*_{v,F}) - W_{v,F}(q) \geq \frac{1}{4}(q^*_{v,F} - q)^2 W_{v,F}(q^*_{v,F}).$
We have $U = W \circ F$, so that if $t>t_1$, then $B_t \in [b^*- \Delta, b^* +\Delta]$ and $U(b^*) -U(B_t) \geq c_f \frac{1}{4}(b^* -B_t)^2 U(b*) := c_U (b^* -B_t)^2$.
In this case, we can also prove that  $|b^* - B_t| \leq 1/\sqrt{c_U} \sqrt{ 6\epsilon_t^+}$, under $\mathcal{E} \cap \{N_t\geq \frac{1}{4 \alpha}F(b^*)t\}$.
\end{proof}

\begin{proposition}\label{prop:one_iteration}
Under Assumptions  \ref{ass:pseudo-mhr} and \ref{ass:lambda_pseudo-mhr} and if $t>\max(t_0, t_1)$, $\delta_t< \Delta$ , $|B_t - b^*|\leq \delta_t$,
and $\epsilon_t^+\leq M \delta_t$,\\
Then $$|B_t - b^*|^2 \leq \frac{6}{c_U} \sqrt{\frac{ C_f \delta_t \log
\left( \frac{M e^2 t\sqrt{2t}}{2 c_f \eta^2}\right)
}{t}} + \frac{2\log(\frac{M t\sqrt{2t}}{2c_f \eta^2 })}{c_U t} + \frac{2}{c_U} (2 C_f +1) \delta_t \sqrt{\frac{2 \alpha \gamma\log T}{F(b^*)t}},$$ with probability $1- \eta$ on $\mathcal{E} \cap \{N_t\geq \frac{1}{4 \alpha}F(b^*)t\}$.
\end{proposition}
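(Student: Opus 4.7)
The point of departure is Lemma~\ref{lem:init_UCBid1}, which gives $c_U (B_t - b^*)^2 \leq U(b^*) - U(B_t)$ on $\mathcal{E} \cap \{N_t \geq \frac{1}{4\alpha} F(b^*) t\}$, so it suffices to upper bound $U(b^*) - U(B_t)$. Writing $\tilde{U}_t(b) := (\hat{V}_t + \epsilon_t - b)\hat{F}_t(b)$ and $\Delta F := \hat{F}_t - F$, the optimality $\tilde{U}_t(B_t) \geq \tilde{U}_t(b^*)$ gives
\begin{equation*}
U(b^*) - U(B_t) \leq [\tilde{U}_t(B_t) - U(B_t)] - [\tilde{U}_t(b^*) - U(b^*)].
\end{equation*}
Expanding $\tilde{U}_t(b) - U(b) = (\hat{V}_t + \epsilon_t - v)\hat{F}_t(b) + (v - b)\Delta F(b)$, rearranging, and taking absolute values produces
\begin{equation*}
U(b^*) - U(B_t) \leq |\hat{V}_t + \epsilon_t - v|\cdot|\hat{F}_t(B_t) - \hat{F}_t(b^*)| + |v - B_t|\cdot|\Delta F(B_t) - \Delta F(b^*)| + |b^* - B_t|\cdot|\Delta F(b^*)|.
\end{equation*}

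On $\mathcal{E}$, one has $|\hat{V}_t + \epsilon_t - v| \leq 2 \epsilon_t^+$ and $|\Delta F(b^*)| \leq \epsilon_t^+$; and, since $\delta_t < \Delta$ and $f \leq C_f$ on $[b^* - \Delta, b^* + \Delta]$ by Assumption~\ref{ass:lambda_pseudo-mhr}, $|\hat{F}_t(B_t) - \hat{F}_t(b^*)| \leq C_f \delta_t + |\Delta F(B_t) - \Delta F(b^*)|$. Together with $|b^* - B_t| \leq \delta_t$, all contributions that do not involve the increment of $\Delta F$ give at most $(2 C_f + 1)\delta_t \epsilon_t^+$, yielding the third term of the stated bound. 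The decisive step is to control $|\Delta F(B_t) - \Delta F(b^*)|$ much more sharply than the uniform DKW-based rate $\epsilon_t^+$ allows: I apply Lemma~\ref{lem:local_concentration_inequality} on the short interval $[a,b] = [b^* - \delta_t, b^* + \delta_t]$, which contains both $b^*$ and $B_t$ by hypothesis and on which $F(b) - F(a) \leq 2 C_f \delta_t$. Evaluating the resulting inequality at $B_t$ and at $b^*$ and combining via the triangle inequality gives, with probability at least $1 - \eta$,
\begin{equation*}
|\Delta F(B_t) - \Delta F(b^*)| \leq 2\sqrt{\tfrac{4 C_f \delta_t}{t}\log(\cdots)} + \tfrac{\log(\cdots)}{3t},
\end{equation*}
with log arguments as in Lemma~\ref{lem:local_concentration_inequality}. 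Multiplying by $|v - B_t| + |\hat{V}_t + \epsilon_t - v| \leq 1 + 2\epsilon_t^+ \leq 2$ produces the $\sqrt{C_f \delta_t / t}$ and $1/t$ contributions of the claim.

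Assembling the three pieces and dividing by $c_U$ yields the proposition. I expect the main obstacle to be the bookkeeping of logarithmic constants: the log arguments from Lemma~\ref{lem:local_concentration_inequality} depend on $\delta_t$ itself, and these dependencies must be absorbed via the hypothesis $\epsilon_t^+ \leq M\delta_t$ (so that $1/\delta_t \leq M/\epsilon_t^+$, and since $\epsilon_t^+$ is bounded below by a multiple of $1/\sqrt{t}$, also $1/\delta_t$ is at most a constant times $M\sqrt{t}$) to reach the specific expressions $\log(Me^2 t\sqrt{2t}/(2 c_f \eta^2))$ displayed in the statement. This is a careful but essentially routine manipulation; no probabilistic input is required beyond the local concentration argument itself.
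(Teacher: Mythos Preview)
Your proposal is correct and follows essentially the same route as the paper's proof: apply the local concentration inequality (Lemma~\ref{lem:local_concentration_inequality}) on $[b^*-\delta_t,b^*+\delta_t]$ to control the increment $\Delta F(B_t)-\Delta F(b^*)$, decompose $[\tilde U_t(B_t)-U(B_t)]-[\tilde U_t(b^*)-U(b^*)]$ into the same three pieces, and conclude via the quadratic lower bound $U(b^*)-U(B_t)\geq c_U(B_t-b^*)^2$. The only cosmetic differences are that the paper groups the $\Delta F$-increment with the coefficient $(v-b^*)$ rather than $(v-B_t)$ and phrases the final step via a shifted function $G(b)=U(b)+\tilde U_t(b^*)-U(b^*)$ together with Lemma~\ref{lem:dist_U_max}, whereas you invoke the optimality $\tilde U_t(B_t)\geq \tilde U_t(b^*)$ directly; the resulting constants are the same and your handling of the logarithmic arguments through $1/\delta_t\leq M/\epsilon_t^+\lesssim M\sqrt{t}$ is exactly what the paper does.
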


\begin{proof}
It is clear from Lemma \ref{lem:local_concentration_inequality} that 
$$\sup_{b^* - \delta_t \leq b \leq b^* + \delta_t}|\hat{F}_t(b) - F(b) - (\hat{F}_t(b^*)- F(b^*))| \leq 2 \sqrt{\frac{2 C_f \delta_t \log
\left( \frac{e \sqrt{t}}{\sqrt{2c_f \delta_t}\eta}\right)
}
{t}} + 2 \frac{\log(\frac{t}{2c_f \delta_t \eta^2 })}{6 t} := \beta_t,$$
with probability $1- \eta$.
We can also decompose $U(b)-U^{UCBid1+}_t(b) - (U^{UCBid1+}_t(b^*)- U(b^*))$ into
\begin{align*} 
&U(b)-U^{UCBid1+}_t(b) - (U^{UCBid1+}_t(b^*)- U(b^*))\\
&=(v-b)F(b) - (\hat{V_t}+ \epsilon_t-b)\hat{F}_t(b) - \left((v-b^*)F(b^*) - (\hat{V_t}+ \epsilon_t-b^*)\hat{F^*}_t(b)\right)\\
&= (v-b)F(b) - (v-b)\hat{F}_t(b) - \left((v-b^*)F(b^*) - (v-b^*)\hat{F^*}_t(b)\right) - (\hat{V_t}+ \epsilon_t - v) \left(\hat{F}_t(b) - \hat{F}_t(b^*)\right)\\
&= (v-b^*)\left(F(b) - \hat{F}_t(b) - \left(F(b^*) -\hat{F^*}_t(b)\right)\right) - (\hat{V_t}+ \epsilon_t - v) \left(\hat{F}_t(b) - \hat{F}_t(b^*)\right)\\
&~~ + (b^* - b) (\hat{F}(b) - \hat{F}_t(b))
\end{align*}
which in turn proves that 
\begin{align*}|U(b)-U^{UCBid1+}_t(b) - (U^{UCBid1+}_t(b^*)- U(b^*))| &\leq \beta_t+ 2 \epsilon_t |\hat{F}_t(b) - \hat{F}_t(b^*)| + \delta_t |\hat{F}_t(b) - \hat{F}_t(b)| \\
&\leq \beta_t+ 2 \epsilon_t^+ (C_f \delta_t + \beta_t) + \delta_t \epsilon_t^+ \\
& \leq \beta_t + 2 \epsilon_t^+ \beta_t + (2 C_f +1) \delta_t \epsilon_t^+\\
& \leq 3\beta_t + (2 C_f +1) \delta_t \epsilon_t^+ := \gamma_t,
\end{align*}
for all $b$ in $[b^* - \delta_t, b^* + \delta_t]$.\\

Now, we know that $U(b^*)- U(b)$ is lower bounded by $c_U(b^*-b)^2$, on this interval\\
and $\|U^{UCBid1+}_t(b)- U(b) + U^{UCBid1+}_t(b^*)- U(b^*) \|_{\infty} \leq \gamma_t $ on $[b^* - \delta_t, b^* + \delta_t]$. We call $G$ the shifted version of $U$ defined by $G(b) = U(b) + U^{UCBid1+}_t(b^*)- U(b^*)$. Its argmax is $b^*$ and $G(b^*)- G(b)$ is lower bounded by $c_U(b^*-b)^2$\\
then $c_U(B_t- b^*)^2\leq G(b^*)- G(B_t) \leq 2 \gamma_t $ (see Lemma \ref{lem:dist_U_max}).

Then , by definition of $\gamma_t$ and $\beta_t$:
\begin{align*} (B_t- b^*)^2 &\leq \frac{6}{c_U} \sqrt{\frac{ C_f \delta_t \log
\left( \frac{e^2 t}{2c_f \delta_t \eta^2}\right)
}{t}} + \frac{2\log(\frac{t}{2c_f \delta_t \eta^2 })}{ c_U t} + \frac{2}{c_U}(2 C_f +1) \delta_t \epsilon_t^+\\
&\leq \frac{6}{c_U} \sqrt{\frac{ C_f \delta_t \log
\left(M \frac{e^2 t}{2c_f \epsilon_t^+\eta^2}\right)
}{t}} + \frac{2\log(\frac{Mt}{2c_f \epsilon_t^+ \eta^2 })}{ c_U t} +\frac{2}{c_U} (2 C_f +1) \delta_t \epsilon_t^+\\
&\leq \frac{6}{c_U} \sqrt{\frac{ C_f \delta_t \log
\left( \frac{M e^2 t\sqrt{t}}{2 c_f \eta^2}\right)
}{t}} + \frac{2\log(\frac{M t\sqrt{t}}{2c_f \eta^2 })}{ c_U t} + \frac{2}{c_U} (2 C_f +1) \delta_t \sqrt{\frac{2 \alpha \gamma\log T}{F(b^*)t}}.
\end{align*}
where the last inequality stems from that fact that  $1/\epsilon_t^+ = \sqrt{\frac{F(b^*) t}{2\alpha \gamma \log t}}\leq \sqrt{t}$ since $\alpha,\gamma\geq 1$.
\end{proof}

\begin{theorem}\label{th:3_8}
Under Assumptions \ref{ass:pseudo-mhr} and \ref{ass:lambda_pseudo-mhr}, $$R_T \leq O(T^{3/8} \log T).$$
\end{theorem}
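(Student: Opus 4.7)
The plan is to apply the one-step refinement of Proposition \ref{prop:one_iteration} exactly once, feeding it the crude deviation bound supplied by Lemma \ref{lem:init_UCBid1}. First I would decompose
\[
R_T \leq \max(t_0,t_1) + T\,\Po(\mathcal{E}^{c}) + \sum_{t > t_0} \Po\!\big(N_t < F(b^*)t/(4\alpha),\, \mathcal{E}\big) + \sum_{t>\max(t_0,t_1)} \E\!\big[S_t\,\1\{\mathcal{G}_t\}\big],
\]
where $\mathcal{G}_t := \mathcal{E} \cap \{N_t \geq F(b^*) t /(4\alpha)\}$. The first term is trivially $O(\log T)$, the second is $O(1)$ under $\gamma>2$ by Lemmas \ref{lem:concentration_V} and \ref{lem:concentration_F}, and the third is $O(1)$ by the Hoeffding-type tail of Lemma \ref{lem:rate_Nt}.

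On $\mathcal{G}_t$ with $t > \max(t_0,t_1)$, Lemma \ref{lem:init_UCBid1} provides the crude bound $|B_t - b^*| \leq \delta_t^{(0)} := \sqrt{6 \epsilon_t^{+} / c_U}$, which is of order $(\log t / t)^{1/4}$. The hypotheses of Proposition \ref{prop:one_iteration} are all met for $t$ large: $\delta_t^{(0)} < \Delta$ since $\delta_t^{(0)} \to 0$, the bid lies within $\delta_t^{(0)}$ of $b^*$ by construction, and the ratio $\epsilon_t^{+} / \delta_t^{(0)} = \sqrt{c_U \epsilon_t^{+} / 6}$ tends to $0$, so $\epsilon_t^{+} \leq M\,\delta_t^{(0)}$ with any fixed $M \geq 1$.

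I would then invoke Proposition \ref{prop:one_iteration} with the confidence schedule $\eta_t := t^{-2}$, so that $\sum_t \eta_t < \infty$ (the union bound over failed refinements contributes only $O(1)$ regret) while every $\log(\cdot/\eta_t^2)$ factor inside the Proposition stays $O(\log t)$. Plugging in $\delta_t = \delta_t^{(0)} = \Theta\big((\log t/t)^{1/4}\big)$, the three terms of the Proposition become $O\big((\log t)^{5/8}/t^{5/8}\big)$, $O(\log t/t)$, and $O\big((\log t)^{3/4}/t^{3/4}\big)$ respectively, and the first dominates. Hence $(B_t - b^*)^2 = O\big((\log t)^{5/8}/t^{5/8}\big)$ on $\mathcal{G}_t$ with probability $\geq 1 - t^{-2}$.

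Finally, the quadratic upper bound $U(b^*) - U(B_t) \leq C_U (B_t - b^*)^2$ from Lemma \ref{lem:init_UCBid1} translates this into an instantaneous regret of the same order, and $\sum_{t=1}^T (\log t)^{5/8}\,t^{-5/8} = O(T^{3/8} \log T)$ completes the bound. The only real technical obstacle is the bookkeeping around the union bound: the schedule $\eta_t = t^{-2}$ is what keeps the logarithms in the Proposition harmless while making the total cost of failed refinements $O(1)$. Iterating Proposition \ref{prop:one_iteration} with the output of one round as the input $\delta_t$ of the next would drive the exponent from $3/8$ toward the fixed point $1/3$ of $\delta \mapsto (\delta/t)^{1/4}$, recovering the sharper $T^{1/3+\epsilon}$ rate of Theorem \ref{th:fast_rate}; but for the present statement a single iteration already suffices.
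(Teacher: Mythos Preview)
Your proposal is correct and follows essentially the same route as the paper: feed the crude bound $\delta_t^{(0)}=\sqrt{6\epsilon_t^{+}/c_U}$ from Lemma~\ref{lem:init_UCBid1} into Proposition~\ref{prop:one_iteration} once, read off $(B_t-b^*)^2=O(t^{-5/8}\log t)$ on the good event, convert to instantaneous regret via the quadratic upper bound, and sum. The only cosmetic differences are your choice $\eta_t=t^{-2}$ (the paper takes $\eta=1/t$, so the failure contribution is $O(\log T)$ rather than $O(1)$, which is immaterial) and your slightly sharper bookkeeping of the logarithmic power $(\log t)^{5/8}$ versus the paper's $\log t$; neither affects the final $O(T^{3/8}\log T)$ bound, and your closing remark about iterating toward the fixed point $1/3$ is exactly how the paper proceeds to Theorem~\ref{th:fast_rate}.
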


\begin{proof}

From Lemma \ref{lem:init_UCBid1}, we have that $|b^* - B_t| \leq 1/\sqrt{c_U} \sqrt{ 6 \epsilon_t^+}$, on $\mathcal{E}\cap \{N_t\geq \frac{1}{4 \alpha}F(b^*)t\}\}$. Therefore, we can apply Proposition \ref{prop:one_iteration} with $\delta_t = \frac{1}{\sqrt{c_U}} \sqrt{ 6\epsilon_t^+}$ with $M= \frac{\sqrt{c_U}}{\sqrt{6}}$, and $\eta= \frac{1}{t}$ .

We use the general fact that  $\log(At^{\alpha})\leq 2 \alpha \log t$ as soon as $t^{\alpha}>A$, for all $A,a>0$, to derive the following two inequalities :

$\forall t\geq \left(\frac{M e^2}{2c_f}\right)^{\frac{1}{4}}$, $$ \frac{6}{c_U} \sqrt{\frac{ C_f \delta_t \log
\left( \frac{M e^2 t\sqrt{t}}{2 c_f \eta^2}\right)
}{t}} \leq   \frac{6 \sqrt{8} \sqrt{C_f}}{c_U^{\frac{5}{4}}} \sqrt{\frac{\delta_t\log t}{t}}= \frac{24  (72\alpha \gamma)^{\frac{1}{8}} \sqrt{C_f}}{c_U^{\frac{5}{4}}  {F(b^*)}^{\frac{1}{8}}}  \sqrt{\frac{\log ^2 t }{t^{\frac{5}{4}}}}.$$

$\forall t\geq (\frac{M}{2 c_f})^{\frac{1}{4}}$, $$  \frac{2\log(\frac{Mt^2 t\sqrt{t}}{2c_f })}{ c_U t}\leq \frac{16}{c_U} \frac{\log t}{t} .$$

We also have, for all t, 
\begin{align*}
 \frac{2}{c_U} (2 C_f +1) \delta_t \sqrt{\frac{ 2 \gamma \alpha \log T}{F(b^*)t}} &\leq \frac{2}{c_U}(2 C_f + 1)\sqrt{\frac{2\gamma \alpha }{F(b^*)}} \delta_t \sqrt{\frac{\log t}{t}}\\
 & = \frac{2(72\alpha \gamma)^{\frac{1}{4}} }{c_U^{\frac{3}{2}} {F(b^*)}^{\frac{1}{4}}}(2 C_f + 1)\sqrt{\frac{2\gamma \alpha }{F(b^*)}}  \frac{(\log t)^{\frac{1}{4}}}{t^\frac{1}{4}} \sqrt{\frac{\log t}{t}} 
\end{align*}
Therefore $|B_t - b^*|^2 \leq   \left( \frac{24  (72\alpha \gamma)^{\frac{1}{8}} \sqrt{C_f}}{c_U^{\frac{5}{4}}  {F(b^*)^{\frac{1}{8}}}}+ \frac{16}{c_U} +\frac{2(72\alpha \gamma)^{\frac{1}{4}} }{c_U^{\frac{3}{2}} {F(b^*)}^{\frac{1}{4}}}(2 C_f + 1)\sqrt{\frac{2\gamma \alpha}{F(b^*)^{\frac{1}{8}}}} \right)
 \frac{\log  t }{t^{\frac{5}{8}}} $ 
with probability $1- \frac{1}{t}$,
 for $t \geq \max(\left(\frac{M e^2}{2c_f}\right)^{\frac{1}{4}}, (\frac{M}{2c_f})^{\frac{1}{4}}):= t_2$ 
 on $\mathcal{E} \cap \{N_t\geq \frac{1}{4 \alpha}F(b^*)t\}$.
 On this event, $U(b^*) - U(B_t) \leq C_U (b^* - B_t)^2$

 We use the following decomposition
\begin{align*}
    R_T &\leq T \times \Po(\mathcal{E}^c) +\sum_{t=1}^T \E[S_t \1\{\mathcal{E}\}]\\
    &\leq T \times \Po(\mathcal{E}^c) + \max(t_0,t_1, t_2) + \sum_{t=\max(t_0,t_1, t_2)}^T \E[S_t \1\{\mathcal{E}\}]\\
    &\leq T \times \Po(\mathcal{E}^c) + \max(t_0,t_1, t_2)  + \sum_{t=\max(t_0,t_1,t_2 )}^T \Po(\mathcal{E} \cap \{N_t< \frac{1}{4 \alpha}F(b^*)t\}) \\
    &~~~ +\sum_{t=\max(t_0,t_1, t_2 )}^T \E[S_t \1\{\mathcal{E}\cap \{N_t\geq \frac{1}{4 \alpha}F(b^*)t\}\}]\\
   &\leq  T \times \Po(\mathcal{E}^c) + \max(t_0,t_1,t_2 )  + \sum_{t=\max(t_0,t_1, t_2 )}^T \Po(\mathcal{E} \cap \{N_t< \frac{1}{4 \alpha}F(b^*)t\}) \\
    &~~~ + \sum_{t=\max(t_0,t_1, t_2 )}^T C_U \E[(b^* - B_t)^2]\\
    &\leq  T \times \Po(\mathcal{E}^c) + \max(t_0,t_1, t_2 )  + \sum_{t=\max(t_0,t_1,t_2 )}^T \Po(\mathcal{E} \cap \{N_t< \frac{1}{4 \alpha}F(b^*)t\}) \\
    &~~~ + \sum_{t=\max(t_0,t_1, t_2 )}^T C_0 \frac{\log t }{t^{\frac{5}{8}}} + \sum_{t=\max(t_0,t_1, t_2 )}^T  \frac{1}{t}
    \\
    &\leq  T \times \Po(\mathcal{E}^c) + \max(t_0,t_1, t_2 )  + \sum_{t=\max(t_0,t_1, t_2)}^T \Po(\mathcal{E} \cap \{N_t< \frac{1}{4 \alpha}F(b^*)t\}) \\
    &~~~ +C_0 \frac{8}{3} T^{\frac{3}{8}} \log T + \log T \\
    &\leq  \log T+ 4 e(\gamma -1) \log T  (T)^{2-\gamma} +  \max(t_0,t_1, t_2 )+  \frac{8 \alpha}{ F(b^*)} +  \frac{8}{3}C_0 T^{\frac{3}{8}} \log T.
\end{align*}
 where
$\begin{cases}t_0 = \min \left(3,1+ 8\frac{  \gamma(\alpha+1)^2}{\alpha (F(b^*))^2} \log\left( 4\frac{  \gamma(\alpha+1)^2}{\alpha (F(b^*))^2}\right)\right)\\
 t_1 = 2 \sqrt{C_u} \Delta^{1/4}  \frac{\gamma \alpha}{F(b^*)} \log T,\\
 t_2 = \max(\left(\frac{\sqrt{c_U} e^2}{2c_f\sqrt{6}}\right)^{\frac{1}{4}}, (\frac{\sqrt{c_U}}{2c_f\sqrt{6}})^{\frac{1}{4}}) = (\frac{\sqrt{c_U} e^2}{2c_f\sqrt{6}})^{\frac{1}{4}},\\
C_0 =\left( \frac{24  (72\alpha \gamma)^{\frac{1}{8}} \sqrt{C_f}}{c_U^{\frac{5}{4}}  {F(b^*)^{\frac{1}{8}}}}+ \frac{16}{c_U} +\frac{2(72\alpha \gamma)^{\frac{1}{4}} }{c_U^{\frac{3}{2}} {F(b^*)}^{\frac{1}{4}}}(2 C_f + 1)\sqrt{\frac{2\gamma \alpha }{F(b^*)}} \right)C_U.
 \end{cases}$

Therefore $$R_T \leq \frac{8}{3}C_0 T^{\frac{3}{8}}\log T + o(T^{\frac{3}{8}} \log T).$$

\end{proof}

\subsection{Proof of Theorem \ref{th:fast_rate}}
Theorem \ref{th:3_8} is proved by applying Proposition \ref{prop:one_iteration} once.  By iterating the argument, we can actually achieve a regret of the order of $T^a$, for any $a>\frac{1}{3}$.
The proof involves an induction argument. The following lemma is the main element of the proof of the induction.

\begin{lemma}\label{lem:one_iteration} Assume that $t$ and $F$ satisfy the assumptions of Proposition \ref{prop:one_iteration}.
Assume that $|B_t - b^*|$ is bounded by $\delta_t^{(k)}$ such that $\delta_t^{(k)}= \min(1,C^{(k)}\log(t) t^{- u_k})$ with probability $1-\eta^{(k)}$,  and $u_k <2/3$, $C^{(k)} \geq 1$
.
Then $|B_t - b^*|$ is bounded by $\delta_t^{(k+1)}$ such that $\delta_t^{(k+1)}=\min(1, C^{(k+1)}\log(t) t^{-\frac{1}{4}(1+u_{k})})$ with probability $1-\eta^{(k)}- \frac{1}{Kt}$,\\where $C^{(k+1)} = C \left(C^{(k)}\right)^{\frac{1}{4}}$  and where $C=\max\left(1, \frac{12 \sqrt {2 C_f}}{c_U}+ \frac{16}{c_U}+\frac{2}{c_U}(2 C_f + 1)\sqrt{\frac{2\gamma\alpha}{F(b^*}}\right).$
\end{lemma}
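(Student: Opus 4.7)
The plan is to invoke Proposition~\ref{prop:one_iteration} with $\delta_t := \delta_t^{(k)} = \min(1, C^{(k)} \log(t)\, t^{-u_k})$ and confidence parameter $\eta := 1/(Kt)$, then substitute this explicit form of $\delta_t^{(k)}$ into the three-term bound and simplify to read off the improved exponent $(1+u_k)/4$.

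Before applying the proposition, I would verify its hypotheses on the event of probability at least $1-\eta^{(k)}$ given by the induction hypothesis. The inequality $|B_t - b^*|\leq \delta_t^{(k)}$ is supplied directly by the induction hypothesis. The condition $\delta_t^{(k)}<\Delta$ holds for $t$ sufficiently large because $u_k>0$ forces $\delta_t^{(k)}\to 0$. The condition $\epsilon_t^+\leq M\delta_t^{(k)}$, recalling that $\epsilon_t^+\asymp \sqrt{\log t/t}$, is equivalent to $t^{1/2-u_k}\gtrsim 1/(M C^{(k)}\sqrt{\log t})$, which holds for $t$ large provided $u_k<1/2$. This is ensured in the intended use of the lemma: the iteration $u_{k+1}=(1+u_k)/4$ has fixed point $1/3$ and is started at $u_0=1/4$ in Theorem~\ref{th:3_8}, so every iterate stays below $1/3$, well inside the safety margin left by the stated hypothesis $u_k<2/3$.

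Invoking Proposition~\ref{prop:one_iteration} then yields, with probability at least $1-1/(Kt)$, a three-term upper bound on $|B_t-b^*|^2$. Because $\eta = 1/(Kt)$, the logarithms inside each term are of order $\log t$. Substituting $\delta_t^{(k)} = C^{(k)}\log(t)\, t^{-u_k}$ gives a first (and dominant) term of order $\sqrt{C^{(k)}}\log(t)\, t^{-(1+u_k)/2}$ produced by the $\sqrt{C_f \delta_t^{(k)} \log t / t}$ factor; a second term of order $\log t / t$, dominated by the first because $(1+u_k)/2<1$; and a third term of order $C^{(k)}(\log t)^{3/2} t^{-(1/2+u_k)}$, also dominated by the first because $u_k>0$ gives $1/2+u_k>(1+u_k)/2$. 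Summing and taking the square root, while using $\sqrt{\log t}\leq \log t$ for $t\geq e$, produces the claimed bound $|B_t-b^*|\leq C(C^{(k)})^{1/4}\log(t)\, t^{-(1+u_k)/4}$ with $C$ as stated, namely a sum of the coefficients coming from the three original terms. The total failure probability $\eta^{(k)}+1/(Kt)$ follows from a union bound.

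The main obstacle is bookkeeping: making sure the single constant $C$ is simultaneously large enough to absorb the coefficients of all three terms of Proposition~\ref{prop:one_iteration}, the $O(\log t)$ estimates of the internal logarithms (which involve $t\sqrt{2t}$ and $1/\eta^2 = K^2 t^2$), and the slack from $\sqrt{\log t}\leq \log t$. Once these constants are carefully tracked, the multiplicative update $C^{(k+1)}=C\,(C^{(k)})^{1/4}$ emerges directly from the only $C^{(k)}$-dependence of the dominant term, namely the factor $\sqrt{C^{(k)}}$ obtained from $\sqrt{\delta_t^{(k)}}$ inside the first bound, which becomes $(C^{(k)})^{1/4}$ upon taking the final square root.
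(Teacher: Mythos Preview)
Your plan mirrors the paper's proof: apply Proposition~\ref{prop:one_iteration} with $\eta=1/(Kt)$, compare the three resulting terms, and extract the exponent $(1+u_k)/4$ together with the constant recursion. One step is not quite right, though. After substituting $\delta_t^{(k)}=C^{(k)}\log(t)\,t^{-u_k}$ you claim the third term, of order $C^{(k)}(\log t)^{3/2}t^{-(1/2+u_k)}$, is dominated by the first, of order $\sqrt{C^{(k)}}\log(t)\,t^{-(1+u_k)/2}$, ``because $u_k>0$ gives $1/2+u_k>(1+u_k)/2$''. Comparing only the $t$-exponents is not enough here: the third term carries a full factor $C^{(k)}$ while the first carries only $\sqrt{C^{(k)}}$, so for the stated hypothesis $C^{(k)}\geq 1$ (with no upper bound) the comparison fails, and you would end up with $\sqrt{C^{(k)}}$ rather than $(C^{(k)})^{1/4}$ after the final square root, breaking the recursion $C^{(k+1)}=C(C^{(k)})^{1/4}$.

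The paper avoids this by postponing the substitution. It keeps all three terms expressed through $\delta_t^{(k)}$ itself, namely $\beta_{1,t}=\sqrt{\delta_t^{(k)}\log t/t}$, $\beta_{2,t}=\log t/t$, $\beta_{3,t}=\delta_t^{(k)}\sqrt{\log t/t}$, and observes $\beta_{3,t}\leq\beta_{1,t}$ directly from $\delta_t^{(k)}\leq 1$ (the $\min$ in its definition) and $\beta_{2,t}\leq\beta_{1,t}$ from $\delta_t^{(k)}\geq \log t/t$ (which follows from $C^{(k)}\geq 1$ and $u_k<1$). Only after collapsing everything to $(C_1+C_2+C_3)\beta_{1,t}$ does it invoke $\delta_t^{(k)}\leq C^{(k)}\log(t)\,t^{-u_k}$, which then produces exactly $(C^{(k)})^{1/4}$ upon taking the square root. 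With this correction, your sketch is the paper's argument.
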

\begin{proof}

We use Proposition \ref{prop:one_iteration}, and the fact that $\epsilon_t^+ \leq \sqrt{2 \alpha  \gamma / F(b^*) }\frac{\log t}{t^{-u_k}}  \leq \sqrt{2 \alpha  \gamma / F(b^*) }\delta^{(k)}_t$ to prove that 
$$|B_t - b^*|^2\leq \frac{6}{c_U} \sqrt{\frac{ C_f \delta_t^{(k)} \log
\left( \frac{M e^2 t\sqrt{2t} K^2 t^2}{2 c_f }\right)
}{t}}
 + \frac{2\log(\frac{MK^2 t^2 t\sqrt{2t}}{2c_f })}{ c_U t} + \frac{2}{c_U} (2 C_f +1) \delta_t^{(k)} \sqrt{\frac{2\alpha \gamma\log t}{F(b^*)t}} ,$$ with probability $(1-\eta^{(k)})(1- \frac{1}{Kt})$ and with $M = \sqrt{2 \alpha  \gamma / F(b^*) }$\\

We use the general fact that  $\log(At^{\alpha})\leq 2 \alpha \log t$ as soon as $t^{\alpha}>A$, for all $A,a>0$, to derive the following two inequalities :

$\forall t\geq \left(\frac{M e^2 K^2}{2c_f}\right)^{\frac{1}{4}}$, $$ \frac{6}{c_U} \sqrt{\frac{ C_f \delta_t^{(k)} \log
\left( \frac{M e^2 t\sqrt{2t} K^2 t^2}{2 c_f }\right)
}{t}} \leq   \frac{6 \sqrt{8 C_f}}{c_U} \sqrt{\frac{\delta_t^{(k)}\log t}{t}}:= C_1\sqrt{\frac{\delta_t^{(k)}\log t}{t}}:= C_1 \beta_{1,t}.$$

$\forall t\geq (\frac{MK^2}{2c_f})^{\frac{1}{4}}$, $$  \frac{2\log(\frac{MK^2 t^2 t\sqrt{2t}}{2c_f })}{ c_U t}\leq \frac{16}{ c_U} \frac{\log t}{t} := C_2 \frac{\log t}{t} := C_2  \beta_{2,t} .$$
We also have, for all t, $$ \frac{2}{c_U} (2 C_f +1) \delta_t^{(k)} \sqrt{\frac{2\alpha \gamma\log T}{F(b^*)t}} \leq \frac{2}{c_U}(2 C_f + 1)\sqrt{\frac{2\gamma\alpha}{F(b^*}} \delta_t^{(k)}\sqrt{\frac{\log t}{t}} := C_3 \delta_t^{(k)}\sqrt{\frac{\log t}{t}}:= C_3 \beta_{3,t} $$
We can derive the following bounds
\begin{itemize}
\item $\beta_{3,t} \leq \beta_{1,t}$ since $\delta_t^{(k)}\leq 1$.
\item $\beta_{2,t} \leq \beta_{1,t}$ since $\delta_t^{(k)}= \min(1,C^{(k)}\log(t) t^{- u_k})\geq \frac{\log t}{t}$.
\end{itemize}
Hence 

$$|B_t - b^*|^2\leq (C_1+ C_2+C_3) \beta_{1,t}= (C_1+ C_2+C_3)  \sqrt{\frac{\delta_t^{(k)}\log t}{t}},$$ with probability $1-\eta^{(k)} \frac{1}{Kt}.$
This yields \begin{align*}|B_t - b^*|&\leq \sqrt{(C_1+ C_2+C_3)}  \left(\frac{\delta_t^{(k)}\log t}{t}\right)^{\frac{1}{4}}\\
&\leq \sqrt{(C_1+ C_2+C_3)}  \left(\frac{\min(1,C^{(k)}\log^2(t) t^{- u_k})}{t}\right)^{\frac{1}{4}}\\
&\leq  \sqrt{(C_1+ C_2+C_3)} (C^{(k)})^{1/4} t^{-\frac{1}{4}(1+u_k)} \log t\\
&\leq C \left(C^{(k)}\right)^{1/4} t^{-\frac{1}{4}(1+u_k)}  \log t,
\end{align*}

\end{proof}

\begin{proposition}\label{prop:mult_iteration} Assume that $t$ and $F$ satisfy the assumptions of Proposition \ref{prop:one_iteration}.
If $t>t_3= max \left((\frac {\sqrt{2 \alpha  \gamma / F(b^*) }K^2}{2c_f})^{\frac{1}{4}} ,(\frac{\sqrt{2 \alpha  \gamma / F(b^*) } e^2 K^2}{2c_f})^{\frac{1}{4}} \right)$, then on $\mathcal{E} \cap \{N_t\geq \frac{1}{4 \alpha}F(b^*)t\}$,
$$|B_t - b^*|\leq   C^{(0)} C^{\frac{1}{3}} \log(t) t^{-\frac{1}{3}+ \frac{1}{3 \times4^K} + \frac{1}{4^{K+1}}},$$
with probability $1-\frac{1}{t}$ where $C=\max\left(1, \frac{12 \sqrt{2 C_f}}{c_U}+ \frac{16
}{c_U}+\frac{2}{c_U}(2 C_f + 1)\sqrt{\frac{2 \gamma \alpha}{F(b^*)}}\right)$, and $C^{(0)} =\max \left(1,\sqrt{\frac{1}{c_U}}\left(\frac{72\gamma \alpha }{F(b^*) }\right)^{\frac{1}{4}} \right)$
\end{proposition}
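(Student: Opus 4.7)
My plan is to prove Proposition \ref{prop:mult_iteration} by a $K$-fold iteration of Lemma \ref{lem:one_iteration}, starting from the crude polynomial rate extracted from Lemma \ref{lem:init_UCBid1}. The proof is an induction on $k \in \{0, 1, \ldots, K\}$ whose inductive hypothesis is that, on $\mathcal{E} \cap \{N_t \geq \frac{1}{4\alpha} F(b^*) t\}$, the bid satisfies $|B_t - b^*| \leq \min(1, C^{(k)} \log(t) t^{-u_k})$ with probability at least $1 - k/(Kt)$.

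For the base case, Lemma \ref{lem:init_UCBid1} supplies $|B_t - b^*| \leq (1/\sqrt{c_U}) \sqrt{6 \epsilon_t^+}$ on the good event, and since $\epsilon_t^+ = \sqrt{2 \alpha \gamma \log t / (F(b^*) t)}$, this unfolds into $|B_t - b^*| \leq C^{(0)} (\log t)^{1/4} t^{-1/4} \leq C^{(0)} \log(t) t^{-1/4}$, giving $u_0 = 1/4$ and precisely the stated $C^{(0)}$. For the inductive step, Lemma \ref{lem:one_iteration} directly yields $(u_{k+1}, C^{(k+1)}) = ((1 + u_k)/4,\, C \cdot (C^{(k)})^{1/4})$ and adds at most $1/(Kt)$ to the failure probability. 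I must check its hypotheses at each step: (i) $u_k < 2/3$ holds because $u \mapsto (1+u)/4$ is a contraction toward the fixed point $1/3$ and $u_0 = 1/4 < 1/3$, so $u_k < 1/3$ throughout; (ii) $C^{(k)} \geq 1$ follows from $C^{(0)} \geq 1$ and $C \geq 1$ by construction; (iii) the compatibility conditions $\delta_t^{(k)} < \Delta$ and $\epsilon_t^+ \leq M \delta_t^{(k)}$ of Proposition \ref{prop:one_iteration} hold uniformly for $k \leq K$ once $t > t_3$, which is precisely the role of the stated threshold.

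Solving the two recursions in closed form then finishes the argument. Writing $w_k := u_k - 1/3$ turns $u_{k+1} = (1 + u_k)/4$ into $w_{k+1} = w_k/4$, whence $u_K = 1/3 - 1/(12 \cdot 4^K)$; absorbing the iterated logarithmic corrections $(\log t)^{1/4^j}$ accumulated at each step into a slightly inflated power of $t$ produces the additional exponents $1/(3 \cdot 4^K)$ and $1/4^{K+1}$ appearing in the statement. Taking logarithms of $C^{(k+1)} = C \cdot (C^{(k)})^{1/4}$ gives $\log C^{(K)} = 4^{-K} \log C^{(0)} + (4/3)(1 - 4^{-K}) \log C \leq \log C^{(0)} + (4/3) \log C$, and combining the two occurrences of $C$ yields the multiplicative constant $C^{(0)} C^{1/3}$ claimed in the proposition (up to a harmless redefinition of $C$). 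Finally, summing the $K$ per-step failure contributions gives total failure probability $K \cdot 1/(Kt) = 1/t$, matching the announced confidence.

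The main obstacle will be the bookkeeping in check (iii) above: one must ensure that at every one of the $K$ iterations the new $\delta_t^{(k)}$ still lies in the regime where Lemma \ref{lem:one_iteration} (and ultimately Proposition \ref{prop:one_iteration} and Lemma \ref{lem:sub_quadratic}) can be invoked. Since $\delta_t^{(k)}$ shrinks along the iteration while $\epsilon_t^+$ remains essentially $t^{-1/2}$ and $u_k$ stays strictly below $1/2$, the compatibility $\epsilon_t^+ \leq M \delta_t^{(k)}$ is preserved; matching the exact form of $t_3$ in the statement amounts to tracing this inequality at the worst case $k = K$ and pushing the resulting log factors through.
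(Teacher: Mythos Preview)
Your approach is essentially the paper's: initialize with $u_0=1/4$ and $C^{(0)}$ from Lemma~\ref{lem:init_UCBid1}, iterate Lemma~\ref{lem:one_iteration} $K$ times, and solve the linear recursions $u_{k+1}=(1+u_k)/4$ and $C^{(k+1)}=C\,(C^{(k)})^{1/4}$, accumulating failure probability $K\cdot\frac{1}{Kt}=\frac{1}{t}$.

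One point to correct: there are no ``iterated logarithmic corrections $(\log t)^{1/4^j}$'' to absorb. Lemma~\ref{lem:one_iteration} already packages everything into a single $\log t$ factor at each step, so after $K$ iterations the bound is exactly $C^{(K)}\log(t)\,t^{-u_K}$ with $u_K=\tfrac13-\tfrac{1}{12\cdot 4^K}$. The exponent $-\tfrac13+\tfrac{1}{3\cdot 4^K}+\tfrac{1}{4^{K+1}}$ in the statement is simply a (slightly loose) rewriting of $-u_K$ coming from the paper's formula $u_K=(1/4)^{K}u_0+\sum_{i=1}^K(1/4)^i$, not from any logarithmic residue. Likewise, your computation $\log C^{(K)}\le \log C^{(0)}+\tfrac43\log C$ is the right one; the constant $C^{(0)}C^{1/3}$ in the statement should be read up to this harmless slack, as you note.
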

\begin{proof}
The proposition follows from using an induction argument based on Lemma \ref{lem:one_iteration}. We  can initiate an induction argument  with  $\delta_t^{(0)}$ such that $$\delta_t^{(0)}= \min(1,C^{(0)}\log(t) t^{- u_k}),$$ writing $u_0= \frac{1}{4}$ and $C^{(0)} = \max(1,\sqrt{\frac{1}{c_U}}\left(\frac{72 \alpha \gamma}{ F(b^*)}\right)^{1/4} )$, thanks to Lemma \ref{lem:init_UCBid1}. The fact that $u_k$ and $C^{(k)}$ as defined as in Lemma \ref{lem:one_iteration} satisfy
$u_{k+1} = \frac{1}{4}(1+u_k)$ which yields $$u_K = \left(\frac{1}{4}\right)^K u_0 + \sum_{i=1}^K \frac{1}{4^i} =  \left(\frac{1}{4}\right)^K u_0 + 4\frac{1/4- (1/4)^{K+1}}{3}$$
and $C^{(k+1)} = C \times (C^{(k)})^{\frac{1}{4}}$ which yields $$C^{(K)}= \left(C^{(0)}\right)^{\frac{1}{4^K}} C^{\sum_{i=1}^K \frac{1}{4^i}} \leq C^{\frac{1}{3}}, $$ suffices to complete the induction.

\end{proof}

We recall Theorem \ref{th:fast_rate}.
\begin{repeatthm}{th:fast_rate}
Under Assumptions \ref{ass:pseudo-mhr} and \ref{ass:lambda_pseudo-mhr}, $$R_T \leq O(T^{1/3 + \epsilon} ),$$
for any $\epsilon>0$
as long as $\gamma >2$.
\end{repeatthm}

We choose $K$ such that $\frac{1}{3}+ \frac{2}{3 \times 4^K} + \frac{2}{4^{K+1}} < \frac{1}{3} + \epsilon $. (We can choose $K = \big \lceil \log_4 \left(\frac{3}{14} \frac{1}{\epsilon}\right) \big\rceil) +1$ for example).
Then, thanks to proposition \ref{prop:mult_iteration}, 
for all $t> t_3$,
on $\mathcal{E} \cap \{N_t\geq \frac{1}{4 \alpha}F(b^*)t\}$, $$|B_t - b^*|\leq   C^{(0)} C^{\frac{1}{3}} \log(t) t^{-\frac{1}{3}+ \frac{1}{3 \times4^K} + \frac{1}{4^{K+1}}},$$ with probability $1-\frac{1}{t}$.
We can therefore do the same decomposition as in the proof of Theorem \ref{th:3_8}. 

\begin{align*}
    R_T 
    &\leq T \times \Po(\mathcal{E}^c) + \max(t_0,t_1, t_3)  + \sum_{t=\max(t_0,t_1,t_3 )}^T \Po(\mathcal{E} \cap \{N_t< \frac{1}{4 \alpha}F(b^*)t\}) \\
    &~~~ +\sum_{t=\max(t_0,t_1, t_3 )}^T \E[S_t \1\{\mathcal{E}\cap \{N_t\geq \frac{1}{4 \alpha}F(b^*)t\}\}]\\
   &\leq  T \times \Po(\mathcal{E}^c) + \max(t_0,t_1,t_3 )  + \sum_{t=\max(t_0,t_1, t_3 )}^T \Po(\mathcal{E} \cap \{N_t< \frac{1}{4 \alpha}F(b^*)t\}) \\
    &~~~ + \sum_{t=\max(t_0,t_1, t_3 )}^T C_U \E[(b^* - B_t)^2]\\
    &\leq  T \times \Po(\mathcal{E}^c) + \max(t_0,t_1, t_3)  + \sum_{t=\max(t_0,t_1,t_3 )}^T \Po(\mathcal{E} \cap \{N_t< \frac{1}{4 \alpha}F(b^*)t\}) \\
    &~~~ + \sum_{t=\max(t_0,t_1, t_3 )}^T  C^{(0)} C_U C^{\frac{1}{3}}(\log t) t^{-\frac{2}{3}+ \frac{2}{3 \times 4^K} + \frac{2}{4^{K+1}}} \\
    & + \sum_{t=\max(t_0,t_1, t_3 )}^T  \frac{1}{t}
    \\
    &\leq  T \times \Po(\mathcal{E}^c) + \max(t_0,t_1, t_3 )  + \sum_{t=\max(t_0,t_1, t_3)}^T \Po(\mathcal{E} \cap \{N_t< \frac{1}{4 \alpha}F(b^*)t\}) \\
    &~~~ + C^{(0)}C_U C^{\frac{1}{3}} \frac{1}{\frac{1}{3}+ \frac{2}{3 \times4^K} + \frac{2}{4^{K+1}}} T^{\frac{1}{3}+ \frac{2}{3 \times 4^K} + \frac{2}{4^{K+1}}} \log T + \log T \\
    &\leq  \log T + 4 e(\gamma -1) \log T  (T)^{2-\gamma} +  \max(t_0,t_1, t_3 )\\
    &~~~+  \frac{8 \alpha}{ F(b^*)} +   3 C^{(0)} C_U C^{\frac{1}{3}} T^{\frac{1}{3}+ \epsilon}.
\end{align*}
where
$\begin{cases}t_0 = \min \left(3,1+ 8\frac{  \gamma(\alpha+1)^2}{\alpha (F(b^*))^2} \log\left( 4\frac{  \gamma(\alpha+1)^2}{\alpha (F(b^*))^2}\right)\right)\\
 t_1 = 2 \sqrt{c_u} \Delta^{1/4}  \frac{\gamma \alpha}{F(b^*)} \log T,\\
 t_3= (\frac{\sqrt{2 \alpha  \gamma / F(b^*) }e^2 K^2}{2c_f})^{\frac{1}{4}}  ,\\
 C^{(0)}= \max \left(1,\sqrt{\frac{1}{c_U}}\left(\frac{72\gamma \alpha }{F(b^*) }\right)^{\frac{1}{4}} \right)\\
C=\max\left(1, \frac{12 \sqrt{ 2C_f}}{c_U}+ \frac{16}{c_U}+\frac{2}{c_U}(2 C_f + 1)\sqrt{\frac{2\gamma\alpha}{F(b^*)}}\right)
 \end{cases}$.
 
 Hence $$R_T  \leq O(T^{1/3 + \epsilon}).$$

\section{Further figures}\label{sec:app_expe}
We present in Figure \ref{fig:rw_histo} the histogram of the normalized data used to simulate the real-world experiment.
\begin{figure}[ht]
\centering
  \includegraphics[width=0.55\textwidth]{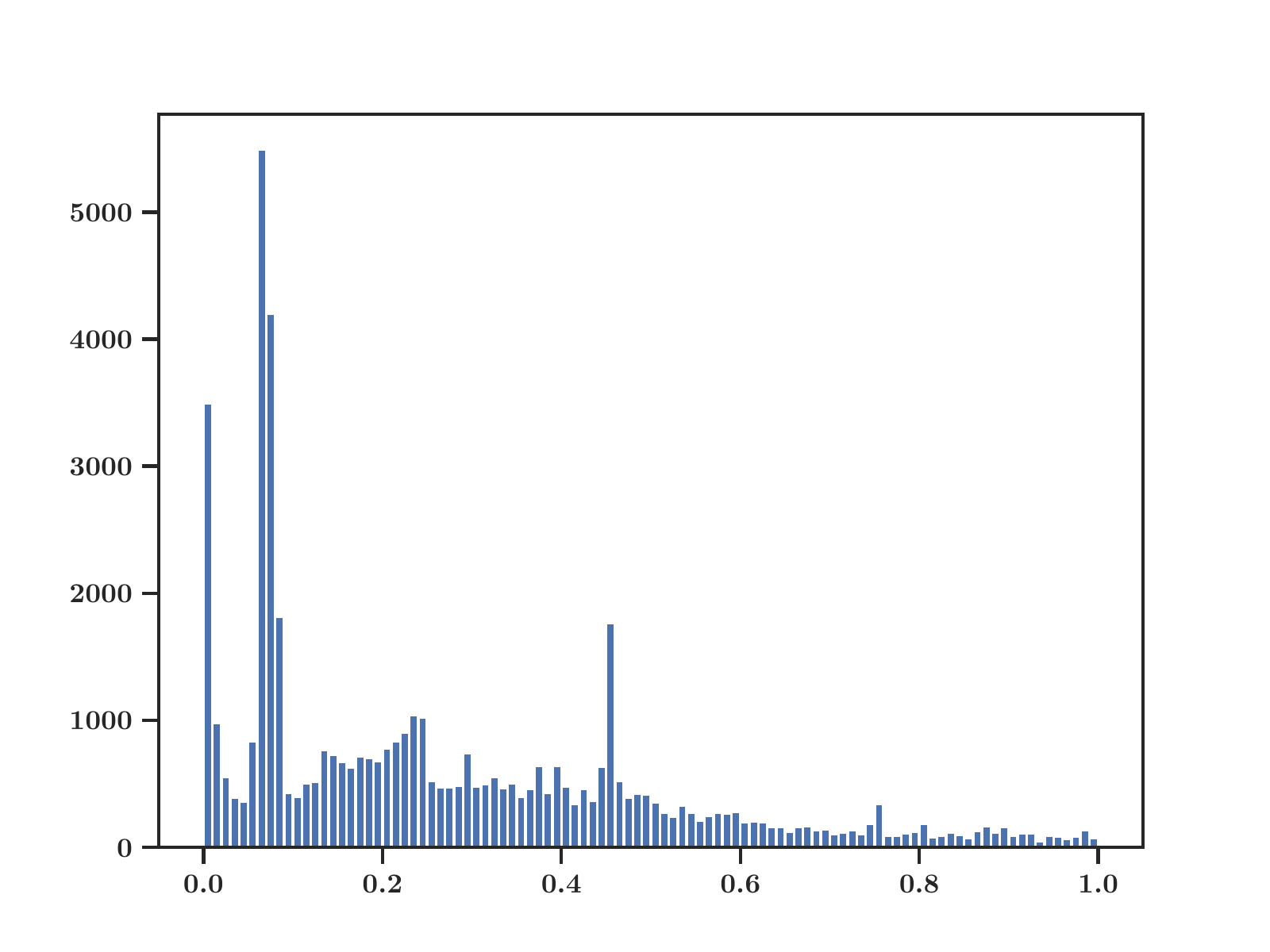}
  \caption{Bidding Data histogram}
  \label{fig:rw_histo}
\end{figure}

\end{document}